\documentclass{article}
\PassOptionsToPackage{varqu}{inconsolata}
\usepackage{colm2024_conference}

\usepackage{latexsym}
\usepackage{fontspec}
\usepackage[utf8]{inputenc}
\usepackage{microtype}
\usepackage{inconsolata}

\usepackage{tikz}
\usepackage{arydshln}
\usepackage{algpseudocode}
\algrenewcommand\textproc{\text}
\usepackage{makecell}
\usepackage{booktabs}
\usepackage{color}
\usepackage{colortbl}
\usepackage{multirow}
\usepackage{enumitem}
\usepackage{makecell}
\usepackage{threeparttable}
\usepackage{amsmath,amsfonts,mathtools}
\usepackage{pifont}
\usepackage{mathrsfs}
\usepackage{float}
\usepackage{graphicx}
\usepackage{xcolor}
\usepackage{enumitem}
\usepackage{tabularx, booktabs}

\usepackage{tcolorbox}
\tcbuselibrary{breakable}

\definecolor{mygreen}{HTML}{00B050}

\definecolor{myorange}{HTML}{ED7D31}
\definecolor{lightgray}{gray}{0.95}
\definecolor{rowgray}{gray}{0.97}
\definecolor{avgblue}{RGB}{210,230,250}  
\definecolor{headergray}{RGB}{160,160,160} 

\usepackage{array}
\usepackage{mathtools}
\usepackage{multirow}
\usepackage{subcaption}
\usepackage{tikz}
\renewcommand{\arraystretch}{1.1}
\definecolor{color1}{cmyk}{0.216,0.176,0,0}
\definecolor{color2}{cmyk}{0.059,0.235,0.392,0}

\usepackage{graphicx}
\usepackage{tikz}
\usepackage{wrapfig}
\usepackage{algorithm}
\usepackage{algpseudocode}
\algrenewcommand\textproc{\text}
\usepackage{makecell}
\usepackage{booktabs}
\usepackage{pifont}
\usepackage{multirow}
\usepackage{enumitem}
\usepackage{balance}
\usepackage{threeparttable}
\usepackage{amsmath,amsfonts,mathtools} 
\usepackage{longtable}
\usepackage{amsthm}
\newtheorem{theorem}{Theorem}
\usepackage{colortbl}
\usepackage{mathrsfs}
\usepackage{float}
\usepackage{graphicx}
\usepackage{hyperref}
\usepackage{tabularx, booktabs}
\usepackage{tikz}
\usepackage{arydshln}
\usepackage{CJKutf8}

\usepackage{bm}        
\usepackage{amsfonts}
\usepackage{cancel}

\definecolor{uc_color}{rgb}{0.99,0.24,0.63}
\definecolor{hc_color}{rgb}{0.02,0.51,0.51}
\definecolor{tc_color}{rgb}{0.99,0.55,0.09}
\definecolor{qwenpurple}{HTML}{5E45D9}

\renewenvironment{abstract}{%
  \vspace{0.8em}%
  \begin{tcolorbox}[breakable,
    colback=qwenpurple!4,
    colframe=qwenpurple!90!black,
    boxrule=0.8pt,
    arc=3mm,
    left=2.2mm,right=2.2mm,top=1.8mm,bottom=1.8mm,
    title=\centering\large\bfseries Abstract,
    coltitle=qwenpurple!90!black,
    colbacktitle=qwenpurple!10]
}{\end{tcolorbox}\vspace{0.8em}}

\tikzstyle{mybox} = [draw=black, very thick,
    rectangle, rounded corners, inner sep=10pt, inner ysep=13pt]
\tikzstyle{fancytitle} =[fill=black, text=white]

\usepackage{amsmath,amsfonts,bm}

\def\eqref#1{equation~\ref{#1}}

\def\1{\bm{1}}

\def\eps{{\epsilon}}

\def\rb{{\textnormal{b}}}

\def\rf{{\textnormal{f}}}

\def\rt{{\textnormal{t}}}

\DeclareMathAlphabet{\mathsfit}{\encodingdefault}{\sfdefault}{m}{sl}
\SetMathAlphabet{\mathsfit}{bold}{\encodingdefault}{\sfdefault}{bx}{n}

\newcommand{\Pmodel}{P_{\rm{model}}}

\newcommand{\KL}{D_{\mathrm{KL}}}
\newcommand{\Var}{\mathrm{Var}}

\usepackage{amsmath,amssymb,amsthm,bm}
\usepackage{mathtools}
\usepackage{booktabs}
\usepackage{multirow}
\usepackage{graphicx}
\usepackage{hyperref}
\usepackage{cleveref}
\usepackage{url}
\usepackage{xcolor}
\usepackage{tikz}
\usetikzlibrary{positioning,arrows.meta,calc,fit,backgrounds,patterns,decorations.pathreplacing,shapes.geometric,matrix}

\newtheorem{proposition}{Proposition}
\newtheorem{corollary}{Corollary}
\newtheorem{lemma}{Lemma}
\newtheorem{assumption}{Assumption}
\theoremstyle{definition}
\newtheorem{definition}{Definition}
\theoremstyle{remark}
\newtheorem{remark}{Remark}

\newcommand{\hist}{\mathbf{H}}                          
\newcommand{\histlog}{\mathbf{H}'}                      
\newcommand{\histlogt}{\histlog_t}             
\newcommand{\histt}{\hist_t}                   
\newcommand{\Hfull}{\hist_T}                   
\newcommand{\hcomp}{\hist^{\mathrm{comp}}}     
\newcommand{\View}{\textsc{View}}              
\newcommand{\editset}{\mathcal{E}}             

\newcommand{\junct}[1]{J_{#1}}                 
\newcommand{\evictset}[1]{E_{#1}}              
\newcommand{\tpath}{\mathbf{p_t}}            
\newcommand{\spath}{\mathbf{p_s}}            
\newcommand{\Ptarget}{P^{\mathrm{target}}}

\newcommand{\policy}{\pi_\theta}
\newcommand{\policyold}{\pi_{\theta_{\mathrm{old}}}}

\newcommand{\sg}{\mathrm{sg}}

\newcommand{\editor}{\mathrm{Edit}}

\let\Pmodel\undefined
\newcommand{\Pmodel}[1][\theta]{P_{#1}}

\let\KL\undefined
\newcommand{\KL}{D_{\mathrm{KL}}}

\DeclareMathOperator*{\Exp}{\mathbb{E}}
\let\Var\undefined
\DeclareMathOperator{\Var}{Var}

\newcommand{\Ltask}{\mathcal{L}_{\mathrm{task}}}
\newcommand{\Lsft}{\mathcal{L}_{\mathrm{SFT}}}

\newcommand{\Lstar}{\mathcal{L}^{\star}}

\newcommand{\Cset}{\mathcal{C}}     
\newcommand{\Treestruct}{\mathcal{T}}

\newcommand{\impratio}{\rho}

\newcommand{\epsKL}{\varepsilon_{\mathrm{KL}}}
\newcommand{\Order}{\mathcal{O}}

\newcommand{\edit}{\editor}                    
\newcommand{\logits}{\mathbf{Logits}}
\let\eps\undefined
\newcommand{\eps}{\varepsilon}

\newcommand{\cmark}{\ensuremath{\checkmark}}

\newcommand{\xmark}{\ensuremath{\times}}

\title{\raisebox{-0.55cm}{\includegraphics[width=1.65cm,height=1.85cm]{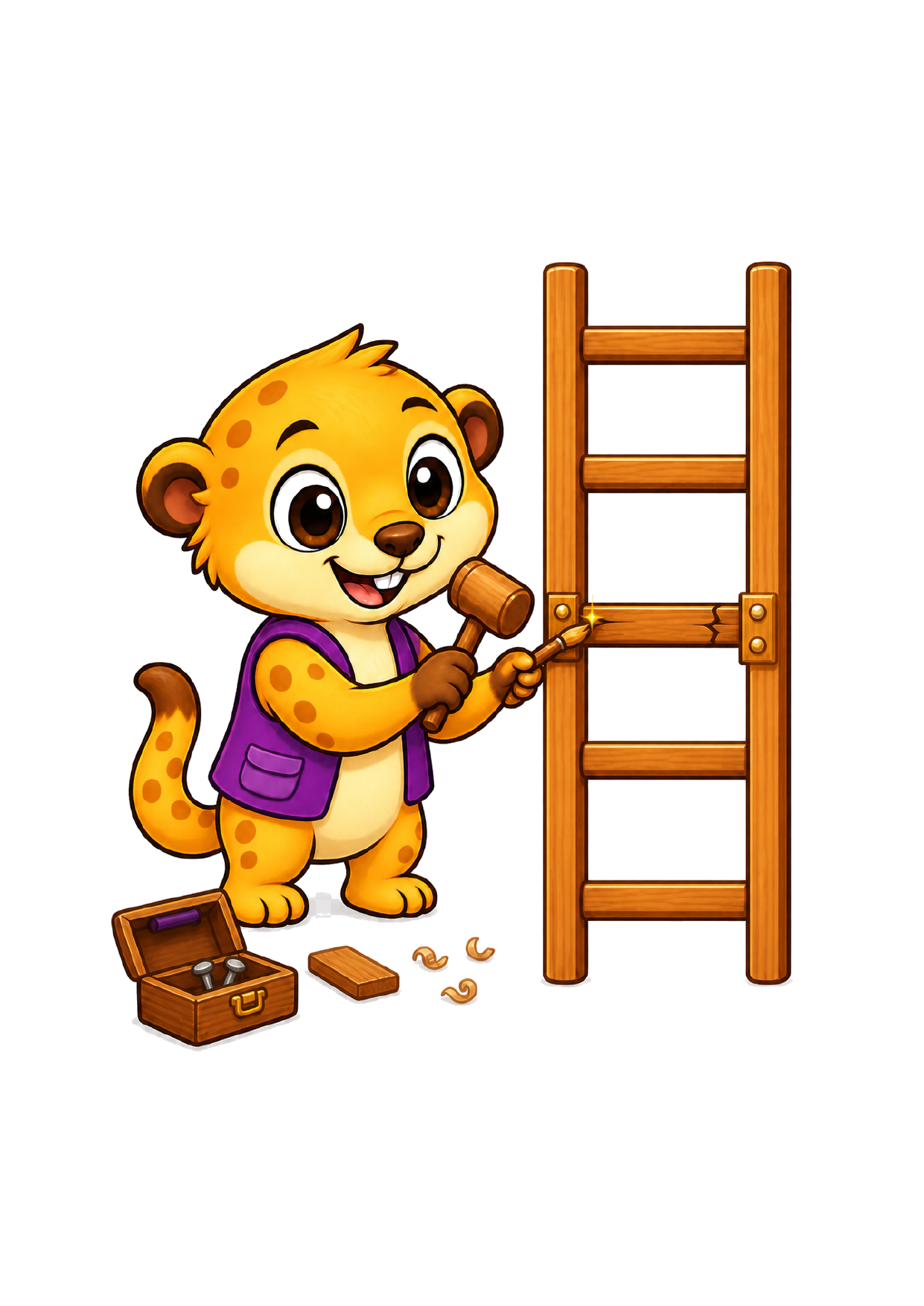}}\hspace{0.16em}%
{\fontsize{13.8}{16}\selectfont MemoryWalker: Stop Training Agents on Contexts They Never Saw}\\[0.3ex]
{\fontsize{10}{12}\selectfont \textcolor{gray}{\textit{``Your Memory-Compressing Harness Makes Training and Inference Inconsistent''}}}}

\author{\bfseries Zinco J, Xunjie Zhu, Shen Huang, Zhenyi Wang, Pengjun Xie, Jieping Ye\\
\includegraphics[height=0.4cm]{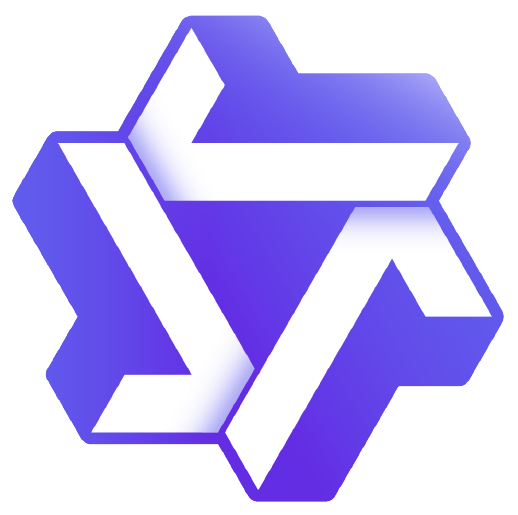} Token Foundry, Alibaba Group}

\begin{document}
\maketitle

\begin{abstract}
Production agent harnesses, including Claude Code and Qwen-Agent, compress an agent's context during rollout to make long-horizon interaction tractable. Training policies under such compression, however, introduces a fundamental conditioning problem: each compression eviction branches the effective interaction history, making the object presented to the learning objective a \textbf{tree} rather than a single sequence. Existing pipelines typically linearize this tree in one of two ways: retaining only the \emph{rightmost root-to-leaf path}, which causes \textbf{time-travel leakage}, or replaying the full \emph{depth-first traversal}, which induces a \textbf{train--inference mismatch}.
Thus, we introduce two exact conditioning-level corrections and prove their gradient equivalence: \textbf{LogitTree}, a segmented $K$-forward traversal of the logits tree, and an equivalent packed \textbf{4D attention mask}. LogitTree requires $K{+}1$ backward passes, whereas the 4D formulation requires both a custom masked-attention kernel and white-box access to eviction records. To avoid these costs, we further propose \textbf{SDCC} (\textbf{\underline{S}}elf-\textbf{\underline{D}}istillation for \textbf{\underline{C}}onditioning \textbf{\underline{C}}onsistency), a training-friendly variational relaxation requiring only a single backward pass. At each eviction junction, SDCC minimizes the forward KL divergence between the compressed student policy and a stop-gradient teacher evaluated on the reconstructed pre-eviction prefix. By Pinsker's inequality, a residual per-junction KL divergence of $\epsKL$ yields an $\Order(\sqrt{\epsKL})$ bound on the resulting train--deployment behavioral gap in total variation. Unlike the exact corrections, SDCC applies without modification to black-box harnesses.
We evaluate our framework with three white-box context editors---TC-RAG, AgentFold, and MemexRL---and two black-box harnesses---Claude Code and OpenCode. Across seven web-search benchmarks, naive compressed-stream training substantially inflates the train--rollout log-probability gap relative to the no-compression floor, with the largest inflation on eviction-heavy batches. In contrast, both exact corrections---LogitTree and the 4D attention mask---remain at the no-compression floor, while SDCC substantially closes the gap. These gains in conditioning consistency are accompanied by lower logit drift and higher rollout rewards than naive compressed-stream training.

\end{abstract}

\section{Introduction}
\label{sec:intro}
\textbf{Long-horizon agents}, including deep-search assistants~\citep{xi2025surveyllmbaseddeepsearch}, coding copilots~\citep{wang2025aiagenticprogrammingsurvey}, browser-based research agents~\citep{shi2025deepresearchsystematicsurvey}, and multi-turn planners~\citep{huang2024understandingplanningllmagents}---can accumulate trajectories spanning tens to hundreds of thousands of tokens. 
Retaining the full interaction history is not only computationally expensive, but can also impair decision making: \ding{182} outdated observations, abandoned plans, and failed attempts introduce substantial contextual noise~\citep{tcrag2024}; \ding{183} state rollback can invalidate the actions and observations associated with a reverted branch~\citep{tcrag2024}; and \ding{184} critical goals and instructions may be diluted or overlooked as the context grows, exacerbating the \emph{lost-in-the-middle} effect~\citep{liu2023lostmiddlelanguagemodels,chen2026cigmeasuringconversationalinformation}. 
Production agent harnesses therefore actively compress and rewrite interaction histories to maintain compact, decision-relevant contexts. For example, Claude Code~\citep{anthropic2024claudecode} and Qwen-Agent~\citep{QwenAgent} maintain bounded context windows and replace evicted prefixes with automatically generated summaries. Other approaches, including TC-RAG~\citep{tcrag2024}, MemexRL~\citep{MemexRL2025} and AgentFold~\citep{agentfold2025,mem1} introduce model-aware memory mechanisms that allow the agent to determine whether, when, what and how to compress, offload, or retrieve information on demand.

Across these context-compression designs, a \textbf{\emph{memory editor}} (hook) rewrites the interaction history into a bounded live context before each decoding step. Although this improves inference-time tractability and decision quality, it creates a subtle problem for post-training (SFT and RL): \textbf{\textit{when the trainer recomputes the log probabilities of rollout tokens, the context under which a token was originally generated may no longer be available}}. 

\textbf{Training on the conditioning tree, not the sequence.}
This missing-context problem arises because harness-level compression breaks the correspondence between token order and conditioning history. At each eviction junction, the original prefix is replaced by a compressed view: tokens preceding the edit were generated under the original prefix, whereas subsequent tokens are generated under its compressed replacement. Repeated edits therefore induce a \emph{tree of conditioning histories}, rather than a single sequence that can be replayed consistently, as illustrated in Figure~1(a).
Existing pipelines~\citep{agentfold2025,mem1,MemexRL2025,tcrag2024} typically serialize this tree in one of two ways. \ding{182} Training on the surviving compressed stream follows only the \emph{rightmost root-to-leaf path}, causing earlier tokens to be reevaluated as if they had been conditioned on summaries created only later---an error we call \emph{time-travel leakage}. \ding{183} Training on the full raw trace instead follows a \emph{depth-first traversal}, causing later tokens to be reevaluated with evicted content that is unavailable during rollout and deployment---a \emph{train--inference conditioning mismatch}. 
\textbf{In both cases, recomputing token logits under the resulting sequence assigns tokens contexts different from those under which they were originally generated, producing substantial train--inference logit inconsistency.}

\textbf{From the conditioning tree to exact solutions.}
To restore the context under which each rollout token was originally generated, we formulate training as a traversal of the conditioning tree and introduce two exact approaches to walk the tree, gradient-equivalent constructions. \ding{182} \textbf{LogitTree} performs a segmented traversal, separately evaluating the $K$ pre-eviction branches, \ding{183} whereas a packed \textbf{4D attention mask} encodes the same branch structure within a single forward pass. As shown in \S\ref{sec:hard}, both constructions yield the same policy gradients and exactly recover the correct rollout conditioning. This exactness, however, comes at a computational or systems cost: LogitTree requires (K{+}1) segmented model evaluations, whereas the packed formulation relies on a custom masked-attention kernel that is difficult to integrate efficiently with fused attention implementations and is restricted to white-box harnesses with access to eviction records.
To recover conditioning consistency without explicitly replaying every branch of the context tree, we further propose \textbf{SDCC} (\textbf{\underline{S}}elf-\textbf{\underline{D}}istillation for \textbf{\underline{C}}onditioning \textbf{\underline{C}}onsistency; \S\ref{sec:soft}). Instead of exactly reconstructing the conditioning context for every rollout token, SDCC trains the policy under the surviving compressed context to match its behavior under the original pre-eviction context. Specifically, at each eviction junction, it minimizes the forward KL divergence from the compressed student policy to a stop-gradient teacher evaluated under the reconstructed pre-eviction prefix. A wake--sleep argument determines the KL direction, and Pinsker's inequality converts a residual per-junction KL divergence of $\epsKL$ into an $\Order(\sqrt{\epsKL})$ bound on the corresponding behavioral discrepancy in total variation. This relaxation requires only a single backward pass per trajectory and extends to black-box harnesses without requiring white-box modification of their attention computation.

\textbf{Contributions.}
Our contributions are fourfold:
\begin{itemize}[leftmargin=*,itemsep=-0.45pt]
\item \textbf{We formalize context compression training as a conditioning-inconsistency problem} (\S\ref{sec:failure_modes}). We prove that the two common rollout compression---\emph{Naive-Compressed} and \emph{Naive-Full}---lead to time-travel leakage and train--inference conditioning mismatch, respectively.

\item \textbf{We propose two exact, gradient-equivalent solutions to walk the context tree} (\S\ref{sec:hard}). \textbf{LogitTree} explicitly traverses the conditioning tree, while a packed \textbf{4D attention mask} represents the same structure in a single forward pass.

\item \textbf{We introduce SDCC, a training-efficient relaxation} (\S\ref{sec:soft}). SDCC distills policy behavior under the original pre-eviction context into the surviving compressed context, requires only one backward pass, and generally applies to white-box and black-box harnesses.

\item \textbf{We validate the framework across compression mechanisms and agent tasks} (\S\ref{sec:experiments}). Across three white-box and two black-box memory editors, on seven web-search benchmarks, our methods reduce train--inference inconsistency and logit drift while improving rollout rewards over naive compressed-stream training.
\end{itemize}

\section{Related Work}
\label{sec:related}
\paragraph{Harness context compression.}
Most production long-horizon agent harnesses include an external editor
that mutates the physical history before the next decoding step.
Claude Code~\citep{anthropic2024claudecode} and
Qwen-Agent~\citep{QwenAgent} maintain a bounded context window and
inject an auto-compact summary in place of the evicted prefix.
MemexRL~\citep{MemexRL2025} and \textsc{MemGPT}~\citep{packer2023memgpt} pages between the live context
and an external store.
TC-RAG and StackPlanner~\citep{tcrag2024,zhang2026stackplannercentralizedhierarchicalmultiagent} manages context as an explicit stack whose
\texttt{pop} evicts the oldest retrieval envelope.
AgentFold and Mem1~\citep{agentfold2025,mem1} periodically fold the
prefix into a self-generated summary.
A complementary line studies working-memory design for agents operating
over long documents, retaining answerable evidence while compacting the
rest of the context~\citep{zhou2026awmanswerableworkingmemory}.
A parallel line trains dedicated compressors, including token-budget
compression with small auxiliary models such as
\textsc{LLM-Lingua}~\citep{jiang2023llmlingua} and
\textsc{RECOMP}~\citep{xu2024recomp}, which act as scheduled deterministic editors.
Across these editor families~\citep{zhang2023h2o,li2024snapkv,xiao2024streamingllm,borgeaud2022retro}, the context available at training time
need not match the live view under which token was originally
decoded. Thus, context the model conditioned on while generating is not, what remains available when training.

\paragraph{Agent RL over edited contexts.}
Recently, policy-gradient RL~\citep{schulman2017ppo,shao2024grpo, zheng2025gspo} and their agentic extensions~\citep{jin2025searchr1,chen2025research,anonymous2025agenticrag} fine-tune LLMs on rollout trajectories, typically treating the edited context as ordinary sequential data. Memory-R1~\citep{yan2026memoryr1enhancinglargelanguage, zhang2026stackplannercentralizedhierarchicalmultiagent} and MemexRL~\citep{MemexRL2025} go further by making memory management itself learnable: the policy can offload spans to external storage and retrieve on demand, thereby optimizing the memory-editing process. However, neither line of work asks \textbf{which context the training gradient should condition on once the rollout history has been rewritten}.
Related notions of train--inference consistency address different failure modes. TITO~\citep{gallouedec2026tito}, whose token-level scheme our LogitTree extends to trajectory trees, studies token-level consistency---whether the token IDs observed during rollout are replayed identically during training---but does not address harness-induced history rewriting.
Our focus is orthogonal: the conditioning context itself changes, which extends block-causal masking for long-context training~\citep{ding2024packing,reid2024gemini,mem1}.

\section{Pitfalls: Two Default Ways to Train on an Edited Rollout}
\label{sec:failure_modes}
Every harness in \S\ref{sec:related} rewrites the agent's context
mid-rollout, posing one question: \textit{\textbf{which context did each token
condition on, and does training match it?}}
An edited rollout is therefore a \emph{trajectory tree}, not a single sequence that can be replayed consistently. Existing pipelines flatten it in one of two ways, which fail in opposite directions yet violate the same token-level conditioning invariant.

\subsection{Train--inference logits drift}
\label{sec:fm:drift}
We first define the \emph{train--inference logits drift} that quantifies this
mismatch. Fix a generated token $y_t$; write
$c_t^{\mathrm{rollout}}$ for the context the policy held when it emitted $y_t$
and $c_t^{\mathrm{train}}$ for the prefix under which the loss later scores it.
The rollout engine logs $\log \Pmodel(y_t \mid c_t^{\mathrm{rollout}})$ at
decode time and the training pass recomputes
$\log \Pmodel(y_t \mid c_t^{\mathrm{train}})$ under the \emph{same} weights;
averaged over the loss-carrying response tokens $\mathcal{M}$, their
disagreement is the \textbf{train--inference logits drift} (\texttt{logdiff}
in \S\ref{sec:experiments}):
\begin{equation}
\mathrm{logdiff} \;=\; \frac{1}{|\mathcal{M}|} \sum_{t \in \mathcal{M}}
\bigl|\, \log \Pmodel(y_t \mid c_t^{\mathrm{train}}) - \log \Pmodel(y_t \mid c_t^{\mathrm{rollout}}) \,\bigr| .
\label{eq:logdiff}
\end{equation}
Sharing one set of weights makes drift immune to sampler staleness: it is
either the small numerical gap between the training and inference kernels
when nothing is compressed, where the two contexts coincide by construction
and the statistic sits at its $0.014$ floor
(\S\ref{sec:experiments}); such residual differences can arise from
floating-point computation and optimization implementation details
\citep{zhang2026precisiontraininginferencemismatchoptimization} --- or a genuine
$c_t^{\mathrm{train}} \neq c_t^{\mathrm{rollout}}$. Drift is thus a test with a
calibrated zero, and the two pitfalls below fail it in opposite directions.

\subsection{Setup: the live view and the trajectory tree}
\label{sec:background}
\label{sec:fm:tree}
We model the agent as a ReAct-style policy~\citep{yao2023react} that generates each action from the physical history
$\histt = (s_0, a_0, o_0, \dots, s_{t-1}, a_{t-1}, o_{t-1})$,
where $s_k$, $a_k$, and $o_k$ denote the thought, tool call, and tool observation at turn $k$, respectively. A memory-editing harness inserts an editor $\textsc{Edit}(\cdot)$ into the rollout loop to perform deletion~\citep{MemexRL2025} or summarization~\citep{agentfold2025,tcrag2024}. Let $\editset_{\le t}$ denote the ordered edits fired by step $t$. The policy conditions on the \textbf{live context view} as:
\begin{equation} \histlogt \;=\; \textsc{View}_t~\!\bigl(\hist_{\le t};\, \editset_{\le t}\bigr), \qquad \editset_{\le t}=\textsc{Edit}(\hist_{\le t}), \qquad a_t \sim \policy(\,\cdot \mid \histlogt\,), \label{eq:live-view} \end{equation}
where $\textsc{View}_t$ applies all edits fired up to step $t$, yielding the partially observable context available to the policy. At the end of the rollout, the harness reports the \emph{final compressed walk}
$\hcomp$. Crucially,
\begin{equation} \underbrace{\View_t~\!\bigl(\hist_{\le t};\, \editset_{\le t}\bigr)}_{\text{live view at decode of } y_t} \;\;\neq\;\; \underbrace{\hcomp[:t]}_{\text{prefix of final walk}}  = \View_T~(\hist_{\le T}; \editset_{\le T}),
\label{eq:live-vs-final} \end{equation}
whenever an edit fires after step $t$: the prefix on the right has been rewritten by edits that did not yet exist when $y_t$ was decoded.

To see why, consider an eviction at position $\junct{k}$ that removes or replaces a span $\evictset{k}$. Tokens generated before $\junct{k}$ were conditioned on a context in which $\evictset{k}$ remained visible, whereas subsequent generation proceeds from the edited context in which $\evictset{k}$ has been removed or replaced. Each edit therefore forks the conditioning history into a \emph{generation-time leg}, which preserves the pre-edit context, and a \emph{compressed spine}, from which the rollout continues. Unrolling $K$ such evictions at positions $\junct{1}<\dots<\junct{K}$, removing spans $\evictset{1},\dots,\evictset{K}$, yields a \emph{trajectory tree} $\Treestruct$ rooted at the initial context. Every $y_t$ is a leaf of this tree: $\tpath(y_t)$ denotes its root-to-leaf generation path, with $\tpath(y_t)=\histlogt$ by definition, while $\spath(y_t)=\hcomp[:t]$ denotes its corresponding prefix in the final compressed walk. \Cref{fig:traj-tree} illustrates a running example with $\evictset{1,2,3}=\{y_5\},\{y_8\},\{y_{11}\}$ at $\junct{1,2,3}=\{7,10,12\}$; the corresponding per-leaf prefixes are reported in \Cref{tab:running-prefixes} and Appendix~\ref{app:tree-notes-fm}.

\begin{figure}[t]
\centering
\begin{minipage}[t]{0.610\linewidth}
\centering
\parbox[t][4ex][t]{\linewidth}{\centering\footnotesize\textbf{(a) Trajectory tree and two flattenings
($K{=}3$ junctions).}}
\resizebox{!}{4.8cm}{%
\begin{tikzpicture}[
  font=\scriptsize,
  tok/.style={draw, line width=0.45pt, rounded corners=1.1pt,
              minimum width=4.6mm, minimum height=3.7mm,
              inner sep=0.4pt, font=\scriptsize},
  query/.style ={tok, fill=blue!10,  draw=blue!60!black},
  shared/.style={tok, fill=black!4,  draw=black!48},
  spine/.style ={tok, fill=green!12, draw=green!50!black},
  evict/.style ={tok, fill=red!7,    draw=red!62!black, densely dashed},
  revive/.style={tok, fill=red!14,   draw=red!62!black, line width=0.6pt},
  answer/.style={tok, fill=green!28, draw=green!55!black,
                 minimum width=5.6mm, font=\scriptsize\bfseries},
  ghost/.style ={tok, draw=none, fill=none, text=black!34},
  sw/.style    ={draw, line width=0.4pt, rounded corners=0.8pt,
                 minimum width=2.6mm, minimum height=2.1mm, inner sep=0pt},
  swquery/.style ={sw, fill=blue!10,  draw=blue!60!black},
  swshare/.style ={sw, fill=black!4,  draw=black!48},
  swspine/.style ={sw, fill=green!12, draw=green!50!black},
  swevict/.style ={sw, fill=red!7,    draw=red!62!black, densely dashed},
  bhead/.style ={font=\scriptsize\bfseries, anchor=west, inner sep=0pt},
  bsub/.style  ={font=\fontsize{6.4}{7.4}\selectfont, text=black!58,
                 anchor=west, inner sep=0pt},
  note/.style  ={font=\fontsize{6.4}{7.4}\selectfont, anchor=west,
                 align=left, inner sep=0pt},
  legend/.style={font=\fontsize{6.4}{6.9}\selectfont, text=black!55,
                 anchor=north west, align=left, inner sep=0pt},
  tlab/.style  ={font=\fontsize{6.4}{7.0}\selectfont, text=black!50,
                 rotate=90, anchor=south, inner sep=0.5pt},
  jlab/.style  ={font=\fontsize{6.4}{7.4}\selectfont,
                 text=orange!58!black, anchor=east, inner sep=0.6pt,
                 fill=white},
  jarr/.style  ={draw=orange!62!black, line width=0.45pt,
                 -{Latex[length=1.1mm]}, shorten >=0.5pt,
                 shorten <=0.5pt},
  jrule/.style ={draw=orange!50!black, line width=0.3pt, densely dashed},
  timeax/.style={draw=black!42, line width=0.4pt, -{Latex[length=1.0mm]}},
  dead/.style  ={draw=black!32, line width=0.3pt},
  skiparc/.style={draw=black!58, line width=0.45pt, densely dotted,
                  -{Latex[length=1.1mm]}, shorten >=0.8pt,
                  shorten <=0.8pt},
  leakarc/.style={draw=red!62!black, line width=0.45pt, densely dashed,
                  -{Latex[length=1.1mm]}, shorten >=0.8pt,
                  shorten <=0.8pt},
]
\def\cx{0.52}      
\def\xa{6.82}      
\def\rA{0}         
\def\rB{-0.72}     
\def\rC{-1.44}     
\def\rD{-2.16}     
\def\yA{-3.04}     
\def\yB{-4.16}     

\node[bhead, text=orange!58!black] (h1) at (-0.23,0.48)
  {Trajectory tree $\Treestruct$};
\node[bsub] at (h1.east)
  {\,: one row per live context $\histlogt$; edit $\junct{k}$ deletes a column};

\draw[timeax] (-0.30,0.17) -- (-0.30,-2.34);
\node[tlab] at (-0.36,-1.08) {rollout time};

\node[query]  at (0*\cx,  \rA) {$s$};
\node[shared] at (1*\cx,  \rA) {$y_1$};
\node[shared] at (2*\cx,  \rA) {$y_2$};
\node[shared] at (3*\cx,  \rA) {$y_3$};
\node[shared] at (4*\cx,  \rA) {$y_4$};
\node[evict]  at (5*\cx,  \rA) {$y_5$};
\node[spine]  at (6*\cx,  \rA) {$y_6$};
\node[spine]  (tA7) at (7*\cx,  \rA) {$y_7$};

\begin{scope}[shift={(7.955*\cx,0.135)}]
  \node[swquery] (kq) at (0,0)         {};
  \node[legend, anchor=west] at (kq.east) {\,prompt};
  \node[swshare] (kc) at (0,-0.235)    {};
  \node[legend, anchor=west] at (kc.east) {\,older context};
  \node[swspine] (kd) at (1.72,0)      {};
  \node[legend, anchor=west] (kdl) at (kd.east) {\,decoded here};
  \node[swevict] (ke) at (1.72,-0.235) {};
  \node[legend, anchor=west] (kel) at (ke.east) {\,decoded, evicted next};
  \node[ghost, minimum width=2.6mm, minimum height=2.1mm,
        font=\fontsize{5.6}{5.6}\selectfont] (kg) at (0,-0.470) {$y_j$};
  \draw[dead] (kg.west) -- (kg.east);
  \node[legend, anchor=west] (kt) at (kg.east)
    {\,no box: gone from the live context};
  \begin{scope}[on background layer]
    \node[draw=black!20, line width=0.3pt, rounded corners=1.2pt,
          fill=white, inner xsep=2pt, inner ysep=1.2pt,
          fit=(kq)(kdl)(kel)(kt)(kg)] {};
  \end{scope}
\end{scope}

\draw[jrule] (-0.06,-0.36) -- (2.25,-0.36);
\node[jlab] at (4.38*\cx,-0.36) {$\junct{1}{=}7$: evict $y_5$};

\node[query]  at (0*\cx,  \rB) {$s$};
\node[shared] at (1*\cx,  \rB) {$y_1$};
\node[shared] at (2*\cx,  \rB) {$y_2$};
\node[shared] at (3*\cx,  \rB) {$y_3$};
\node[shared] at (4*\cx,  \rB) {$y_4$};
\node[ghost]  (gB5) at (5*\cx,  \rB) {$y_5$};
\node[shared] at (6*\cx,  \rB) {$y_6$};
\node[shared] at (7*\cx,  \rB) {$y_7$};
\node[evict]  at (8*\cx,  \rB) {$y_8$};
\node[spine]  at (9*\cx,  \rB) {$y_9$};
\node[spine]  (tB10) at (10*\cx, \rB) {$y_{10}$};
\draw[dead] (gB5.west) -- (gB5.east);
\draw[jarr] (tA7.south) to[out=-96,in=84] (gB5.north);

\draw[jrule] (-0.06,-1.08) -- (3.81,-1.08);
\node[jlab] at (7.38*\cx,-1.08) {$\junct{2}{=}10$: evict $y_8$};

\node[query]  at (0*\cx,  \rC) {$s$};
\node[shared] at (1*\cx,  \rC) {$y_1$};
\node[shared] at (2*\cx,  \rC) {$y_2$};
\node[shared] at (3*\cx,  \rC) {$y_3$};
\node[shared] at (4*\cx,  \rC) {$y_4$};
\node[ghost]  (gC5) at (5*\cx,  \rC) {$y_5$};
\node[shared] at (6*\cx,  \rC) {$y_6$};
\node[shared] at (7*\cx,  \rC) {$y_7$};
\node[ghost]  (gC8) at (8*\cx,  \rC) {$y_8$};
\node[shared] at (9*\cx,  \rC) {$y_9$};
\node[shared] at (10*\cx, \rC) {$y_{10}$};
\node[evict]  at (11*\cx, \rC) {$y_{11}$};
\node[spine]  (tC12) at (12*\cx, \rC) {$y_{12}$};
\draw[dead] (gC5.west) -- (gC5.east);
\draw[dead] (gC8.west) -- (gC8.east);
\draw[jarr] (tB10.south) to[out=-96,in=84] (gC8.north);

\draw[jrule] (-0.06,-1.80) -- (5.37,-1.80);
\node[jlab] at (10.38*\cx,-1.80) {$\junct{3}{=}12$: evict $y_{11}$};

\node[query]  at (0*\cx,  \rD) {$s$};
\node[shared] at (1*\cx,  \rD) {$y_1$};
\node[shared] at (2*\cx,  \rD) {$y_2$};
\node[shared] at (3*\cx,  \rD) {$y_3$};
\node[shared] at (4*\cx,  \rD) {$y_4$};
\node[ghost]  (gD5) at (5*\cx,  \rD) {$y_5$};
\node[shared] at (6*\cx,  \rD) {$y_6$};
\node[shared] at (7*\cx,  \rD) {$y_7$};
\node[ghost]  (gD8) at (8*\cx,  \rD) {$y_8$};
\node[shared] at (9*\cx,  \rD) {$y_9$};
\node[shared] at (10*\cx, \rD) {$y_{10}$};
\node[ghost]  (gD11) at (11*\cx, \rD) {$y_{11}$};
\node[shared] at (12*\cx, \rD) {$y_{12}$};
\node[answer] at (\xa,    \rD) {$y_{\mathrm{ans}}$};
\draw[dead] (gD5.west)  -- (gD5.east);
\draw[dead] (gD8.west)  -- (gD8.east);
\draw[dead] (gD11.west) -- (gD11.east);
\draw[jarr] (tC12.south) to[out=-96,in=84] (gD11.north);

\node[bhead, text=red!58!black] (h2) at (-0.23,\yA+0.44)
  {Pitfall A: Naive-Compressed};
\node[bsub] at (h2.east)
  {\,: scores \emph{every} token on the last row $\hcomp$ --- too
   \emph{short}};

\node[query]  (a0)  at (0*\cx,  \yA) {$s$};
\node[shared] (a1)  at (1*\cx,  \yA) {$y_1$};
\node[shared] (a2)  at (2*\cx,  \yA) {$y_2$};
\node[shared] (a3)  at (3*\cx,  \yA) {$y_3$};
\node[shared] (a4)  at (4*\cx,  \yA) {$y_4$};
\node[ghost]  (ag1) at (5*\cx,  \yA) {$y_5$};
\node[spine]  (a6)  at (6*\cx,  \yA) {$y_6$};
\node[spine]  (a7)  at (7*\cx,  \yA) {$y_7$};
\node[ghost]  (ag2) at (8*\cx,  \yA) {$y_8$};
\node[spine]  (a9)  at (9*\cx,  \yA) {$y_9$};
\node[spine]  (a10) at (10*\cx, \yA) {$y_{10}$};
\node[ghost]  (ag3) at (11*\cx, \yA) {$y_{11}$};
\node[spine]  (a12) at (12*\cx, \yA) {$y_{12}$};
\node[answer] (aya) at (\xa,    \yA) {$y_{\mathrm{ans}}$};
\draw[dead] (ag1.west) -- (ag1.east);
\draw[dead] (ag2.west) -- (ag2.east);
\draw[dead] (ag3.west) -- (ag3.east);

\draw[skiparc] (a4.south) to[out=-62,in=-118] (a6.south);
\node[note, text=black!62] at (6.85*\cx,\yA-0.32)
  {$y_6,y_7$ scored as if $y_5$ had never existed};

\node[bhead, text=red!58!black] (h3) at (-0.23,\yB+0.44)
  {Pitfall B: Naive-Full};
\node[bsub] at (h3.east)
  {\,: revives every evicted column, tape $\Hfull$ --- too
   \emph{long}};

\node[query]  (b0)  at (0*\cx,  \yB) {$s$};
\node[shared] (b1)  at (1*\cx,  \yB) {$y_1$};
\node[shared] (b2)  at (2*\cx,  \yB) {$y_2$};
\node[shared] (b3)  at (3*\cx,  \yB) {$y_3$};
\node[shared] (b4)  at (4*\cx,  \yB) {$y_4$};
\node[revive] (b5)  at (5*\cx,  \yB) {$y_5$};
\node[spine]  (b6)  at (6*\cx,  \yB) {$y_6$};
\node[spine]  (b7)  at (7*\cx,  \yB) {$y_7$};
\node[revive] (b8)  at (8*\cx,  \yB) {$y_8$};
\node[spine]  (b9)  at (9*\cx,  \yB) {$y_9$};
\node[spine]  (b10) at (10*\cx, \yB) {$y_{10}$};
\node[revive] (b11) at (11*\cx, \yB) {$y_{11}$};
\node[spine]  (b12) at (12*\cx, \yB) {$y_{12}$};
\node[answer] (bya) at (\xa,    \yB) {$y_{\mathrm{ans}}$};

\draw[leakarc] (b5.south) .. controls +(0,-0.34) and +(0,-0.34) ..
  (b9.south);
\node[note, text=red!55!black, align=left] at (9.62*\cx,\yB-0.40)
  {$y_9,y_{10}$ still attend to\\$y_5$, dead since $\junct{1}$};

\end{tikzpicture}}
\end{minipage}%
\hfill
\hspace{-10pt}
\begin{minipage}[t]{0.375\linewidth}
\centering
\parbox[t][4ex][t]{\linewidth}{\centering\scriptsize\textbf{(b) Per-token $\Delta\log p$ on untrained Qwen3-4B,
three harnesses on one axis.}}
\vspace{5pt}
\includegraphics[height=4.8cm]{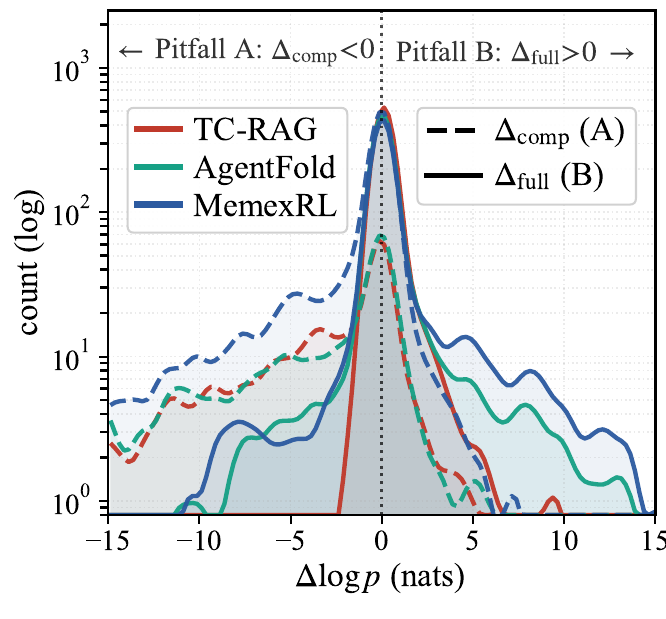}%
\end{minipage}

\caption{\small{\textbf{The two pitfalls.}
\textbf{(a)} The trajectory tree $\Treestruct$ of the $K{=}3$ running
example, drawn as the staircase of \emph{live contexts}: row $k$ lists
every token visible during layer $k$, so a token decoded there
conditions on exactly what lies to its left in its own row, i.e.\ on
its live view $\histlogt=\tpath(y_t)$ (box fills are keyed in the
legend). Between two rows an edit fires --- $\evictset{1}{=}\{y_5\}$ at
$\junct{1}{=}7$, $\evictset{2}{=}\{y_8\}$ at $\junct{2}{=}10$,
$\evictset{3}{=}\{y_{11}\}$ at $\junct{3}{=}12$ (orange arrows) --- so
every later row carries a hole at the evicted column. The two default
flattenings collapse this staircase in opposite directions: strip~A
scores every token on the last row alone, the walk $\hcomp$, so
$y_6,y_7$ are scored on a prefix that already skips $y_5$ (no box at
that column); strip~B revives every evicted column into the tape $\Hfull$,
leaving $y_5,y_8,y_{11}$ visible to tokens decoded long after their
eviction (red arc).
\textbf{(b)} \emph{Train--inference logit inconsistency} for each
generated token of an untrained Qwen3-4B, overlaid across three
white-box harnesses. $\Delta_{\mathrm{comp}}$ (dashed, left tail) is
the logit difference when training replays a token using the final
compressed context, which is missing information present at inference;
$\Delta_{\mathrm{full}}$ (solid, right tail) is the difference when
training replays the full physical trace, which contains information
unavailable at inference. Their per-harness means have opposite signs,
$\mu_{\mathrm{comp}}\in[-4.0,-1.6]$ versus
$\mu_{\mathrm{full}}\in[+0.2,+0.7]$; full aggregation is in
\Cref{tab:q1} (\S\ref{app:q1-full}).}}
\label{fig:traj-tree}%
\label{fig:q1_dist}%
\end{figure}

\subsection{The two pitfalls and the conditioning invariant}
\label{sec:fm:comp}
\label{sec:fm:full}
\label{sec:fm:symmetry}
\label{sec:invariant}

Two sequence representations of $\Treestruct$ are natural, and neither preserves the generation-time path of every token: \ding{182} the final compressed walk $\hcomp$ keeps only the compressed spine, \ding{183} the full physical trace $\Hfull$ concatenates every generated token in decoding order, a depth-first traversal. The former applies edits too early, the latter keeps edited-away content too long.

\paragraph{\ding{182} Pitfall A: Naive-Compressed follows the final compressed path
(time-travel leakage).}
Naive-Compressed trains directly on $\hcomp$. Consequently, each surviving target $y_t$ is scored under its prefix in the final compressed walk, $\spath(y_t)$, rather than under the live view $\tpath(y_t)=\histlogt$ from which it was generated:
\begin{equation}
  \spath(y_t) \;\subsetneq\; \tpath(y_t) \;=\; \histlogt \;=\ \textsc{RootToRightmostLeaf}(\Treestruct).
  \label{eq:pitfall-A-symptom}
\end{equation}
The mismatch arises because $\spath(y_t)$ may already reflect edits that fired only after $y_t$ was decoded. Future compression is thus propagated backward in time, a failure mode we call \emph{time-travel leakage}. In the running
example, this scores $y_6, y_7$ as if $y_5$ never existed --- though
the model conditioned on $y_5$. On an untrained Qwen3-4B, one such
leaf contributes $\Delta_{\mathrm{comp}} = -22.8$ nats to the
per-token log-probability gap; SFT/RL views appear in
Appendix~\ref{app:tree-notes-fm}.

\paragraph{\ding{183} Pitfall B: Naive-Full trains on the depth-first traversal
(train--inference mismatch).}
Naive-Full instead trains on $\Hfull$, which preserves every generated token in decoding order. A target generated after an eviction is therefore reevaluated under the physical prefix $\hist_{<t}$, which may still contain spans that had already been removed from the live view:
\begin{equation}
\textsc{DepthFirstTraversal}(\Treestruct) \;=\ \hist_{<t} \;\supsetneq\; \tpath(y_t) \;\supseteq\; \spath(y_t).
  \label{eq:pitfall-B-symptom}
\end{equation}
Thus, $y_9, y_{10}$ are scored with $y_5, y_8$ still in the prefix even
though each was already evicted; the model learns to exploit signal
unavailable at deployment, with credit-assignment error accumulating
linearly with $K$ (App.~\ref{app:tree-notes-fm},
Eq.~\ref{eq:step-level-bias}). A matched leaf contributes
$\Delta_{\mathrm{full}} = +18.5$ nats: opposite in sign to Pitfall~A,
with comparable magnitude. We refer to this failure as \emph{stale-context leakage}: training exposes the model to information unavailable during rollout and deployment.

\paragraph{\ding{184} Mirror symmetry and the conditioning invariant.}
Pitfall~A conditions on too little context, whereas Pitfall~B conditions on too much (\Cref{tab:bad-symmetry}, App.~\ref{app:tree-notes-fm}). Accordingly, on an untrained Qwen3-4B, $\Delta_{\mathrm{comp}}$ is predominantly negative and $\Delta_{\mathrm{full}}$ predominantly positive across three white-box harnesses (\Cref{fig:q1_dist}). Under eviction-heavy compression, the Naive-Compressed train--inference log-probability gap reaches $0.366$ on AgentFold, ${\approx}26\times$ the no-compression floor of $0.014$ (\S\ref{sec:experiments}).
Both failures violate the same requirement. In the notation of \S\ref{sec:fm:drift}, the loss prefix is $c_t^{\mathrm{train}}=\spath(y_t)$ for Pitfall~A and $\hist_{<t}$ for Pitfall~B, against $c_t^{\mathrm{rollout}}=\histlogt$ in both cases.

\begin{definition}[Conditioning consistency]
\label{def:conditioning-invariant}
A training pipeline is \emph{conditioning-consistent} if, at every step $t$ in every trajectory, using only the edits fired through step $t$, never future ones:
\begin{equation} \;\forall t,\quad c_t^{\mathrm{train}} \;\equiv\; c_t^{\mathrm{rollout}} \;\equiv\; \View_t~\!\bigl(\hist_{\le t};\, \editset_{\le t}\bigr) \;=\; \histlogt\; \tag{\textit{Conditioning Invariant}}, \label{eq:conditioning-invariant} \end{equation}
\end{definition}

\begin{proposition}[Failure of naive training]
\label{prop:naive-violation}
$\hcomp[:t]$ yields Pitfall~A, while $\hist_{<t}$ yields Pitfall~B; neither matches the live view $\histlogt$ from which $y_t$ was decoded. Whenever a nontrivial edit changes the prefix used to evaluate some target $y_t$, both Naive-Compressed and Naive-Full violate the conditioning invariant. \S\ref{sec:hard} gives two exact (bias-zero) fixes, while \S\ref{sec:soft} gives a soft $\Order(\sqrt{\epsKL})$ one.
\end{proposition}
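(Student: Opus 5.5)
The plan is to formalize the three prefixes as sets or subsequences of the physical tape and compare them by inclusion. Using \eqref{eq:live-view}, I will write the live view as $\histlogt=\View_t(\hist_{\le t};\editset_{\le t})$, obtained from $\hist_{<t}$ by applying only the edits with $\junct{k}\le t$. Under deletion-type edits it equals $\hist_{<t}$ minus $\bigcup_{k:\junct{k}\le t}\evictset{k}$. The final-walk prefix $\spath(y_t)=\hcomp[:t]$ is $\hist_{<t}$ minus $\bigcup_{k\le K}\evictset{k}\cap\hist_{<t}$. The Naive-Full prefix is $\hist_{<t}$ itself.

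The first step establishes the sandwich
\[
\spath(y_t)\subseteq \histlogt \subseteq \hist_{<t}.
\]
Both inclusions follow because the sets of edits applied are nested, $\emptyset\subseteq\editset_{\le t}\subseteq\editset_{\le T}$, and $\View$ is monotone in the edit set. For summarization edits, where $\evictset{k}$ is replaced by a summary token block rather than deleted, I would state the comparison as inequality of sequences rather than inclusion. The argument is the same, since the token content restricted to positions before $t$ changes exactly when a replaced span overlaps $\hist_{<t}$.

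The second step characterizes when each inclusion is strict, which yields the two pitfalls. The Pitfall~A inequality $\hcomp[:t]\neq\histlogt$ holds iff some edit fires after $t$, i.e.\ $\junct{k}>t$, and its span $\evictset{k}$ intersects $\hist_{<t}$ (this is \eqref{eq:live-vs-final}); this is time-travel leakage. The Pitfall~B inequality $\hist_{<t}\neq\histlogt$ holds iff some edit with $\junct{k}\le t$ has nonempty $\evictset{k}\subseteq\hist_{<t}$; this is stale context. The running example of \Cref{fig:traj-tree} gives witnesses for both: $y_6$ with $\junct{1}=7$ for A, and $y_9$ with $\evictset{1}=\{y_5\}$ for B.

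The final step reads off the proposition. The hypothesis ``a nontrivial edit changes the prefix used to evaluate some target $y_t$'' supplies a $t$ with $c_t^{\mathrm{train}}\neq\histlogt$ for the respective pipeline. This contradicts Definition~\ref{def:conditioning-invariant}, which requires equality for all $t$.

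The main obstacle is making ``nontrivial'' precise so that the claim holds for both pipelines simultaneously. A single edit at $\junct{k}$ evicting a span inside $\hist_{<\junct{k}}$ breaks Pitfall~B at every $t\ge \junct{k}$. It breaks Pitfall~A at every token $t<\junct{k}$ positioned after the start of $\evictset{k}$, provided such a generated token exists, which in the example is $y_6$. I would define nontrivial as ``$\evictset{k}\neq\emptyset$ and some loss-carrying token lies strictly between $\evictset{k}$ and $\junct{k}$''. If that token is absent, I would fall back to the weaker reading that each pipeline fails at the target whose prefix the edit changes. A secondary subtlety is that a summary could, in principle, reproduce the evicted text verbatim. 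I will exclude this by taking nontrivial to mean the view actually differs as a token sequence.
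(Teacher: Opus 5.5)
Your proposal is correct and follows the paper's reasoning. The paper gives no separate proof of this proposition: it rests on the nested-prefix relations $\spath(y_t)\subsetneq\tpath(y_t)$ and $\hist_{<t}\supsetneq\tpath(y_t)\supseteq\spath(y_t)$ together with the diverging-leaf criterion of Definition~\ref{def:logits-tree}, which is exactly your sandwich plus strictness characterization. Two refinements in your version make explicit what the paper leaves implicit: you pin down what ``nontrivial'' must mean for both pipelines to fail at once (a loss-carrying token between the evicted span and its junction, and one after the junction), and you replace inclusion by sequence inequality for summarization edits.
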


\section{Exact Solutions: Two Ways to Walk the Context Tree}
\input{figures/fig_method_overview}
\label{sec:hard}
To eliminate the conditioning errors introduced by the two pitfalls, we propose two exact solutions that enforce conditioning-invariance. Across the $K{+}1$ branches of the conditioning tree $\Treestruct$, each target $y_t$ is scored on the unique branch whose prefix matches its inference-time live view, $\tpath(y_t)=\histlogt$ (App.~\ref{app:logits-tree}). Both methods therefore recover the correct training context for every target and incur zero conditioning bias. \ding{182} \textbf{LogitTree} (\S\ref{sec:hard:tree}) explicitly materializes and evaluates the branches as separate sequences, \ding{183} whereas the \textbf{4D attention mask} (\S\ref{sec:hard:4d}) packs the same branches into a single forward pass. As established in \Cref{thm:hard_equiv}, they implement the \emph{same tree traversal} at different levels of the training stack (\Cref{fig:method-overview}).

\subsection{\ding{182} LogitTree materializes branches as separate sequences}
\label{sec:hard:tree}
\textbf{LogitTree} (ours; extending the token-level scheme of
\citep{gallouedec2026tito} to trajectory trees) decomposes
$\Treestruct$ into its $K{+}1$ root-to-leaf branches and processes each
branch as a separate sequence:
\begin{equation}
\{\histlog^\pi\} = \textsc{RootToLeaf}(\Treestruct), \qquad
\logits_\pi \;=\; f_\theta(\{\histlog^\pi\}),
\qquad \pi \in \{1, \dots, K+1\},
\label{eq:logittree-def}
\end{equation}
where $\{\histlog^\pi\}$ denotes the sequence on branch $\pi$ and logits is extracted from LLMs  $f(\cdot)$ parameterized by $\theta$. For each
target $y_t$, LogitTree selects the unique branch whose prefix equals
the live view under which $y_t$ was generated,
$\tpath(y_t)=\histlogt$. The loss for $y_t$ is therefore computed from
that branch only, so its training-time context exactly matches its
rollout-time context. In implementation this uniqueness is enforced by a
per-branch \emph{token loss mask}: a token is unmasked on exactly one of
the $K{+}1$ branches --- the one whose prefix is its live view --- and
masked out at every other occurrence, so each token contributes its
policy gradient exactly once and the shared-trunk tokens, which are
physically replicated across all branches, are never counted $K{+}1$
times.

LogitTree places the $K{+}1$ legs in physically disjoint sequences, so standard causal attention restricts each target $y_t$ to its predecessors on the same leg---namely, $\tpath(y_t)$---with no attention across legs. 
In the running example
($\junct{1,2,3}=7,10,12$), the pre-$\junct{1}$ leg containing
$y_6$ and $y_7$ retains $y_5$, which was visible when they were decoded. In contrast, the compressed-spine leg containing $y_{\mathrm{ans}}$ excludes
$y_5$, $y_8$, and $y_{11}$, all of which had already been evicted. Thus, neither time-travel leakage nor stale-context leakage has a valid attention pathway.
Shared-trunk KV caching keeps aggregate memory well below $(K{+}1)\times$; compute stays $\approx (K{+}1)\times$. Per-branch SFT/RL gradient equivalences are in App.~\ref{app:proofs} (\Cref{prop:sft_tree,prop:rl_tree}; the branch layout for the running example is drawn in \Cref{fig:method-overview}(b)); the construction is backbone-agnostic --- it consumes only the per-leg live views the harness already exposes (App.~\ref{app:impl-tradeoffs}).

\subsection{\ding{183} The 4D attention mask packs all branches into one sequence}
\label{sec:hard:4d}
Performing $K{+}1$ backward passes per rollout is prohibitively expensive in deep-search settings. We therefore ask whether the entire conditioning tree can be traversed in a single masked forward pass. Our realization follows the 4D attention-mask formulation used for stack-memory agent trajectories in AgenticRag-R1~\citep{jiang2026agenticrag}: we keep $\Hfull$ packed as one sequence while restricting each query token to attend only to tokens on its own root-to-leaf branch.
Let $\bigl[\tau(j),
\omega(j)\bigr)$ be token $j$'s logical lifetime ($\omega(j) = +\infty$ if
never evicted), we define
\begin{equation}
M^{\mathrm{logical}}_{i,j}
  = \begin{cases}
    0,        & \tau(j)\le i < \omega(j),\\
    -\infty,  & \text{otherwise,}
  \end{cases}
  \qquad M_{i,j} = M^{\mathrm{causal}}_{i,j} \oplus M^{\mathrm{logical}}_{i,j},
  \label{eq:4d-mask-def}
\end{equation}
The causal mask $M^{\mathrm{causal}}$ prevents attention to future tokens, while
$M^{\mathrm{logical}}$ further removes attention edges between
different branches of the conditioning tree. We also reassign position
ids so that the tokens visible to each query form a gap-free logical
sequence~\citep{mem1}. As a result, each query $y_i$ attends to exactly the
root-to-leaf path of $\Treestruct$ under which it was generated, allowing
all branches to be evaluated in a single forward pass
(\Cref{fig:method-overview}(c)).
Throughout, both
materializations share one standing conditioning invariant proposition --- and gradient equivalence
(\Cref{prop:sft_4d}, App.~\ref{app:proofs}), which holds under five
conditions: (i)~exact-visibility encoding, (ii)~row-wise position
reassignment, (iii)~decoupled loss/attention masks, (iv)~no
cross-mask normalization, and (v)~rollout and training sharing the same
$M^{\mathrm{logical}}$. However, linear attention, sparse/top-$k$ selection,
and vLLM/SGLang violate the premise or one of (i)--(v)
(App.~\ref{app:impl-tradeoffs}).

\subsection{\ding{184} The centerpiece: 4D masks and LogitTree are the same walk}
\label{sec:hard:equiv}

\begin{theorem}[Materialization equivalence]
\label{thm:hard_equiv}
Under dense softmax attention, the packed 4D mask and the LogitTree $K$-forward decomposition implement the same traversal of the conditioning tree and compute identical training gradients. Both compute the sum over branches of the per-branch gradient in \Cref{prop:sft_tree} (App.~\ref{app:proofs}).
\end{theorem}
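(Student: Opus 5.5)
The plan is to show equality at the level of per-token logits first, and then lift it to gradients by linearity of the loss in the per-token terms. Fix a target position $i$ in the packed tape $\Hfull$ and let $\pi(i)$ be the unique branch of $\Treestruct$ whose prefix equals the live view $\histlog_i=\tpath(y_i)$, as guaranteed by the per-branch loss mask of \S\ref{sec:hard:tree}.

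\textbf{Step 1 (visibility sets coincide).} I would prove that the set of keys $\{j : M_{i,j}=0\}$ under \eqref{eq:4d-mask-def} equals, as an ordered multiset of tokens, the causal prefix of $y_i$ on branch $\histlog^{\pi(i)}$. The causal part gives $j\le i$. The logical part gives $\tau(j)\le i<\omega(j)$, which says exactly that $j$ was produced and not yet evicted when $y_i$ was decoded, i.e.\ $j\in\View_i(\hist_{\le i};\editset_{\le i})$. By condition (i) (exact-visibility encoding) and (v) (rollout and training share $M^{\mathrm{logical}}$), this is precisely $\tpath(y_i)$.

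\textbf{Step 2 (layerwise induction).} I would then show by induction on layer $\ell$ that the hidden state $h^{(\ell)}_i$ in the packed pass equals the hidden state of the same token at its position on branch $\pi(i)$ in LogitTree. Two cases need care. Token-wise maps (MLP, norms) are immediate. For dense softmax attention, masked entries with $-\infty$ contribute $e^{-\infty}=0$ to both numerator and denominator. This is where the dense-softmax hypothesis and condition (iv) (no cross-mask normalization) are used: the attention output is a softmax over exactly the visible keys. By condition (ii) (row-wise position reassignment), the rotary positions of query and keys match the gap-free branch positions, so the logits $q_i^\top k_j$ agree.

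The main obstacle is the induction itself. The keys $k_j$ at layer $\ell$ are computed from $h^{(\ell-1)}_j$, which in the packed pass is $j$'s own row, not necessarily $j$'s state on branch $\pi(i)$. I would handle this with a monotonicity lemma: for any $j$ visible to $i$, the visibility set of $j$ is the prefix of $i$'s visibility set up to $j$. This holds because eviction only removes tokens, and any token evicted before $j$ was decoded is also invisible at $i\ge j$. The lemma also needs the position-id assignment for $j$ to be branch-independent for shared-trunk tokens. Given the lemma, $j$'s packed state equals its state on every branch containing it, including $\pi(i)$, so the induction closes.

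\textbf{Step 3 (gradients).} Steps 1 and 2 give identical logits for every loss-carrying target. Both objectives are sums over targets of a per-token term, SFT cross-entropy or the RL surrogate weighted by advantages and ratios. By condition (iii) (decoupled loss and attention masks), each target is counted once in the packed pass and once across branches in LogitTree. Both losses therefore equal $\sum_\pi \sum_{t\in\pi} \ell_t$, the per-branch sum of \Cref{prop:sft_tree} (and \Cref{prop:rl_tree}). Since the two losses are equal as functions of $\theta$, their gradients are equal. The replicated trunk parameters accumulate gradient via the chain rule through the shared computation in exactly the same way, because the function identity holds for all $\theta$.
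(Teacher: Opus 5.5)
Your route matches the paper's. You establish row-wise logit equality as in \Cref{prop:sft_4d}, observe that the two losses are then the same scalar function of $\theta$, and conclude that the gradients are equal. You also correctly isolate the step that the paper's proof of \Cref{prop:sft_4d} leaves to ``induction over layers'': in the packed pass, the layer-$\ell$ key and value of a visible token $j$ are computed from $j$'s own row, not from $j$'s state on branch $\pi(i)$.

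\textbf{The gap.} The monotonicity lemma you use to close this step is false, and it fails exactly where eviction matters. Write $V_i$ for the set of keys visible to row $i$. Your justification gives only $V_i\cap\{1,\dots,j-1\}\subseteq V_j$, and not even that for MemexRL, whose \texttt{retrieve} re-injects evicted spans. The reverse inclusion --- everything visible when $y_j$ was decoded is still visible at $i$ --- is precisely what an eviction between $j$ and $i$ destroys.

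In the running example, $y_5$ is still live at step $6$, so under \eqref{eq:4d-mask-def} row $6$ attends to $y_5$. From layer $2$ on, the key and value of $y_6$ therefore carry $y_5$. The rows that score the post-$\junct{1}$ targets $y_8,y_9,y_{10}$ read that single copy. LogitTree's post-$\junct{1}$ branch instead re-encodes $y_6,y_7$ without $y_5$, just as the re-rendered rollout payload does. Your side condition on position ids fails for the same tokens: $y_6$'s gap-free index drops by one between the pre- and post-$\junct{1}$ branches.

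So for the single-copy physical-union mask and any model with two or more layers, the induction does not close, and the post-junction logits in general differ from LogitTree's. What that pass reproduces is a rollout that drops $y_5$'s cache entries but keeps the stale cached states of $y_6,y_7$, not the re-rendered live views.

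\textbf{The fix.} The missing idea is replication: every shared-trunk token needs one copy per branch, so that each copy's hidden state and position are computed from its branch-local prefix. This is the branch-replicated packing the paper actually executes (\S\ref{app:packed-vs-union}). You concatenate the $K{+}1$ branch sequences, use a block-diagonal causal mask, and restart position ids in each block. With this construction no monotonicity lemma is needed:
\begin{itemize}
\item Under dense softmax, a $-\infty$ entry contributes zero to both numerator and denominator, so no attention weight crosses blocks.
\item By (iv), no other operation mixes positions across blocks.
\item A trivial induction over layers then shows that block $\pi$ computes exactly the LogitTree forward of branch $\pi$, and your Step~3 finishes the proof.
\end{itemize}
You should therefore state and prove the theorem for the replicated packing. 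As a claim about the single-copy mask of \eqref{eq:4d-mask-def} it does not hold. The paper's assertion in \S\ref{app:packed-vs-union} that the two packings agree rests on the same false lemma.
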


\paragraph{Cost and deployment splits.}
LogitTree requires $K{+}1$ backward passes per rollout, resulting in a $5$--$20\times$ wall-clock overhead in deep-search settings, whereas the 4D formulation requires only one. The 4D formulation, however, imposes a stronger systems requirement: it requires more larger attention-mask and white-box access to both the harness and the model. Implementing~\eqref{eq:4d-mask-def} requires the harness to expose the eviction records that determine $[\tau(j),\omega(j))$, as well as the model to accept a custom $(Batchsize,NumHeads,T,T)$ attention mask. Neither capability is generally available through black-box APIs such as Claude Code or OpenCode. Moreover, constructing and applying a per-query 4D mask has high memory and computational complexity, making it particularly unfriendly for training with very long contexts.
By contrast, LogitTree requires only the live views exposed during rollout and can reconstruct the $K{+}1$ branch inputs by concatenating the corresponding per-leg prompts. Its main limitation is therefore computational rather than architectural.
The concrete 4D realization used in
\S\ref{sec:experiments} is deferred to App.~\ref{app:impl-4d}, a
packaging within the equivalence class of~\eqref{eq:4d-mask-def} that
inherits \Cref{thm:hard_equiv}.

\section{SDCC: Self-Distillation for Conditioning Consistency}
\label{sec:soft}

A compressed rollout leaves two trajectories of the \emph{same interaction}:
the \emph{live rollout trajectory}, whose contexts produced each token, and
the \emph{final compressed replay trajectory} $\hcomp$, which remains after
all edits and is used by Naive-Compressed for training. They agree until a
later edit rewrites a prefix. SDCC aligns their next-token distributions at
exactly those divergences: the student under the compressed replay prefix
matches a stopped-gradient teacher under the original live prefix. It aligns
conditional policy distributions rather than the two text sequences.

The exact methods in \Cref{sec:hard} achieve this alignment by scoring every
target on the branch on which it was generated. They are exact, but require
either $K{+}1$ backward passes or a custom attention kernel.
\emph{Self-Distillation for Conditioning Consistency} (SDCC) is the cheaper
alternative: it retains the usual single backward pass on the compressed
walk and aligns its predictions with the original live trajectory only at
the diverging leaves. SDCC is therefore an approximation to the exact walks,
not another way to materialize the tree.

\subsection{Aligning the live and compressed trajectories}
\label{sec:soft:method}
\label{sec:soft:form}

For a diverging response token $y_p$, write
\begin{equation}
  z_p := \tpath(y_p), \qquad x_p := \spath(y_p) \subsetneq z_p,
  \label{eq:sdcc-contexts}
\end{equation}
where $z_p$ is the live prefix from which $y_p$ was originally decoded and
$x_p$ is its prefix in the final compressed walk. We call $y_p$ a
\emph{diverging leaf}, and let $\Cset$ collect all such leaves in a
trajectory; each pair $(z_p,x_p)$ is one local alignment pair. In the
running example, if $y_5$ is evicted only
after $y_6$ was decoded, then $z_6$ contains $y_5$ whereas $x_6$ does not.
Naive-Compressed trains $y_6$ on $x_6$; the exact methods would instead
score it on $z_6$.

SDCC does not put $z_p$ back into the gradient-carrying training sequence.
Instead, it uses the policy evaluated on $z_p$ as a stopped-gradient
\emph{teacher}, and the policy evaluated on $x_p$ as the gradient-carrying
\emph{student}. Here ``teacher'' does not mean a second model: it is the
current policy with gradients stopped. Thus, at a diverging leaf, SDCC
teaches the compressed-context policy to reproduce the next-token
distribution it had under the original live context (\Cref{fig:method-overview}(d)).

The procedure has three steps. First, run the ordinary task forward pass on
the final compressed walk $\hcomp$; this is the student pass and is the only
pass through which gradients flow. Second, reconstruct the original live
prefix for each diverging leaf and evaluate it without gradients. For the
common case in which one eviction $\evictset{\mathrm{act}}(p)$ is active at
$p$, with $\iota(\evictset{\mathrm{act}}(p))$ its original insertion slot,
the reconstruction is
\begin{equation}
  \tpath(y_p) =
  \underbrace{\hcomp[:\iota(\evictset{\mathrm{act}}(p))]}_{\text{shared trunk}}
  \oplus
  \underbrace{\evictset{\mathrm{act}}(p)}_{\text{re-inserted span}}
  \oplus
  \underbrace{\hcomp[\iota(\evictset{\mathrm{act}}(p)) : p]}_{\text{post-junction suffix}}.
  \label{eq:teacher-reconstruction}
\end{equation}
With overlapping evictions, SDCC re-inserts all spans active at $p$;
Appendix~\ref{app:var:remarks} gives the general construction. Third, add a
KL penalty only at those diverging leaves:
\begin{equation}
  \mathcal{L}_{\mathrm{SDCC}} =
  \underbrace{\Ltask(\theta;\hcomp)}_{\text{ordinary compressed-stream task loss}}
  + \lambda \sum_{p \in \Cset}
  \KL\!\left(
    \underbrace{\Ptarget(\cdot\mid\tpath(y_p))}_{\text{live-context teacher; stop-grad}}
    \,\middle\|\,
    \underbrace{\Pmodel(\cdot\mid\spath(y_p))_p}_{\text{compressed-context student; gradient}}
  \right),
  \label{eq:soft_loss}
\end{equation}
where
$\Ptarget(\cdot\mid\tpath(y_p))=f_{\sg(\theta)}(\tpath(y_p))$ and
$\Pmodel(\cdot\mid\spath(y_p))_p=f_\theta(\hcomp)_p$. The KL is
\emph{forward}: the teacher distribution appears first, so it acts as a
fixed soft target and requires the student to cover outcomes that are likely
under the original live context.

This construction makes the computational trade-off explicit. The student
uses the same compressed input as Naive-Compressed, and the teacher legs are
gradient-free; SDCC therefore needs one backward pass per trajectory. The
penalty is \emph{leaf-gated}: if no context differs, $\Cset$ is empty and
the loss is exactly the usual task loss. Likewise, setting $\lambda=0$
recovers Naive-Compressed exactly, making it SDCC's matched baseline.
Because $\Hfull$ never enters the student input, SDCC does not introduce
Naive-Full's stale-context leakage. It instead softens Naive-Compressed's
time-travel error by matching distributions rather than by exactly replaying
every branch.

The variational derivation of the forward direction, including its
wake--sleep interpretation and the role of $\lambda$, is deferred to
Appendix~\ref{app:variational}. In brief, the forward KL has the same
zero set as the conditioning gap induced by the variational objective, while
giving a stopped-gradient cross-entropy update for the student.

\subsection{What the residual KL guarantees}
\label{sec:soft:bound}

Let $\varepsilon_p := \KL\!\left(
\Ptarget(\cdot\mid\tpath(y_p)) \,\middle\|\,
\Pmodel(\cdot\mid\spath(y_p))_p\right)$ be the residual left by
\Cref{eq:soft_loss} at diverging leaf $p$.

\begin{proposition}[Behavioral Pinsker bound on the conditioning gap]
\label{prop:pinsker-bound}
For every diverging leaf $p$,
\[
\left\|\pi_\theta(\cdot\mid\tpath(y_p))
      -\pi_\theta(\cdot\mid\spath(y_p))\right\|_{\mathrm{TV}}
\le \sqrt{\varepsilon_p/2}.
\]
\end{proposition}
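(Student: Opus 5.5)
The bound is a direct application of Pinsker's inequality, once we identify the two distributions inside the KL with the two policy conditionals in the statement. I would proceed in three steps.

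\textbf{Step 1: identify the teacher with the live-context policy.} By definition, $\Ptarget(\cdot\mid\tpath(y_p)) = f_{\sg(\theta)}(\tpath(y_p))$. The stop-gradient operator $\sg$ changes only how gradients propagate, not the forward value. Hence, as a distribution over the vocabulary at the current parameters, the teacher equals $\pi_\theta(\cdot\mid\tpath(y_p))$.

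\textbf{Step 2: identify the student with the compressed-context policy.} The student is $\Pmodel(\cdot\mid\spath(y_p))_p = f_\theta(\hcomp)_p$. Under causal attention, the logits at position $p$ of the packed sequence $\hcomp$ depend only on the prefix $\hcomp[:p] = \spath(y_p)$. So the student equals $\pi_\theta(\cdot\mid\spath(y_p))$. I expect this to be the only step needing care. It uses the causal mask, and it uses the fact that position ids on $\hcomp$ coincide with those the policy would assign to $\spath(y_p)$ as a standalone input; both hold by construction of the compressed walk. Together, Steps 1 and 2 give
\[
\varepsilon_p = \KL\!\left(\pi_\theta(\cdot\mid\tpath(y_p)) \,\middle\|\, \pi_\theta(\cdot\mid\spath(y_p))\right).
\]

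\textbf{Step 3: apply Pinsker.} For any two distributions $P,Q$ on the finite vocabulary, Pinsker's inequality gives $\|P-Q\|_{\mathrm{TV}} \le \sqrt{\KL(P\|Q)/2}$, with $\|\cdot\|_{\mathrm{TV}}$ the sup-over-events (half-$\ell_1$) norm. Total variation is symmetric, so the forward orientation of the KL, teacher first, causes no issue. Substituting $P=\pi_\theta(\cdot\mid\tpath(y_p))$ and $Q=\pi_\theta(\cdot\mid\spath(y_p))$ yields the claim.

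One degenerate case should be noted. If $\varepsilon_p=\infty$, because the student assigns zero mass where the teacher does not, the bound is vacuous but still true. Under softmax outputs all probabilities are strictly positive, so $\varepsilon_p$ is always finite.
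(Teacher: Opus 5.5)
Your proposal is correct and follows the paper's proof: both identify $\varepsilon_p$ as the forward KL from the live-context teacher to the compressed-context student, then apply Pinsker's inequality directly with no further assumptions. Your Steps 1--2 make that identification explicit: the stop-gradient leaves forward values unchanged, and causal masking gives $f_\theta(\hcomp)_p = \pi_\theta(\cdot\mid\hcomp[:p])$. This is slightly cleaner than the paper's remark that the teacher equals $p_\theta$ ``at convergence.''
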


Thus the regularizer controls a behavioral quantity, not merely a logit
diagnostic: the smaller the teacher--student KL at a junction, the smaller
the difference between their next-action distributions. If
$\varepsilon_p=0$, SDCC matches the exact walk's next-token distribution at
that leaf; for nonzero residual it remains an explicitly bounded
approximation. Appendix~\ref{app:variational} proves the proposition and
further gives the variational derivation, a policy-gradient-bias bound,
conditions for a zero-KL solution, and the convergence analysis.

\section{Experiments}
\label{sec:experiments}
We validate the proposed conditioning-inconsistency framework across
multiple model scales, white-box and black-box agent harnesses, and
multiple logit recomputation schemes. 
Across these settings, we consistently observe that harness-level
context editing introduces a measurable train--inference mismatch,
which manifests as elevated logit drift and degraded rollout reward
under naive training. In contrast, conditioning-consistent
recomputation substantially reduces this mismatch and improves
downstream agent performance.

\subsection{Experimental Setup}
\label{sec:setup}
We validate conditioning inconsistency across different backbone
models, agent harnesses, and training-time logit recomputation
schemes.
\par\medskip
\noindent\mbox{\ding{182} \textbf{Backbone Models.} We} use Qwen-family models, with Qwen3-4B and Qwen3.7-Air as the main
backbones.

\ding{183} \textbf{Harnesses.}
For white-box context editors, we use three representative
learnable-action editors:
\textbf{TC-RAG}~\citep{tcrag2024}, which emits \verb|pop()|;
\textbf{AgentFold}~\citep{agentfold2025}, which folds spans past a
length threshold; and
\textbf{MemexRL}~\citep{MemexRL2025}, which offloads spans to an
external store.
A \textbf{Search-R1} no-compression control emits tool calls but never
evicts, anchoring the no-compression drift floor. Knobs and
per-method inputs are given in
\S\ref{app:harness-knobs}--\ref{app:method-inputs}.
For black-box production-style harnesses, we instrument their HTTPS
traffic, recover the per-turn rendered context prefixes, and construct
the corresponding LogitTree views from these observed prefix states.
We evaluate two such harnesses: \textbf{Claude Code} and
\textbf{OpenCode}.

\ding{184} \textbf{Logit Recomputation Schemes.}
We compare replay on the final compressed stream
(\textbf{Naive-Compressed}), replay on the full physical trace
(\textbf{Naive-Full}), two exact conditioning-consistent replay
methods (\textbf{4D-Mask} and \textbf{LogitTree}), and our proposed
single-backward approximation \textbf{SDCC}.

\ding{185} \textbf{Datasets.}
\emph{Train:} a pooled composite-QA corpus of $81{,}638$ instances drawn
from \textsc{RedSearcher}~\citep{chu2026redsearcherscalablecostefficientframework}
and the \textsc{ASearcher} agentic-search training set, whose multi-hop
structure reliably induces long-horizon interaction and frequent context
editing.
\emph{Test:}
\textsc{NQ}~\citep{kwiatkowski2019nq},
\textsc{TriviaQA}~\citep{joshi2017triviaqa},
\textsc{HotpotQA}~\citep{yang2018hotpotqa},
\textsc{2Wiki}~\citep{ho20202wiki},
\textsc{MuSiQue}~\citep{trivedi2022musique},
\textsc{Bamboogle}~\citep{press2023selfask}, and
\textsc{FRAMES}~\citep{krishna2024frames}
($38{,}270$ questions per checkpoint, using the same live pipeline as
training).

\ding{186} \textbf{Search Infrastructure.}
All rollouts use live web tools rather than a local retrieval server.
Web search is backed by the online DashScope text-search API, which
returns up to 10 ranked results per query. For full-page evidence
extraction, we use a two-stage pipeline: retrieved pages are first
fetched via Firecrawl and then summarized by Qwen3-Turbo.

\ding{187} \textbf{Evaluation Metrics.}
We report the following quantities, each with a distinct role.
(1)~\textbf{Recorded-token \texttt{logdiff}}
$=
\big|\log\pi_{\mathrm{train}}(y|c^{\mathrm{train}}) -
\log\pi_{\mathrm{rollout}}(y|c^{\mathrm{rollout}})\big|$
is a per-token conditioning-fidelity diagnostic rather than an
efficacy metric; 
(2)~\textbf{Full-distribution conditioning KL}
$\mathrm{KL}(\pi(\cdot|H^{\mathrm{teacher}})\,\|\,
\pi(\cdot|H^{\mathrm{student}}))$
is the population-level counterpart, used in
\S\ref{sec:exp-q1} to measure the direction and magnitude of the
conditioning mismatch.
(3)~\textbf{SDCC training KL}
$\mathrm{KL}_{\mathrm{SDCC}}$
(\S\ref{sec:exp-grand-matrix}, \S\ref{app:sdcc-dynamics})
is SDCC's leaf-gated consistency regularizer, and is not intended as a
cross-method comparator.
(4)~\textbf{Rollout reward} measures the behavioral consequence of
conditioning mismatch during training and rollout.
(5)~\textbf{Compression depth}
$\mathrm{depth}=\mathbb{E}[c_i\,|\,c_i\!>\!0]$, for $c_i$ the fraction
of rendered context evicted in rollout episode~$i$, is how much an
editor removes when it fires. Its companion factor
$\mathrm{freq}=\Pr[c_i\!>\!0]$ is tabulated per cell in
\S\ref{app:compression-metrics} instead of here: what triggers a firing
is harness-specific, so $\mathrm{freq}$ compares down a column, not across.
(6)~Agentic \textbf{EM} is the primary downstream task metric and
determines the final correctness ranking in
\S\ref{sec:exp-q2}. Full definitions are given in
\S\ref{app:hparams}.

\ding{188} \textbf{Optimization.}
All cells are trained with \textbf{GRPO}: group size $G=16$, learning
rate $10^{-6}$, KL coefficient $\beta_{\mathrm{KL}}=10^{-3}$, and a
live-compression rollout pipeline, so training and deployment use the
same rollout setting. Each cell retains at least $400$ steps; a
frozen dev split of $1{,}325$ questions, disjoint from every test
benchmark, shortlists candidates for full-suite evaluation, and each
reported EM is the best full-suite result among the cell's fully
evaluated checkpoints (\S\ref{app:hparams}).

\subsection{Grand result matrix (overview)}
\label{sec:exp-grand-matrix}

\paragraph{Tree-consistent methods recover the drift floor; the two
Naives inflate with eviction.} Under live model-triggered compression
(\texttt{memory\_offload}, \texttt{pop} emitted by the policy;
AgentFold folds when the rendered context is $>$3k tokens), the
no-compression control sits at $\texttt{logdiff}=0.014$. The
discriminating quantity is the \emph{slope} against eviction frequency
rather than the level: where a cell barely evicts, every method sits at
the floor, baselines included. From each method's least- to its
most-evicting harness, Naive-Compressed moves $30.5\times$
($0.012\!\to\!0.366$ for freq $2.3\!\to\!28.6\%$) and Naive-Full
$9.7\times$ ($0.021\!\to\!0.203$, freq $6.5\!\to\!41.5\%$), whereas 4D
spans a comparably wide frequency range ($3.8\!\to\!49.2\%$) and moves
only $3.7\times$, ending at $0.022$; SDCC moves $6.5\times$ to $0.071$
and LogitTree is flat at $0.012$--$0.013$. In the two non-folding
editors all six tree-consistent cells sit at $0.006$--$0.013$, at or
below the control (\S\ref{sec:invariant}).
\Cref{tab:grand-matrix} reports the full training matrix.
\textsc{n/a}~= undefined.
\texttt{reward} and \texttt{logdiff} are read at each cell's
argmax-reward iteration and compression at run level, by one recipe
applied to every cell. Semantics: \S\ref{app:logdiff-semantics}.
For the untrained seven-benchmark diagnostic, compression depth is the
mean fraction removed conditional on an editor firing. ``Max tok.'' is the
largest single-request input length observed during rollout, rendered with
the same chat template and tool schema used by the adapter. The Base and
Search-R1 controls have depth $0.0$ by construction.

\begin{table*}[t]
\centering
\scriptsize
\setlength{\tabcolsep}{3.2pt}
\renewcommand{\arraystretch}{0.98}
\caption{\textbf{Grand result matrix (Qwen3-4B, live compression, real
web tools).} Five training methods across three white-box editors, plus
two black-box deployed agents. \texttt{logdiff} is measured at each
cell's maximum-reward rollout iteration, as the mean and max within
that iteration rather than over the run. Per-cell
compression frequencies are in \S\ref{app:compression-metrics}. For the
two black-box harnesses, context management is internal and eviction
spans are not exposed, so compression depth is \textsc{n/a}; their
observed maximum context lengths are reported, and junctions are recovered indirectly
(\S\ref{app:impl-blackbox}).}
\label{tab:grand-matrix}
\resizebox{\textwidth}{!}{%
\begin{tabular}{ll|c|cc|cc|ccccccc|c}
\toprule
& & reward $\uparrow$
  & \multicolumn{2}{c|}{\texttt{logdiff} $\downarrow$}
  & \multicolumn{2}{c|}{compression}
  & \multicolumn{7}{c|}{Agentic EM per bench (\%) $\uparrow$}
  & \\
Method & Harness & max
       & mean & max
       & depth (\%) & max tok.
       & NQ & TQA & HQA & 2Wiki & MSQ & Bamb. & Frames
       & avg EM $\uparrow$ \\
\midrule
\multicolumn{15}{c}{\textit{\textbf{Qwen3-4B (without RL)}}} \\
\midrule
\multirow{2}{*}{No-Compression}
  & Base
    & \textsc{n/a} & \textsc{n/a} & \textsc{n/a} & 0.0 & 289
    & 3.2 & 21.6 & 8.4 & 2.6 & 2.5 & 23.2 & 8.0 & 9.9 \\
  & Search-R1
    & \textsc{n/a} & \textsc{n/a} & \textsc{n/a} & 0.0 & 1{,}806
    & 2.3 & 22.2 & 8.7 & 2.5 & 2.7 & 18.4 & 8.4 & 9.3 \\
\midrule
\multirow{5}{*}{Compression}
  & TC-RAG
    & \textsc{n/a} & \textsc{n/a} & \textsc{n/a} & 2.00 & 2{,}117
    & 3.1 & 23.2 & 9.2 & 3.1 & 3.9 & 16.0 & 9.7 & 9.7 \\
  & AgentFold
    & \textsc{n/a} & \textsc{n/a} & \textsc{n/a} & 4.78 & 1{,}971
    & 2.0 & 21.3 & 7.3 & 2.5 & 2.9 & 13.6 & 6.1 & 7.9 \\
  & MemexRL
    & \textsc{n/a} & \textsc{n/a} & \textsc{n/a} & 12.58 & 1{,}670
    & 1.2 & 15.8 & 5.5 & 1.1 & 1.7 & 12.0 & 7.3 & 6.4 \\
  & Claude Code
    & \textsc{n/a} & \textsc{n/a} & \textsc{n/a} & \textsc{n/a} & 33{,}804
    & 22.1 & 50.4 & 23.2 & 29.8 & 7.4 & 37.6 & 13.7 & 26.3 \\
  & OpenCode
    & \textsc{n/a} & \textsc{n/a} & \textsc{n/a} & \textsc{n/a} & 34{,}285
    & 31.0 & 55.7 & 26.6 & 32.7 & 8.4 & 36.6 & 12.1 & 29.0 \\
\midrule
\multicolumn{15}{c}{\textit{\textbf{Qwen3-4B (with RL)}}} \\
\midrule
No-compression & Search-R1
  & 0.697 & \textbf{0.014} & 0.015 & 0.0 & 36{,}370
  & 19.8 & 57.6 & 19.9 & 11.1 & 7.9 & 32.0 & 12.7 & 23.0 \\
\midrule
\multirow{3}{*}{Naive-Full}
  & TC-RAG
    & 0.838 & 0.197 & 0.335 & \textbf{60.5} & 37{,}044
    & 30.1 & 64.4 & 28.2 & 34.1 & 12.9 & 52.0 & 22.8 & 34.9 \\
  & AgentFold
    & 0.600 & 0.203 & 0.333 & 78.2 & 14{,}190
    & 25.6 & 58.8 & 24.0 & 18.7 & 9.1 & 40.8 & 17.1 & 27.7 \\
  & MemexRL
    & 0.734 & 0.021 & 0.051 & 43.2 & \textbf{65{,}630}
    & 29.2 & 63.1 & 28.0 & 31.3 & 9.7 & 48.0 & 15.2 & 32.1 \\
\midrule
\multirow{3}{*}{Naive-Compressed}
  & TC-RAG
    & 0.850 & 0.015 & 0.041 & 54.3 & 38{,}171
    & 30.9 & 64.3 & 31.9 & 46.0 & 11.8 & 50.4 & 21.5 & 36.7 \\
  & AgentFold
    & \textbf{0.734} & 0.366 & 1.095 & 68.4 & \textbf{16{,}708}
    & 30.3 & 64.1 & 25.4 & 24.8 & 12.6 & 56.0 & 18.9 & 33.2 \\
  & MemexRL
    & \textbf{0.750} & 0.012 & 0.014 & 55.0 & 37{,}101
    & 23.9 & 60.5 & 22.9 & 20.5 & 9.7 & 47.2 & 17.6 & 28.9 \\
\midrule
\multirow{3}{*}{\textbf{4D mask (ours)}}
  & TC-RAG
    & 0.800 & \textbf{0.006} & \textbf{0.013} & 44.6 & \textbf{40{,}182}
    & 29.4 & 60.5 & 30.2 & 44.9 & 13.0 & 58.4 & 23.7 & 37.2 \\
  & AgentFold
    & 0.632 & 0.022 & 0.091 & 79.7 & 15{,}383
    & 33.4 & 66.0 & 31.9 & 43.9 & 11.1 & 37.6 & 15.8 & 34.2 \\
  & MemexRL
    & 0.416 & 0.013 & 0.017 & 45.4 & 36{,}528
    & 33.7 & 65.9 & 31.2 & 36.2 & 11.7 & 40.8 & 14.3 & 33.4 \\
\midrule
\multirow{5}{*}{\textbf{LogitTree (ours)}}
  & TC-RAG
    & 0.914 & 0.012 & \textbf{0.013} & 53.0 & 39{,}743
    & 36.2 & 65.6 & 36.1 & 54.1 & 13.9 & 55.2 & 19.3 & 40.1 \\
  & AgentFold
    & 0.629 & \textbf{0.012} & \textbf{0.018} & \textbf{83.3} & 16{,}595
    & \textbf{34.2} & \textbf{67.9} & 34.7 & 46.8 & 12.5 & 46.4 & 19.8 & 37.5 \\
  & MemexRL
    & 0.715 & 0.013 & 0.013 & \textbf{64.1} & 40{,}485
    & \textbf{37.1} & 69.1 & \textbf{42.0} & \textbf{61.5} & \textbf{19.1} & \textbf{64.0} & \textbf{28.5} & \textbf{45.9} \\
  & Claude Code
    & \textbf{0.839} & 0.016 & 0.016 & \textsc{n/a} & 65{,}167
    & 34.8 & 37.0 & 26.5 & 30.8 & \textbf{21.7} & \textbf{58.8} & \textbf{42.0} & 35.9 \\
  & OpenCode
    & \textbf{0.929} & \textbf{0.012} & \textbf{0.012} & \textsc{n/a} & 73{,}589
    & \textbf{34.5} & \textbf{63.2} & 33.7 & 39.2 & 16.0 & 42.8 & 15.9 & 35.0 \\
\midrule
\multirow{5}{*}{\textbf{SDCC (ours)}}
  & TC-RAG
    & \textbf{0.922} & 0.013 & 0.015 & 57.8 & 37{,}590
    & \textbf{37.8} & \textbf{70.2} & \textbf{42.9} & \textbf{63.8} & \textbf{17.7} & \textbf{60.2} & \textbf{29.6} & \textbf{46.0} \\
  & AgentFold
    & 0.713 & 0.071 & 0.156 & 73.3 & 9{,}416
    & 33.9 & 67.1 & \textbf{39.7} & \textbf{57.9} & \textbf{14.4} & \textbf{58.4} & \textbf{25.1} & \textbf{42.4} \\
  & MemexRL
    & \textbf{0.750} & \textbf{0.011} & \textbf{0.012} & 50.3 & 36{,}364
    & 37.0 & \textbf{69.2} & 39.5 & 56.9 & 15.6 & 59.2 & 24.5 & 43.1 \\
  & Claude Code
    & 0.648 & \textbf{0.015} & \textbf{0.015} & \textsc{n/a} & \textbf{82{,}779}
    & \textbf{37.2} & \textbf{67.7} & \textbf{34.1} & \textbf{40.7} & 20.8 & 42.8 & 19.5 & \textbf{37.5} \\
  & OpenCode
    & 0.717 & 0.013 & 0.013 & \textsc{n/a} & \textbf{76{,}807}
    & 32.7 & 62.3 & \textbf{35.4} & \textbf{41.5} & \textbf{17.0} & \textbf{47.7} & \textbf{21.5} & \textbf{36.9} \\
\bottomrule
\end{tabular}%
}
\end{table*}

\paragraph{Structural pattern of \Cref{tab:grand-matrix}.} Exact
walks recover the floor (LogitTree $0.012$--$0.013$, 4D
$0.006$--$0.022$; No-comp anchor $0.014$). The Naives separate on the
same axis: Naive-Compressed reaches $0.366$ on AgentFold, ${\approx}26\times$
the floor. Where a 4D maximum exceeds
LogitTree's it does so at the floor itself ($0.017$ vs.\ $0.013$ on
MemexRL), which does not contradict \Cref{thm:hard_equiv}: the theorem
equates gradients \emph{on the same batch}, whereas each row is an
independently trained run; the two \emph{means} --- the quantity the
invariant constrains --- agree at $0.013$.
SDCC's $\mathrm{KL}_{\mathrm{SDCC}}$ scales with teacher--student
mismatch, spanning three orders of magnitude across harnesses
($10^{-4}$ on MemexRL up to $0.356$ on AgentFold).
Every white-box trained row exceeds the Search-R1 no-compression
control at EM $23.0$ (vs.\ untrained-Base $9.9$). For the black-box
harnesses, SDCC reaches $37.5$ on Claude Code and $36.9$ on OpenCode.

\subsection{Q1: The two pitfalls fail in opposite directions}
\label{sec:exp-q1}

\paragraph{Both pitfalls exhibit their predicted signs on an untrained
model.} On an untrained Qwen3-4B, Naive-Comp under-places
($\Delta_{\mathrm{comp}}\!<\!0$) and Naive-Full-leak over-places
($\Delta_{\mathrm{full}}\!>\!0$) across all three white-box harnesses,
with the predicted sign holding for a majority of tokens in every
cell --- structural rather than
RL-absorbable. Per token,
$\Delta_{\mathrm{comp}} = \log p(y_t|\hcomp[:t]) -
\log p(y_t|\tpath(y_t))$ (diverging leaves) and
$\Delta_{\mathrm{full}} = \log p(y_t|c^{\mathrm{leak}}_t) -
\log p(y_t|\hcomp[:t])$ (post-eviction; $c^{\mathrm{leak}}_t$ reinjects
Naive-Full's training-only content); magnitudes are ranked
$\text{TC-RAG}\!\approx\!\text{AgentFold}\!\gg\!\text{MemexRL}$ for
$\Delta_{\mathrm{comp}}$ and are reversed for $\Delta_{\mathrm{full}}$, interpreted
at the sign level only. \Cref{fig:q1_dist}(b); \S\ref{app:q1-full}.

\subsection{Q2: SDCC $\approx$ LogitTree $\approx$ 4D $\gg$ both Naives}
\label{sec:exp-q2}

\paragraph{On the anchor cell, the exact walks pin the floor and
SDCC's \texttt{logdiff} decreases over the logged window.} On the
low-eviction MemexRL anchor at 4B, both exact walks sit at the no-comp
floor by construction (LogitTree $0.0133$, 4D $0.0140$ vs.\ Search-R1 $0.0135$;
Naive-Comp $0.0237$; cost multipliers in \Cref{tab:main}). On the
eviction-heavy AgentFold cell, SDCC's mean \texttt{logdiff} falls by
$55\%$ over the logged window and crosses below the same-window
Naive-Comp trace, which shows no trend. That comparison is on one
editor at one scale and we attach no confidence interval to it;
whether these conditioning-side
movements translate into learning is decided by EM, not
\texttt{logdiff}. Naive-Compressed is the $\lambda\!=\!0$ limit of
SDCC's objective: setting $\lambda\!=\!0$ reduces the loss to the
standard GRPO objective on the compressed walk $\hcomp$ (same
underlying policy-loss implementation), so the SDCC~vs.~Naive-Comp
comparison is the natural matched-baseline ablation of the junction
KL. \Cref{tab:main} and \Cref{fig:sdcc-convergence} close the argument;
training dynamics appear in \S\ref{app:sdcc-dynamics}.

\subsection{Q3: Cross-harness (white-box and black-box)}
\label{sec:exp-q3}

\paragraph{White-box ordering is editor-dependent.} The gap between the exact
walks and Naive-Comp tracks how much the editor rewrites. It is widest
on length-triggered AgentFold, where LogitTree and 4D fall far below
Naive-Comp ($0.012$/$0.022$ vs.\ $0.366$); it narrows on TC-RAG and
closes on the low-eviction MemexRL cell, where all five methods lie
within $0.011$--$0.021$ and no method is separated from the floor.
Naive-Full is elevated on the two eviction-heavy harnesses, opposite to
Pitfall~B. Per-cell values are in \Cref{tab:grand-matrix}.

\paragraph{Black-box transfer follows the same task-level ordering.}
On Claude Code, SDCC reaches $37.5$ average EM, above LogitTree's $35.9$;
on OpenCode, the corresponding values are $36.9$ and $35.0$.
The observed logit-drift values remain small in both harnesses
(Claude Code: $0.015$ for SDCC versus $0.016$ for LogitTree; OpenCode:
$0.013$ versus $0.012$). Because the two harnesses do not expose their
internal evictions, we compare them by downstream EM and observed maximum
context length rather than compression depth; the latter is therefore
reported as \textsc{n/a} in \Cref{tab:grand-matrix}.

\subsection{Large-scale evaluation on WideSearch}
\label{sec:exp-widesearch}

We additionally evaluate on the \textsc{WideSearch} benchmark~\citep{widesearch2025}
with the \textbf{Claude Code} harness. This setting is distinct from the
Qwen3-4B matrix: it uses \textbf{Qwen3.7-Air}, is trained on
\textbf{256 NVIDIA H100 GPUs}, and evaluates $200$ questions with $4$
independent trials per question ($800$ executions per model). We compare an
SFT checkpoint (\textbf{Base}) against \textbf{LogitTree} after $15$
training steps. Pass@1 is the success rate of the first trial
(``\texttt{\_1}''), while Pass@4 counts a question as successful if any of
its four trials succeeds.

\begin{table}[!htbp]
\centering
\small
\caption{\textbf{WideSearch results with Claude Code (200
questions, four trials each).} Pass@1 uses trial \texttt{\_1}; Pass@4
credits any successful trial. ``Row'' evaluates matching output rows and
``Item'' evaluates individual answer items. P/R denote precision/recall;
all Row/Item metrics are means over $800$ trial-level verifier outputs.}
\label{tab:widesearch}
\resizebox{\linewidth}{!}{%
\begin{tabular}{lcccccccc}
\toprule
Method & Pass@1 & Pass@4 & Row Precision & Row Recall & Row F1 & Item Precision & Item Recall & Item F1 \\
\midrule
Base (SFT) & 4.5 & 7.0 & 42.50 & 37.91 & 38.99 & 71.53 & 63.57 & 65.41 \\
LogitTree (15 steps) & \textbf{5.0} & \textbf{10.0} & \textbf{49.88} & \textbf{43.64} & \textbf{45.27} & \textbf{77.39} & \textbf{67.48} & \textbf{69.99} \\
Relative improvement & \textbf{+11.1\%} & \textbf{+42.9\%} & \textbf{+17.4\%} & \textbf{+15.1\%} & \textbf{+16.1\%} & \textbf{+8.2\%} & \textbf{+6.2\%} & \textbf{+7.0\%} \\
\bottomrule
\end{tabular}
}
\end{table}

LogitTree improves Pass@1 from $9/200$ ($4.5\%$) to $10/200$ ($5.0\%$),
or a relative $11.1\%$ improvement; Pass@4 rises from $14/200$ ($7.0\%$)
to $20/200$ ($10.0\%$), a relative $42.9\%$ improvement. \emph{Row
precision} is the fraction of predicted output rows that match a reference
row, whereas \emph{row recall} is the fraction of reference rows recovered;
they rise by $17.4\%$ and $15.1\%$, respectively. \emph{Item precision}
and \emph{item recall} apply the same definitions to individual answer
items, rising by $8.2\%$ and $6.2\%$. Thus the row- and item-level F1
scores improve by $16.1\%$ and $7.0\%$, respectively. These verifier
metrics complement binary success by giving partial credit for structured
multi-item answers.

\section{Conclusion and Future Work}
\label{sec:conclusion}
Viewing a compressed rollout as a \emph{conditioning tree} reveals a
distinct train--inference mismatch in editor-based agent RL. The two
default replay schemes fail in opposite directions: Naive-Compressed
conditions tokens on prefixes that are too short, while Naive-Full
conditions them on prefixes that are too long. Across different models,
white-box and black-box harnesses, and multiple logit recomputation
schemes, we show that this mismatch appears as measurable logit drift
and degraded rollout reward.
To address it, we present two exact conditioning-consistent solutions,
\textbf{LogitTree} and the \textbf{4D attention mask}, together with a
training-efficient approximation, \textbf{SDCC}. Exact replay restores
the no-compression drift floor, while SDCC substantially narrows the
gap with a single-backward objective.
Future work includes extending the framework to broader compression
policies (e.g., dropping tool observations or using latent summaries),
testing across a broader range of harnesses, and adapting exact tree-consistent
replay to sparse and linear-attention architectures.

\paragraph{Provenance and acknowledgment.}
Our white-box implementation builds on the \textsc{Slime} example stack~\citep{slime_github},
which already provided a TITO-style~\citep{gallouedec2026tito} loss hook
sketching the per-turn segmented objective---one forward per response
segment, scored against a reconstructed rollout-time prefix. We reuse
that scaffold and its interfaces, and thank its authors. What this paper
adds is the conditioning-tree formulation, the equivalence between
explicit branch materialization and the packed 4D mask, the SDCC
relaxation, the rollout-side splitting that makes exact replay
executable, and the measurements of \Cref{tab:grand-matrix}---not the
idea of replaying tokens under the prefix that produced them.

\bibliography{iclr2026_conference}
\bibliographystyle{colm2024_conference}

\appendix

\section*{Part I \; Position in the agentic-RL literature}
\addcontentsline{toc}{section}{Part I: Position in the agentic-RL literature}
\label{app:part-intro}
\label{app:position}
At the time of writing, published baselines for harness-RL training on
QA-style benchmarks (Search-R1 \citep{jin2025searchr1}, ReSearch
\citep{chen2025research}) all train on the physical trajectory $\histt$.
AgenticRag-R1 \citep{jiang2026agenticrag} and MEM1 \citep{mem1} have begun
exploring RL with stack-based or learned memory mechanisms for long-horizon
agents. Recent work in the long-horizon coding-agent literature (e.g., agentic
SWE-bench solvers) has also informally reported that training on edited
transcripts is harmful. To our knowledge, our work is the first to formalize
the conditioning invariant, characterize the failure modes that follow from
violating it, and propose a soft remedy with a provable bias bound.

\section*{Part II \; Method supplements: pitfalls, structure, proofs,
variational derivation, convergence}
\addcontentsline{toc}{section}{Part II: Method supplements}
\label{app:part-method}

\noindent This part supplies the technical scaffolding behind the
development of \S\ref{sec:failure_modes}--\S\ref{sec:soft}:
a worked tool-using rollout in which a single eviction places two
target spans on opposite sides of it, so that both pitfalls and the
mirror symmetry between them can be read off one concrete trajectory
(\S\ref{app:weather-case}); the trajectory-tree object itself
(\S\ref{app:logits-tree}); every proof stated in
\S\ref{sec:hard}--\S\ref{sec:soft}, restated so the appendix reads
self-contained (\S\ref{app:proofs}); and the end-to-end $\beta$-VAE
derivation of SDCC{} with its convergence analysis
(\S\ref{app:variational}, \S\ref{app:sdcc-convergence}).

\section{A worked example: one weather lookup, two pitfalls}
\label{app:weather-case}

\S\ref{sec:failure_modes} states both pitfalls over abstract tokens
$y_t$; here both sit on one rollout with one eviction, so the mirror
symmetry of \Cref{tab:bad-symmetry} is read off rather than derived.
\textbf{The rollout is constructed for exposition: it is not a logged
trajectory, and no quantity in it is a measurement.}

\paragraph{Setup.} TC-RAG's learnable \texttt{pop}
(\S\ref{app:impl-tcrag}) deletes an envelope outright, leaving no
summary behind to carry the forecast forward. Asked about an umbrella
for tomorrow's meeting in Shanghai, the policy calls
\texttt{get\_weather}, receives envelope $o_1$, and writes span $S$ ---
``\emph{rain is likely tomorrow afternoon --- bring an
umbrella}'' --- \textbf{while $o_1$ is in view}. The budget is then
exceeded, \texttt{pop} evicts the largest envelope $o_1$ at junction
$\junct{1}$ ($\evictset{1} = o_1$), and a later \texttt{get\_transit}
turn produces span $T$ \textbf{after the eviction}. Below, \emph{rollout} is the live
view a target was decoded under and \emph{training} the constructed
prefix the objective scores it under; the one row where they disagree
is the whole failure.

\subsection{\ding{182} Pitfall A on $S$: the model is taught that it
knows the weather}
\label{app:weather-pitfall-a}

\begin{tcolorbox}[breakable, colback=red!3, colframe=red!55!black,
  fonttitle=\bfseries, boxrule=0.7pt, arc=2pt,
  left=5pt, right=5pt, top=3pt, bottom=3pt, boxsep=2pt,
  title={Naive-Compressed, time-travel leakage:
         \textnormal{\itshape ``I already know the weather.''}}]
Naive-Compressed trains on the final compressed walk $\hcomp$, so $S$
is scored under $\spath(S)$: $\junct{1}$ fires \emph{after} $S$ is
decoded, yet its effect is already applied
(Eq.~\ref{eq:pitfall-A-symptom}, $\spath(S) \subsetneq \tpath(S)$).

\smallskip
{\footnotesize\renewcommand{\arraystretch}{1.05}%
\begin{tabular}{@{}l@{\hspace{0.7em}}l@{\hspace{1.6em}}c@{\hspace{1.1em}}c@{}}
& & \itshape rollout & \itshape training \\
& & $\tpath(S)$ & $\spath(S)$ \\
\midrule
\texttt{\textcolor{blue!55!black}{[user]}} & \texttt{umbrella for tomorrow's Shanghai meeting?} & \cmark & \cmark \\
\texttt{\textcolor{gray!60!black}{[call]}} & \texttt{get\_weather("Shanghai", +1d)} & \cmark & \cmark \\
\texttt{\textcolor{red!55!black}{[obs~]}} & \texttt{\textcolor{red!55!black}{weather envelope (afternoon rain)}} & \cmark & \textcolor{red!55!black}{\xmark} \\
\texttt{\textcolor{green!45!black}{[tgt~]}} & \texttt{$S$: "...rain tomorrow afternoon..."} & \multicolumn{2}{c}{\normalfont\itshape scored here} \\
\bottomrule
\end{tabular}}

\smallskip
\noindent The tool \emph{call} survives, only its \emph{result} is
gone, so $-\log \Pmodel(S \mid \spath(S))$ demands the forecast from a
prefix that no longer holds it; the only way to cut the term is to move
mass onto ``rain tomorrow afternoon'' as a \emph{prior over Shanghai
weather}. A tool-grounded assertion becomes a from-memory one, and the
deployed model states the forecast before reading the tool.
\textbf{Too little context.}
\end{tcolorbox}

\subsection{\ding{183} Pitfall B on $T$: the model is taught to read a
discarded context}
\label{app:weather-pitfall-b}

\begin{tcolorbox}[breakable, colback=red!3, colframe=red!55!black,
  fonttitle=\bfseries, boxrule=0.7pt, arc=2pt,
  left=5pt, right=5pt, top=3pt, bottom=3pt, boxsep=2pt,
  title={Naive-Full, stale-context leakage:
         \textnormal{\itshape ``I compressed that --- why is it still
         in my context?''}}]
Naive-Full trains on the depth-first tape $\Hfull$, which deletes
nothing, so $T$ is scored under the physical prefix $\hist_{<t}$, which
still holds the $o_1$ its live view had lost
(Eq.~\ref{eq:pitfall-B-symptom}, $\hist_{<t} \supsetneq \tpath(T)$).

\smallskip
{\footnotesize\renewcommand{\arraystretch}{1.05}%
\begin{tabular}{@{}l@{\hspace{0.7em}}l@{\hspace{1.6em}}c@{\hspace{1.1em}}c@{}}
& & \itshape rollout & \itshape training \\
& & $\tpath(T)$ & $\hist_{<t}$ \\
\midrule
\texttt{\textcolor{blue!55!black}{[user]}} & \texttt{umbrella for tomorrow's Shanghai meeting?} & \cmark & \cmark \\
\texttt{\textcolor{gray!60!black}{[call]}} & \texttt{get\_weather("Shanghai", +1d)} & \cmark & \cmark \\
\texttt{\textcolor{red!55!black}{[obs~]}} & \texttt{\textcolor{red!55!black}{weather envelope (afternoon rain)}} & \textcolor{red!55!black}{\xmark} & \cmark \\
\texttt{\textcolor{gray!60!black}{[span]}} & \texttt{$S$: "...rain tomorrow afternoon..."} & \cmark & \cmark \\
\texttt{\textcolor{gray!60!black}{[pop~]}} & \texttt{$\junct{1}$: $\evictset{1}$ = weather envelope} & \cmark & \cmark \\
\texttt{\textcolor{gray!60!black}{[turn]}} & \texttt{get\_transit(...) $\rightarrow$ transit envelope} & \cmark & \cmark \\
\texttt{\textcolor{green!45!black}{[tgt~]}} & \texttt{$T$: "Line 2 runs every four minutes..."} & \multicolumn{2}{c}{\normalfont\itshape scored here} \\
\bottomrule
\end{tabular}}

\smallskip
\noindent The gradient therefore rewards reading the forecast back
\emph{after} the \texttt{pop} --- a move the deployed agent cannot
make, because there the deletion is real. \textbf{Too much context.}
\end{tcolorbox}

\paragraph{The mirror.} On row \texttt{[obs] weather envelope},
Pitfall~A reads $(\cmark,\ \xmark)$ and Pitfall~B reads
$(\xmark,\ \cmark)$: A drops the forecast from a target that used it, B
keeps it for one that had lost it, so no reweighting of a single
serialization repairs both. \S\ref{sec:fm:symmetry} measures this ---
$\Delta_{\mathrm{comp}} = -22.8$ against $\Delta_{\mathrm{full}} =
+18.5$ nats on an untrained Qwen3-4B, opposite in sign and comparable
in magnitude. Those two numbers are measurements; the weather rollout
is not. Both defects vanish under one repair: score every target under
the live view it was decoded from
(Definition~\ref{def:conditioning-invariant}).

\section{The trajectory tree: a unified structural analysis}
\label{app:logits-tree}

This appendix collects, in one place, the tree-theoretic view
underlying every construction in the main text. \S\ref{sec:failure_modes}
introduces the running-example tree $\Treestruct$; \S\ref{sec:invariant}
states the conditioning invariant on its leaves; \S\ref{sec:hard} and
\S\ref{sec:soft} materialize that invariant with a 4D mask, with LogitTree,
and with SDCC. The purpose here is to make the object $\Treestruct$
explicit as a graph, count its components (junctions, leaves, branches),
identify where each method places gradient, and read off the compute
budget from the counts.

\subsection{Definition of the trajectory tree}

\begin{definition}[Trajectory tree]
\label{def:logits-tree}
Fix a rollout with physical trajectory $\Hfull = (x_1,\dots,x_T)$ and
eviction records $\{(J_k, \evictset{k})\}_{k=1}^K$, where the $k$-th
compression fires at physical position $J_k$ and removes the positions
$\evictset{k} \subset \{1,\dots,J_k\}$. The trajectory tree
$\Treestruct = (V, E, \phi)$ has a \emph{root} (the initial user
prompt), a \emph{junction} $\junct{k}$ for each $k$, and a \emph{leaf}
for each generated token $y_t$; edges are labeled by physical-token
spans, so following edges from the root to a node $v$ recovers the
physical prefix that produced $v$. Two colorings
$\phi_{\mathrm{spine}},\, \phi_{\mathrm{leg}}: V \to 2^{\{1,\dots,T\}}$
record the two conditioning sets of interest:
$\phi_{\mathrm{spine}}(v)$ is the prefix visible on the final
compressed walk, with \emph{every} eviction applied, and
$\phi_{\mathrm{leg}}(v)$ is the live view of \eqref{eq:live-view} ---
the prefix visible when $v$ was decoded, carrying exactly the evictions
that had fired by that step and no others. A node decoded at step $t$
is \emph{diverging} iff the two disagree,
\[
\phi_{\mathrm{leg}}(v)\setminus\phi_{\mathrm{spine}}(v)
\;=\; \bigcup_{j:\,J_j > t} \evictset{j} \cap \{1,\dots,t-1\}
\;\neq\; \emptyset ,
\]
i.e.\ iff some eviction that fires \emph{after} $t$ reaches back into
$v$'s prefix.
\end{definition}

Two regions of $\Treestruct$ are non-diverging by construction: nodes
whose prefixes precede every evicted span, and nodes decoded at or
after the final junction $\junct{K}$, for which all evictions have
already been applied and the tree \emph{re-converges} (cf.\ leaf
$y_{\mathrm{ans}}$ in \Cref{tab:running-prefixes}). Nodes decoded
within $[\junct{k-1}, \junct{k})$ form the $k$-th \emph{branch layer},
with $\junct{0} = 0$.

\paragraph{Counts.} Write $R$ for the number of response tokens and $D$
for the number of diverging leaves. Then $\Treestruct$ carries one
root, $K$ junctions, $R$ leaves, and $K{+}1$ root-to-leaf branches, and
each leaf has an eviction mass
$|\phi_{\mathrm{leg}} \setminus \phi_{\mathrm{spine}}|$ --- the tokens
it conditioned on at generation that its final-walk prefix no longer
contains. Only two of these numbers are read off downstream: the
junction depth $K$, which upper-bounds the number of independent
forwards a LogitTree-style method needs, and the diverging-leaf count
$D$, which is exactly the set of positions where SDCC places KL
gradients. \Cref{tab:tree-counts} reports both for the three white-box
harnesses at $T_{\max}\!=\!5$, so tree size can be read directly off
the harness knob. Two regularities matter later: the per-eviction mass
$|\evictset{k}|$ ranks inversely with $K$ (TC-RAG evicts large spans
rarely, MemexRL small slices often), and these are \emph{structural}
capacities under a forced schedule, not live firing rates --- MemexRL
has the deepest tree here but the \emph{lowest} model-triggered
eviction density in \Cref{tab:phase2-longrun}.

\begin{table}[t]
\centering\small
\caption{\textbf{Tree sizes induced by the three white-box harnesses}
on the running example (probe of \Cref{tab:q1}); $K$ =
junctions per rollout, $D$ = diverging leaves per rollout,
$|\evictset{k}|$ = mean evicted-span length per junction.}
\label{tab:tree-counts}
\begin{tabular}{lccc}
\toprule
Harness   & mean $K$ & mean diverging leaves $D$ & mean $|\evictset{k}|$ \\
\midrule
TC-RAG    & 0.87 & 29 & 156 \\
AgentFold & 1.94 & 58 & 84 \\
MemexRL    & 2.00 & 125 & 41 \\
\bottomrule
\end{tabular}
\end{table}
\subsection{Where each method places gradient on $\Treestruct$}

\begin{itemize}[leftmargin=*]
\item \textbf{Naive-Full} runs one forward on the union
$\Hfull$ and takes gradients at every response token position. On
$\Treestruct$ this collapses the tree into its depth-first traversal:
every leaf is conditioned on the \emph{entire} physical prefix
$\hist_{<t} \supseteq \phi_{\mathrm{leg}}(v)$, which contains all
evicted spans --- including spans that had already left the live view
when $v$ was decoded. The training conditioning strictly exceeds the
live view, so Naive-Full trains on information the policy never
had at any point in the rollout (Pitfall B, \S\ref{sec:fm:full}).
\item \textbf{Naive-Compressed} runs one forward on the compressed
walk $\hcomp$ and takes gradients at every response token. On
$\Treestruct$ this collapses all branches into the final walk: the
training forward sees $\phi_{\mathrm{spine}}(v)$, which at the $D$ diverging
leaves is a \emph{strict subset} of the live view
$\phi_{\mathrm{leg}}(v)$ that generated the token --- and, worse,
contains summary content created after $t$. The loss on those leaves
is therefore computed under conditioning the model never decoded
from (Pitfall A, \S\ref{sec:fm:comp}).
\item \textbf{LogitTree (ours, $K$-forward)} runs one forward per branch layer,
so $K\!+\!1$ forwards per rollout. Each forward covers a
root-to-leaf path with $\phi_{\mathrm{spine}} = \phi_{\mathrm{leg}}$
along the entire branch (because the branch's forward uses the
\emph{live view} as it existed during that layer). Gradients are placed on
the leaves of that branch only, so the union of LogitTree's
gradient-carrying positions equals the leaf set of $\Treestruct$
without double-counting.
\item \textbf{4D mask (ours, packed)} realizes the same partition as
LogitTree but in a \emph{single} forward: one structured mask admits, in
each query row, exactly the keys that were live when that token was
decoded, so $\phi_{\mathrm{spine}} = \phi_{\mathrm{leg}}$ holds row by
row and the branches share the tree trunk instead of being replayed.
Gradient placement is identical to LogitTree's; the difference is the pass
count: one forward instead of $K\!+\!1$.
\item \textbf{SDCC (ours)} runs one student forward on $\hcomp$ (with
gradient) plus one gradient-free teacher forward per branch layer that
holds a diverging leaf --- at most $K$, since the layer after
$\junct{K}$ re-converges. The policy-gradient loss remains on the
student's $R$-length response tokens. The consistency KL
$\KL(\pi_{\theta,\text{teach}}(\cdot\mid\histlogt) \,\|\,
\pi_{\theta,\text{stud}}(\cdot\mid\hcomp[:t]))$ is summed only over the
$D$ diverging leaves, whose positions are given by the
diverging-leaf mask constructed from the harness's
$\{J_k, \evictset{k}\}$ records. On $\Treestruct$, SDCC pulls the
student's per-leaf distribution toward the teacher's at exactly the
diverging positions, without touching the trunk. This is where the
$O(\sqrt{\epsKL})$ bound of \S\ref{sec:soft:bound}
applies: as SDCC's KL contracts, the student's leaf distributions
approach the teacher's, and the diverging-leaf gap closes.
\end{itemize}

\subsection{Compute budget as tree traversal cost}

Three lengths govern the budget: the union length $L = |\Hfull|$; the
compressed walk $C = |\hcomp| = L - |\bigcup_k \evictset{k}|$, shorter than $L$
by the token \emph{mass} evicted and not by the eviction \emph{count}; and
$\ell_i$, the live view in force during branch layer $i$. Consecutive branches
share the trunk, so $C \le \ell_i \le L$ --- with $\ell_{K+1} = C$, every
eviction having fired by the last layer --- and $\sum_{i=1}^{K+1} \ell_i \ge L$
with equality only at $K=0$: LogitTree re-reads the shared prefix once per
layer rather than splitting $L$ into $K\!+\!1$ disjoint pieces.
\Cref{tab:hard_compare} tabulates the resulting per-rollout pass and token
budgets next to the infrastructure each method demands.

Tokens are a proxy, not a cost: a forward over $n$ tokens costs
$F(n) = \Theta(n d^2 + n^2 d)$, and for Qwen3-4B the quadratic term overtakes
the linear one at $n \approx 12$k --- inside the range of contexts we observe
(\Cref{tab:grand-matrix}). A dense kernel scores the packed sequence in full
and discards the masked entries, paying $\Theta(L^2 d)$, while $K\!+\!1$
segmented forwards pay $\Theta(\sum_i \ell_i^2 d)$, so neither dominates --- it
turns on the evicted mass. The 4D mask's dependable advantage over LogitTree is
one backward instead of $K\!+\!1$ and one well-filled kernel launch instead of
$K\!+\!1$ short ones, not fewer FLOPs; a genuine FLOP saving needs a
block-sparse kernel. SDCC saves nothing on the forward: because
$\ell_{K+1} = C$, the student pass \emph{is} the last branch layer's, and
SDCC's token budget $C + \sum_{i=1}^{K}\ell_i$ equals LogitTree's
$\sum_{i=1}^{K+1}\ell_i$ term for term. The entire saving is the backward,
$1$ instead of $K{+}1$; and unlike either exact walk it gives up exactness,
down to $\Order(\sqrt{\epsKL})$.

\subsection{Per-leaf divergence and the two pitfalls}
\label{app:tree-notes-fm}
Recall that the editor fires repeatedly \emph{inside} the
rollout loop: at step $t$ the policy sees the live view
$\histlogt = \View_t(\hist_{\le t};\, \editset_{\le t})$ of
\eqref{eq:live-view} --- the physical history with exactly the edits
that have fired by step $t$ applied, and no others --- whereas the
length-$t$ prefix $\hcomp[:t]$ of the final compressed walk
retroactively applies \emph{future} edits $\editset_{>t}$ to a token
generated before those edits existed. \Cref{tab:running-prefixes}
instantiates the divergence for each leaf of the running example.

\begin{table}[h]
\centering\small
\caption{Per-leaf live-view prefix vs.\ final-walk prefix for the
running example ($\evictset{1} = \{y_5\}$ at $\junct{1} = 7$;
$\evictset{2} = \{y_8\}$ at $\junct{2} = 10$;
$\evictset{3} = \{y_{11}\}$ at $\junct{3} = 12$). The leaf whose two
prefixes coincide ($y_{\mathrm{ans}}$) does not need any correction.
Evicting one token per junction keeps each row on a single line;
widening a span enlarges $\evictset{k}$ and each leaf's eviction mass,
but changes neither which leaves diverge nor where the tree
re-converges.}
\label{tab:running-prefixes}
\begin{tabular}{c|l|l|l}
\toprule
Leaf $y_t$ & Live-view prefix $\tpath(y_t)$ & Final-walk prefix $\spath(y_t)$ & Diverges by \\
\midrule
$y_6$   & $\{y_1,y_2,y_3,y_4,y_5\}$              & $\{y_1,y_2,y_3,y_4\}$              & $+\,y_5$ \\
$y_7$   & $\{y_1,y_2,y_3,y_4,y_5,y_6\}$          & $\{y_1,y_2,y_3,y_4,y_6\}$          & $+\,y_5$ \\
$y_9$   & $\{y_1,y_2,y_3,y_4,y_6,y_7,y_8\}$      & $\{y_1,y_2,y_3,y_4,y_6,y_7\}$      & $+\,y_8$ \\
$y_{10}$& $\{y_1,y_2,y_3,y_4,y_6,y_7,y_8,y_9\}$  & $\{y_1,y_2,y_3,y_4,y_6,y_7,y_9\}$  & $+\,y_8$ \\
$y_{12}$& $\{y_1,y_2,y_3,y_4,y_6,y_7,y_9,y_{10},y_{11}\}$ & $\{y_1,y_2,y_3,y_4,y_6,y_7,y_9,y_{10}\}$ & $+\,y_{11}$ \\
$y_{\mathrm{ans}}$ & $\{y_1,y_2,y_3,y_4,y_6,y_7,y_9,y_{10},y_{12}\}$ & same & --- \\
\bottomrule
\end{tabular}
\end{table}

\paragraph{Pitfall A, SFT view (time-travel leakage).}
\begin{equation}
  \Lsft^{\mathrm{A}} \;=\; -\sum_t \log \Pmodel(y_t \mid \spath(y_t)) \;\neq\; -\sum_t \log \Pmodel(y_t \mid \tpath(y_t)) \;=\; \Lstar_{\mathrm{SFT}}.
\end{equation}
The model is fitted to a shorter-prefix distribution than the one
that generated the target token, while at inference the harness still
delivers the live view $\histlogt = \tpath(y_t)$ --- a prefix the
model was never trained to match.

\paragraph{Pitfall A, RL view (broken importance ratio).}
The rollout policy sampled $a_t \sim
\policyold(\cdot\mid\tpath(y_t))$, so the training ratio
\begin{equation}
  \impratio_t^{\mathrm{A}} \;=\; \frac{\policy(a_t\mid \spath(y_t))}{\policyold(a_t\mid \tpath(y_t))} \;\;\neq\;\; \frac{\policy(a_t\mid \tpath(y_t))}{\policyold(a_t\mid \tpath(y_t))} \;=\; \impratio_t
\end{equation}
invalidates PPO's clip guarantee and mis-weights GRPO's group-relative
advantage; empirically, the clip-rate map concentrates immediately
after each junction, precisely where the two prefixes diverge.

\paragraph{Pitfall B, SFT view (train--inference mismatch).}
\begin{equation}
  \Lsft^{\mathrm{B}} \;=\; -\sum_t \log \Pmodel(y_t \mid \hist_{<t}).
\end{equation}
Cross-entropy is computed using a prefix that carries strictly more
information --- including already-evicted content --- than the
deployed model will ever see.

\paragraph{Pitfall B, RL view (advantage on the wrong observation).}
Eventually-evicted tokens remain present during training, so they
still absorb a share of the outcome reward $r_{\mathrm{out}}/T$, and
the advantage baseline is fitted to $V(\hist_{<t})$ instead of
$V(\histlogt)$; the step-level credit-assignment error accumulates
linearly in $K$~\citep{anonymous2025agenticrag}:
\begin{equation}
  \Exp\bigl[\|B_e^{(K)}\|_2\bigr] \simeq \varepsilon_e \cdot K \cdot \Exp\bigl[\|\nabla_\theta \log\policy(a_e \mid s_e)\|_2\bigr],
  \label{eq:step-level-bias}
\end{equation}
where $B_e^{(K)}$ is the cumulative advantage-baseline bias at an
evicted-token position $e$ after $K$ eviction events occur along the
trajectory, and $\varepsilon_e$ is the per-event bias scale (the
misfit between $V(\histt)$ and $V(\histlogt)$ attributable to one
evicted span).

\Cref{tab:bad-symmetry} summarizes the mirror symmetry of the two
pitfalls; concrete per-leaf examples (a $\Delta_{\mathrm{comp}} =
-22.8$-nat leaf for Pitfall~A and a $\Delta_{\mathrm{full}} =
+18.5$-nat leaf for Pitfall~B, both on an untrained Qwen3-4B) are
provided in the main text (\S\ref{sec:fm:symmetry}).

\begin{table}[h]
\centering\small
\caption{Two symmetric failure modes of the same conditioning
invariant.}
\label{tab:bad-symmetry}
\begin{tabular}{l|cc}
\toprule
                                 & Naive-Compressed (A) & Naive-Full (B) \\
\midrule
Training input                   & $\hcomp$ (rightmost path)                 & $\Hfull$ (full DFS) \\
Prefix relation                  & $c^{\mathrm{train}} \subsetneq c^{\mathrm{infer}}$   & $c^{\mathrm{train}} \supsetneq c^{\mathrm{infer}}$ \\
Failure mode                     & time-travel leakage                                          & train--inference mismatch \\
Directional signature            & $c$ too short                                & $c$ too long \\
Intuition                        & ``train saw less than infer''                & ``train saw future info'' \\
Fixable by                       & any tree-consistent method                  & any tree-consistent method \\
\bottomrule
\end{tabular}
\end{table}

For Qwen3-4B under live model-triggered compression
(\S\ref{sec:experiments}, \Cref{tab:grand-matrix}), the mismatch is
directly visible in the \texttt{logdiff}
statistic: the No-compression (Search-R1) control remains at $0.014$
(numerical noise --- with nothing compressed, the live view equals
the physical history), whereas Naive-Compressed rises to
$0.024$--$0.059$ on average across the three editors and peaks at
$0.23$--$0.29$ on eviction-heavy batches (${\sim}20\times$ the
baseline), scaling monotonically with eviction density: the direct
empirical signature of $c^{\mathrm{train}} \neq c^{\mathrm{infer}}$,
present from the very first steps of training.

\paragraph{Correct gradients.} Under the invariant, both the
supervised loss and the policy gradient condition uniformly on
$\histlogt$ --- the \emph{observation} of the editor-induced POMDP
(\S\ref{sec:invariant}): the rollout and training policies in the
importance ratio must share this observation, and the advantage
baseline must approximate $V(\histlogt)$, not $V(\histt)$. A fix must
therefore restore $\nabla_\theta \widehat{\Ltask}(\theta) =
\nabla_\theta \Lstar_{\mathrm{task}}(\theta)$ exactly
(\S\ref{sec:hard}) or up to a quantifiable $\Order(\sqrt{\epsKL})$
bias at the loss level (\S\ref{sec:soft}).

\section{Proofs}
\label{app:proofs}

We proceed bottom-up: the per-branch tree decomposition
(\S\ref{app:proof-prop-tree}) is the primitive, the packed 4D
construction (\S\ref{app:proof-prop1}) reproduces it in one masked
forward, and their equivalence (\S\ref{app:proof-thm1}) follows. Every proof opens
with a one-line sketch. Throughout, $\tpath(y_t) = \histlogt$ is the
live-view root-to-leaf prefix of leaf $y_t$ in the conditioning tree
$\Treestruct$ of \cref{def:logits-tree}, and $\spath(y_t) = \hcomp[:t]$
its prefix on the final compressed walk.

\subsection{Per-branch equivalence on the tree (\cref{prop:sft_tree,prop:rl_tree})}
\label{app:proof-prop-tree}

\begin{proposition}[SFT and RL equivalence on the tree]
\label{prop:sft_tree}
\label{prop:rl_tree}
Let $\Lsft^{\mathrm{tree}}$ be the summed cross-entropy over the $K{+}1$
per-branch forwards, each loss-masked to its own branch. Then
$\nabla_\theta \Lsft^{\mathrm{tree}} = \sum_\pi \nabla_\theta
\Lstar_{\mathrm{SFT}}(\histlog^\pi)$. For RL, the importance ratio,
advantage baseline, and KL regularizer are computed on the same
$\histlog^\pi$ during rollout and update.
\end{proposition}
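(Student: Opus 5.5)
The argument is bookkeeping on the tree $\Treestruct$ of \cref{def:logits-tree}. I first fix what each branch contains. For each branch $\pi\in\{1,\dots,K+1\}$ let $\histlog^\pi$ be the live view in force during branch layer $\pi$, i.e.\ the tokens decoded in $[\junct{\pi-1},\junct{\pi})$ together with their live prefix. Let $m^\pi_t\in\{0,1\}$ be the per-branch token loss mask of \S\ref{sec:hard:tree}. The first step is to show that $\{m^\pi\}$ partitions the response tokens: every $y_t$ has $m^\pi_t=1$ for exactly one $\pi$, namely the layer containing its decode step. This uses the counting facts of the trajectory-tree appendix: layers are disjoint and cover $[0,T)$, and for $y_t$ in layer $\pi$ the prefix of $y_t$ inside $\histlog^\pi$ equals $\View_t(\hist_{\le t};\editset_{\le t})=\tpath(y_t)$. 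The reason is that no edit fires strictly inside a layer, so the edits applied are exactly those with $\junct{k}\le t$.

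The SFT claim is then linearity plus causal attention. Under a standard causal forward on the isolated sequence $\histlog^\pi$, the logit at the position of $y_t$ depends only on its predecessors in that sequence, which form $\tpath(y_t)$. So that logit is identical to $\Pmodel(\cdot\mid\histlogt)$. Hence
\[\Lsft^{\mathrm{tree}}=\sum_\pi\sum_t m^\pi_t\bigl(-\log\Pmodel(y_t\mid\tpath(y_t))\bigr)=\sum_\pi \Lstar_{\mathrm{SFT}}(\histlog^\pi),\]
where $\Lstar_{\mathrm{SFT}}(\histlog^\pi)$ denotes the branch-restricted target loss. Differentiating term by term gives the gradient identity. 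The same partition shows that summing over $\pi$ recovers $\Lstar_{\mathrm{SFT}}$ of \S\ref{app:tree-notes-fm} exactly, with each token counted once. Trunk tokens are replicated across branches but are masked everywhere except their own branch, so no $K{+}1$ overcounting occurs.

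For RL I apply the same per-token identity to each ingredient. The importance ratio uses $\policy(a_t\mid\tpath(y_t))/\policyold(a_t\mid\tpath(y_t))$, and the denominator is the logged rollout log-prob, which was computed on $\histlogt$ by \eqref{eq:live-view}. The per-token KL regularizer is computed from the same logits. The advantage is a trajectory-level (GRPO) quantity, so it is branch-independent; any value baseline is evaluated at the same row and therefore on $\histlogt$. Substituting into the clipped surrogate and summing with the partition gives the per-branch gradient decomposition exactly as in the SFT case.

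I expect the main obstacle to be the first step, specifically the claim that the tokens of $\histlog^\pi$ before $y_t$ coincide with $\View_t(\hist_{\le t};\editset_{\le t})$ for every $t$ in the layer, including position ids and chat-template tokens. With summarizing editors, a summary inserted at $\junct{\pi-1}$ must appear in the branch exactly as it was rendered at decode time. I would handle this by defining $\histlog^\pi$ directly as the logged live view at the last decode step of layer $\pi$ and then invoking the prefix-stability of views within a layer, which holds because edits fire only at junctions. The remaining steps are routine linearity.
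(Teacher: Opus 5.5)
Your proposal is correct and follows essentially the same route as the paper: the per-branch loss masks partition the response tokens, and the causal prefix of $y_t$ on its own branch is exactly $\tpath(y_t)$. Regrouping $\sum_t -\log \Pmodel(y_t\mid\tpath(y_t))$ by branch is therefore an identity that commutes with $\nabla_\theta$, and the RL quantities inherit the same conditioning token by token. You go slightly further by justifying $\histlog^{\pi(y_t)}[:t]=\tpath(y_t)$ via prefix-stability of the live view between junctions, which the paper's proof simply asserts.
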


\begin{proof}
\emph{Sketch: the $K{+}1$ branches partition the loss-carrying leaves, so
regrouping the per-leaf loss by branch is an identity, not an
approximation.}

Each $y_t$ lies on exactly one branch $\pi(y_t)$ --- the one live when it
was produced --- so $\histlog^{\pi(y_t)}[:t] = \tpath(y_t)$, and the
per-branch loss masks are disjoint and jointly exhaustive on the response.
Regrouping $\sum_t -\log \Pmodel(y_t \mid \tpath(y_t))$ by branch returns
$\Lsft^{\mathrm{tree}}$ with no leaf double-counted or missed, and the
finite sum commutes with $\nabla_\theta$. For \Cref{prop:rl_tree}, the
importance ratio, advantage weight and KL regularizer at $y_t$ are all
functions of $\Pmodel(\cdot\mid\tpath(y_t))$, which the branch-$\pi$
forward evaluates under exactly the rollout-time conditioning.
\end{proof}

\subsection{One packed forward reproduces the tree (\cref{prop:sft_4d,prop:rl_4d})}
\label{app:proof-prop1}

\begin{proposition}[SFT and RL equivalence for the 4D mask]
\label{prop:sft_4d}
\label{prop:rl_4d}
Suppose (i) $M^{\mathrm{logical}}$ exactly encodes logical visibility,
(ii) position ids are reassigned for each query row, (iii) loss and
attention masks are decoupled, (iv) no normalization crosses mask
boundaries, and (v) rollout and training share the same
$M^{\mathrm{logical}}$. Then, under dense softmax attention,
$\nabla_\theta \Lsft^{\mathrm{4D}} = \nabla_\theta \Lstar_{\mathrm{SFT}}$,
and the importance ratio and KL regularizer are computed under identical
conditioning during rollout and update.
\end{proposition}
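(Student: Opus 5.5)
The plan is to show that, for every loss-carrying position $i$ in the packed sequence $\Hfull$, the hidden state computed by the masked forward at row $i$ equals, layer by layer, the hidden state that an ordinary causal forward computes on the live view $\tpath(y_i)=\histlog_i$. Once per-token logits coincide, the loss and gradient identities follow by linearity, as in the regrouping argument behind \Cref{prop:sft_tree}.

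\emph{Step 1: the key set.} Using (i) and the definition of $M_{i,j}$ in~\eqref{eq:4d-mask-def}, the set of keys $j$ with $M_{i,j}=0$ is $\{j\le i:\tau(j)\le i<\omega(j)\}$. These are exactly the tokens born before row $i$ and not yet evicted at row $i$, i.e.\ the positions of $\View_i(\hist_{\le i};\editset_{\le i})=\histlog_i$ (\Cref{def:logits-tree}, $\phi_{\mathrm{leg}}$). The same lifetime test applied to any visible key $j$ yields $\histlog_j$, the live view at $j$'s own decode step.

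\emph{Step 2: induction on layers.} Per-token operations (embeddings, MLPs, RMSNorm, output head) act row-wise, and by (iv) nothing normalizes across rows. In dense softmax attention, a $-\infty$ entry contributes $e^{-\infty}=0$ both to the numerator and to the normalizer. Row $i$ is therefore a softmax over exactly the keys of Step~1, with the same queries and keys as in the unpacked run, provided two conditions hold. First, the rotary/position ids agree, which is what (ii) guarantees: visible keys receive gap-free logical positions $0,\dots,|\histlog_i|-1$. Second, the key/value vectors at each visible $j$ agree with those in the branch forward.

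The second condition is the \emph{main obstacle}. In the packed forward, the value at $j$ was computed on $\histlog_j$. In the branch forward on $\histlog_i$, it was computed on the prefix of $\histlog_i$ up to $j$. These coincide only if the prefix of the live view at $i$ restricted to positions $<j$ equals the live view at $j$. Evictions are removals from the past, so this holds when no span evicted in $(j,i]$ lies before $j$. I would handle this by invoking (v) together with the tree structure. A visible $j$ on the branch of $i$ lies on the same root-to-leaf path of $\Treestruct$, and along a path the tree prefix up to $j$ is $\tpath(y_j)$; hence both computations condition $j$ on the same prefix. This is where I would also need position ids to be consistent across the two rows, and (ii) has to be read as assigning $j$'s positions in a row-independent way along its branch. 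With these inputs, induction over layers gives identical hidden states at every visible $j$ and at $i$.

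\emph{Step 3: losses, gradients, RL.} The identical logits give $\log P_\theta(y_i\mid\text{row } i)=\log\Pmodel(y_i\mid\histlog_i)$. By (iii) the loss mask selects each response token once, independently of attention, so $\Lsft^{\mathrm{4D}}=\sum_i-\log\Pmodel(y_i\mid\histlog_i)=\Lstar_{\mathrm{SFT}}$ as functions of $\theta$. The gradients therefore agree, since equal functions have equal derivatives and the masked entries have zero weight and so contribute no gradient. For RL, the importance ratio and KL term at $y_i$ are functions of $\Pmodel(\cdot\mid\histlog_i)$ on the training side. By (v) the rollout used the same $M^{\mathrm{logical}}$, hence the same conditioning $\histlog_i$, which gives the claim.
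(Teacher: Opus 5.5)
Your Step~1 is fine, and Step~2 identifies the right crux. The way you resolve it is false, however, and the layer induction fails with it. The claim that ``along a path the tree prefix up to $j$ is $\tpath(y_j)$'' breaks down exactly at diverging leaves.

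Take the running example (\Cref{tab:running-prefixes}). $y_7$ was decoded before $\junct{1}$ with $y_5$ in view, so in the single-copy packed forward the query at $y_6$'s slot must attend to $y_5$. From the second layer on, the keys and values stored at that slot therefore depend on $y_5$. The queries that score $y_9$ and $y_{10}$ read exactly those keys and values. On their own branch $\histlog_9=(s,y_1,\dots,y_4,y_6,y_7,y_8)$, though, $y_6$ follows $y_4$ directly. Its prefix there is $\spath(y_6)\subsetneq\tpath(y_6)$, which is precisely why $y_6$ is a diverging leaf. One physical copy of $y_6$ carries one hidden state per layer but would need two.

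Neither hypothesis you invoke can supply the second one. A mask only decides which stored states a row reads, not how those states were formed, so (i) cannot help. Nor can (v): reading it as ``the rollout kept $y_6$'s stale cache entries'' would align training with that rollout. It would still differ from $\Pmodel(\cdot\mid\histlog_9)$ computed as an ordinary causal forward on the live-view sequence, which is what $\Lstar_{\mathrm{SFT}}$ and LogitTree use, and what the engine computes when it re-prefills the rewritten payload.

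Position ids fail in the same way. The query at $y_6$'s slot needs $p(y_6)-p(y_4)=2$, while the query scoring $y_9$ needs $p(y_6)-p(y_4)=1$. So the row-independent reading of (ii) that you require cannot be realized with a single copy.

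Consequently, for the single-copy lifetime mask of Eq.~\ref{eq:4d-mask-def}, $\Lsft^{\mathrm{4D}}$ and $\Lstar_{\mathrm{SFT}}$ differ for generic $\theta$ whenever an eviction removes material preceding a token that stays visible. That is the normal mode of all three white-box editors:
\begin{itemize}
\item TC-RAG pops the oldest envelope while later ones survive;
\item AgentFold keeps a summary decoded with the folded prefix in view;
\item MemexRL offloads earlier spans.
\end{itemize}

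The paper's own proof does not supply the missing step either. It fixes a single deletion $E_t$, masks it in \emph{every} row, and compares all non-evicted rows against the one compact sequence $\histlogt$. With a row-independent visible set the layer induction is consistent. But that uniform mask also withholds $E_t$ from rows decoded before the edit, so it is a deletion, not the lifetime mask.

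What your Steps~2--3 actually establish is the statement for the branch-replicated packing of App.~\ref{app:packed-vs-union}, which the training loop executes. There each trunk token has one copy per branch, the mask is block-diagonal, and position ids restart per branch. Each block is then an ordinary causal forward on $\histlog^{\pi}$, so your Step~2 becomes immediate: every key is read from its own branch's copy. Step~3 then goes through via the partition argument of \Cref{prop:sft_tree}.

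Alternatively, you can keep the physical union and promote the condition you isolated to an explicit hypothesis: no span evicted in $(j,i]$ lies before a key $j$ still visible at $i$. That version is provable, but it excludes every harness studied here.
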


\begin{proof}
\emph{Sketch: a masked query row and its image in the compact sequence
attend to the same keys at the same relative offsets, hence produce the
same logits; induction over layers and the shared loss mask lift this to
gradient equality.}

Let $\edit$ delete $E_t \subset \histt$ and let $M$ implement
$M^{\mathrm{logical}}$ --- the $(T\times T)$ mask of \S\ref{sec:hard:4d}
admitting key $j$ in row $i$ iff $j$ was live when $y_i$ was decoded ---
so $M_{ij} = -\infty$ for $j \in E_t$, $j<i$, and causal otherwise. By
(iii) it suffices to prove $\logits_i(\histt, M) = \logits_i(\histlogt)$
for $i \notin E_t$. Write $i',j'$ for the images of $i,j$ under the
deletion, so the visible set $V_i := \{j<i : j \notin E_t\}$ maps onto the
prefix of the compact sequence $\histlogt$. Row $i$ mixes only over $V_i$
by construction, and by (ii) the reassigned position ids give $(i,j)$ the
same relative offset as $(i',j')$; since RoPE --- or any
relative-positional encoding --- depends only on that offset,
$q_i^\top k_j = q_{i'}^\top k_{j'}$, so softmax weights and mixed values
agree. Condition (iv) closes the only remaining channel of difference,
per-token operations reading across mask boundaries (a layer norm reduced
over an unmasked axis, MoE routing pooling evicted positions), so the
residual stream at row $i$ equals the compact-forward stream at $i'$.
Induction over layers carries this to the unembedding, and the shared loss
mask turns logit equality into the gradient identity.
\end{proof}

\subsection{Materialization equivalence (\cref{thm:hard_equiv})}
\label{app:proof-thm1}

\emph{(Recalled.)} Under the assumptions of \cref{prop:sft_4d}, the
4D-masked single forward on the physical union $\Hfull$
(\S\ref{sec:hard:4d}) and the $K$-segment logits-tree forward produce
the same logits at every non-evicted position.

\begin{proof}
\emph{Sketch: both materializations are the same scalar function of
$\theta$, and identical functions have identical gradients.}

By \cref{prop:sft_4d} each construction computes $\logits_i(\histlogt)$ at
every loss-carrying position under the identical loss mask; they differ
only in cost (union mask: one attention call with a $(B,1,T,T)$ structured
mask; segmented walk: $K{+}1$ separate forwards). Both losses are therefore
the \emph{same scalar function of $\theta$} --- the leaf sum
$-\sum_t \log \Pmodel(y_t \mid \tpath(y_t))$ of \cref{prop:sft_tree} ---
so their gradients coincide up to floating-point association order.
\end{proof}

\section{Full Variational Derivation of the SDCC{} Forward-KL
Self-Distillation Loss}
\label{app:variational}

This appendix derives the SDCC{} objective (\S\ref{sec:soft},
Eq.~\eqref{eq:soft_loss}) as a forward-KL conditioning-consistency
(self-distillation) regularizer, using amortized variational inference
as a scaffold. The correctness of SDCC{} does not rest on the ELBO
being tight: it rests on the zero set of the KL residual being exactly
the conditioning invariant (\S\ref{app:var:gap}), and on that residual
controlling deployment behavior (\S\ref{app:var:tightness}).
\S\ref{sec:soft:method} gives the main-text algorithmic view; here we
state the assumptions and give the proofs.

\emph{Why the forward direction.} The slack in the ELBO is a
\emph{reverse} KL, and descending it directly is the wake-phase update
of amortized inference \citep{hinton1995wakesleep,bornschein2015rws}.
At the sequence level that requires sampling whole continuations from
the \emph{compressed} context: the resulting score-function estimator is
weighted by $\log(q_\theta/p_\theta)$, so its variance is unbounded on
the surplus side, while the deficit side --- which is Pitfall~A itself
--- is almost never drawn from the compressed proposal and is therefore
invisible to the update. SDCC{} instead descends the forward-KL
(sleep-phase) surrogate, which samples from the well-behaved teacher
side, has a plain bounded-variance cross-entropy gradient, and, as
\S\ref{app:var:gap} shows, has exactly the same zero set. The four
subsections follow that substitution: \S\ref{app:var:setup} fixes the
probabilistic model, \S\ref{app:var:elbo} derives the bound the KL comes
from, \S\ref{app:var:gap} shows the forward direction loses nothing, and
\S\ref{app:var:tightness} bounds what a nonzero residual costs at
deployment.

\subsection{Probabilistic model and the conditioning invariant}
\label{app:var:setup}
\label{app:var:remarks}

Fix a diverging leaf position $p \in \Cset$. All quantities below are
tied to this particular $p$; sums over $\Cset$ are taken at the end.
Write $\mathcal{A}(p) = \{k : \iota(\evictset{k}) \le p \le
e(\evictset{k})\}$ for the evictions active at $p$. The common case is
$|\mathcal{A}(p)| = 1$; when several evictions are active the teacher
context re-inserts every span in $\mathcal{A}(p)$ and nothing below
changes.

\paragraph{Random variables.}
\begin{itemize}[leftmargin=*]
  \item \textbf{Clean context} $z_p := \tpath(y_p)$: the pre-eviction
        prefix that was live at rollout time and that the harness will
        \emph{again} deliver at deployment time. It is an
        \emph{externally supplied} conditioning variable, never
        marginalized out: at inference the harness passes it directly to
        the policy.
  \item \textbf{Compressed context} $x_p := \spath(y_p) \subsetneq z_p$:
        the post-eviction prefix produced by the memory editor, which
        has removed every span in $\mathcal{A}(p)$.
  \item \textbf{Action} $y \in \mathcal{V}$: the next generated token.
  \item \textbf{Outcome} $y^{\star}$, with likelihood
        $p(y^{\star} \mid y)$: an oracle target for $y$. We take the RL
        reading as primary, in which $\log p(y^{\star}\mid y) = r(y)/\tau$
        is the exponentiated-advantage log-likelihood of
        control-as-inference \citep{levine2018rlinference}; the SFT
        reading, $\log p(y^\star\mid y) = \log \mathbf{1}[y = y^{\star}]$,
        is its $\tau \to 0$ limit.\footnote{The hard-indicator limit is
        degenerate as a variational target: whenever $q_\theta$ places
        mass off $y^{\star}$ the bound \eqref{eq:app:elbo} reads
        $-\infty \ge -\infty$ and is vacuously satisfied. Every
        statement below is made at finite $\tau$, where the expressions
        remain finite.}
\end{itemize}

\paragraph{Distributions.}
\begin{align}
  p_\theta(y \mid z_p) &:= \Pmodel\bigl(\cdot \mid \tpath(y_p)\bigr)
    && \text{deployed / target policy (teacher forward),}\\
  q_\theta(y \mid x_p) &:= \Pmodel\bigl(\cdot \mid \spath(y_p)\bigr)_p
    && \text{amortized recognition policy (student forward),}\\
  p(y^{\star} \mid y) && & \text{outcome likelihood given action.}
\end{align}
The two policies share parameters $\theta$; the amortization is
\emph{parametric}, not architectural. The conditioning invariant of
\S\ref{sec:invariant} is the statement that this amortization is exact,
\begin{equation*}
  q_\theta(\cdot \mid x_p) \;=\; p_\theta(\cdot \mid z_p)
  \qquad \forall\, p \in \Cset.
  \tag{CI}
  \label{eq:app:var:ci}
\end{equation*}
\eqref{eq:app:var:ci} is not a modeling assumption --- it is the
condition we want to \emph{achieve} through training, and everything
that follows is an argument that the SDCC{} penalty achieves it.

\subsection{The ELBO and where the forward KL comes from}
\label{app:var:elbo}
\label{app:var:wake}
\label{app:var:beta}

This subsection does three things. It derives the variational bound
whose slack is the conditioning gap and shows that slack to be an exact
KL divergence; it identifies the multiplier $\lambda$ of
\eqref{eq:soft_loss} as the $\beta$ of a $\beta$-VAE free energy, which
fixes both its interpretation and its failure mode at large values; and
it shows why that slack cannot be descended in the direction the bound
presents it in --- the fact that forces the forward-KL substitution
justified in \S\ref{app:var:gap}.

\begin{proposition}[Single-step ELBO for the amortized policy]
\label{prop:app:elbo}
For every $\theta$ and every diverging leaf $p \in \Cset$,
\begin{equation}
  \log p_\theta(y^{\star} \mid z_p)
  \;\ge\;
  \Exp_{y \sim q_\theta(\cdot \mid x_p)}\!\bigl[\log p(y^{\star} \mid y)\bigr]
  \;-\;
  \KL\!\Bigl(q_\theta(\cdot \mid x_p) \,\big\|\, p_\theta(\cdot \mid z_p)\Bigr),
  \label{eq:app:elbo}
\end{equation}
with equality iff $q_\theta(\cdot \mid x_p) = p_\theta(\cdot \mid y^{\star}, z_p)$
almost everywhere.
\end{proposition}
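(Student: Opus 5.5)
This is the standard evidence lower bound with a finite vocabulary, so every expectation is a finite sum. I would prove it through an exact decomposition, not through Jensen, because the decomposition yields the equality case at the same time. Define the marginal $p_\theta(y^\star\mid z_p) := \sum_{y\in\mathcal{V}} p_\theta(y\mid z_p)\,p(y^\star\mid y)$; this is the quantity on the left of \eqref{eq:app:elbo}. Then define the posterior by Bayes' rule,
\[
p_\theta(y\mid y^\star,z_p) := \frac{p_\theta(y\mid z_p)\,p(y^\star\mid y)}{p_\theta(y^\star\mid z_p)} .
\]
It is well defined because, at finite $\tau$, $p(y^\star\mid y)=\exp(r(y)/\tau)>0$, so the marginal is strictly positive (as footnoted, the statement is made at finite $\tau$).

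The key step is the identity
\[
\log p_\theta(y^\star\mid z_p) = \Exp_{q}\bigl[\log p(y^\star\mid y)\bigr] - \KL\bigl(q\,\|\,p_\theta(\cdot\mid z_p)\bigr) + \KL\bigl(q\,\|\,p_\theta(\cdot\mid y^\star,z_p)\bigr),
\]
where $q := q_\theta(\cdot\mid x_p)$. To obtain it, take the logarithm of the Bayes formula:
\[
\log p_\theta(y^\star\mid z_p) = \log p_\theta(y\mid z_p) + \log p(y^\star\mid y) - \log p_\theta(y\mid y^\star,z_p).
\]
The left side does not depend on $y$. Add and subtract $\log q(y)$, then average over $y\sim q$; the two KL terms appear directly. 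Support issues need one remark. If $q$ charges a $y$ with $p_\theta(y\mid z_p)=0$, then both KL terms are $+\infty$ and the bound holds trivially. Otherwise every term is finite, since $p_\theta(y\mid y^\star,z_p)$ and $p_\theta(y\mid z_p)$ share the same support. For softmax policies all probabilities are positive anyway, so this case is vacuous.

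Dropping the last KL term, which is nonnegative by Gibbs' inequality, gives \eqref{eq:app:elbo}. The gap equals exactly $\KL(q\,\|\,p_\theta(\cdot\mid y^\star,z_p))$. It vanishes iff $q=p_\theta(\cdot\mid y^\star,z_p)$ pointwise, by the equality case of Gibbs' inequality; on a finite $\mathcal{V}$, pointwise equality is the same as equality almost everywhere. Parameter sharing, the stop-gradient, and the inclusion $x_p\subsetneq z_p$ play no role, because the argument holds for an arbitrary distribution $q$.

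The only step I expect to require care is this support bookkeeping together with the degenerate $\tau\to0$ reading. I would treat both by noting that every term is finite at finite $\tau$, and by excluding the indicator limit as the footnote does.
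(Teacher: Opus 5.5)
Your proof is correct and is essentially the paper's. The paper first gets the bound by importance weighting with $q_\theta(\cdot\mid x_p)$ and Jensen, but proves the equality condition with exactly your identity: take the log of Bayes' rule, add and subtract $\Exp_{q}[\log q]$, and read off the gap $\KL\bigl(q_\theta(\cdot\mid x_p)\,\|\,p_\theta(\cdot\mid y^{\star},z_p)\bigr)$. It even remarks that this re-derives the bound without Jensen, so you are simply using that one decomposition for both claims, with support bookkeeping slightly more explicit than the paper's appeal to full softmax support.
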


\begin{proof}
Marginalizing the action gives the evidence $p_\theta(y^{\star} \mid
z_p) = \sum_y p(y^{\star}\mid y)\,p_\theta(y\mid z_p)$. Multiplying and
dividing by $q_\theta(y\mid x_p)$, rewriting the sum as a
$q_\theta$-expectation and applying Jensen to the concave logarithm,
\begin{align}
  \log p_\theta(y^{\star} \mid z_p)
  &\;=\; \log \Exp_{y \sim q_\theta}\!\left[
     \frac{p(y^{\star} \mid y)\, p_\theta(y \mid z_p)}{q_\theta(y \mid x_p)}\right]
  \;\ge\;
  \Exp_{y \sim q_\theta}\!\left[
    \log \frac{p(y^{\star} \mid y)\, p_\theta(y \mid z_p)}{q_\theta(y \mid x_p)}
  \right]
  \label{eq:app:jensen}\\[-1pt]
  &\;=\;
  \Exp_{y \sim q_\theta}\!\bigl[\log p(y^{\star} \mid y)\bigr]
  \;-\;
  \KL\!\bigl(q_\theta(\cdot \mid x_p)\,\big\|\,p_\theta(\cdot \mid z_p)\bigr),
  \notag
\end{align}
which is \eqref{eq:app:elbo}. The importance-weighting step needs the
proposal $q_\theta(\cdot\mid x_p)$ to dominate the integrand; this is
automatic here, since both distributions are softmax outputs of the same
network and so have full support on $\mathcal{V}$ (up to numerical
underflow). The reverse domination --- $p_\theta(\cdot\mid z_p)$
dominating $q_\theta(\cdot\mid x_p)$ --- is \emph{not} needed for the
bound, and is precisely what fails for the reverse-KL estimator below.

For the equality condition, observe that the Jensen gap is itself a KL
divergence. Bayes' rule at fixed $z_p$ reads $p_\theta(y\mid y^{\star},
z_p) = p(y^{\star}\mid y)\, p_\theta(y\mid z_p) / p_\theta(y^{\star}\mid
z_p)$, so $\log p_\theta(y^{\star}\mid z_p) = \log p(y^{\star}\mid y) +
\log p_\theta(y\mid z_p) - \log p_\theta(y\mid y^{\star}, z_p)$ for every
$y$ in the support. The left-hand side does not depend on $y$, so it is
unchanged by taking the $q_\theta$-expectation of the right-hand side;
adding and subtracting $\Exp_{q_\theta}[\log q_\theta(y\mid x_p)]$ and
regrouping the three resulting expectations gives the \emph{exact}
decomposition
\begin{equation}
\begin{aligned}
  \log p_\theta(y^{\star}\mid z_p)
  \;=\;& \underbrace{\Exp_{q_\theta}\bigl[\log p(y^{\star}\mid y)\bigr]
    - \KL\bigl(q_\theta(\cdot\mid x_p)\,\|\,p_\theta(\cdot\mid z_p)\bigr)}_{\text{the bound }\eqref{eq:app:elbo}}\\[-1pt]
  &\;+\; \underbrace{\KL\bigl(q_\theta(\cdot\mid x_p)\,\|\,p_\theta(\cdot\mid y^{\star}, z_p)\bigr)}_{\text{tightness gap, against the outcome-conditioned posterior}} .
\end{aligned}
  \label{eq:app:evidence-decomp}
\end{equation}
The residual term is non-negative by Gibbs' inequality --- which
re-derives \eqref{eq:app:elbo} without invoking Jensen --- and vanishes
iff $q_\theta(\cdot\mid x_p) = p_\theta(\cdot\mid y^{\star}, z_p)$ a.e.
\end{proof}

\paragraph{Reading the bound.} The first term of \eqref{eq:app:elbo} is
the (negated) task loss evaluated under the compressed conditioning that
training actually sees; the second is the \emph{conditioning gap}
between the compressed and pre-eviction views, and it is the only term
that references $z_p$. Decomposition \eqref{eq:app:evidence-decomp} is
worth isolating because it separates two quantities that are easy to
conflate in the main text: the conditioning gap, which SDCC{} exists to
close, and the tightness gap
$\KL(q_\theta(\cdot\mid x_p)\|p_\theta(\cdot\mid y^{\star},z_p))$ against
the \emph{outcome-conditioned} posterior, which SDCC{} never touches and
does not need to. \S\ref{app:var:gap} makes that separation precise.

\paragraph{The multiplier $\lambda$ is the $\beta$ of a $\beta$-VAE.}
Nothing in \eqref{eq:app:elbo} fixes the relative weight of its two
terms. Treating the conditioning gap as a \emph{constraint} rather than
a fixed-weight penalty --- maximize the task evidence subject to
$\sum_{p\in\Cset}\KL \le \delta$ --- and forming the Lagrangian returns
\eqref{eq:soft_loss} exactly, with $\lambda$ the multiplier and the
constant $\lambda\delta$ discarded. In $\beta$-VAE terms
\citep{higgins2017betavae}, writing $\beta := \lambda$ and using the
task loss as the distortion,
\begin{equation}
  \mathcal{F}_\beta(\theta)
  \;=\; \underbrace{\Ltask(\theta;\hcomp)}_{\text{distortion}}
  \;+\; \beta \sum_{p \in \Cset}
    \underbrace{\KL\bigl(p_{\sg(\theta)}(\cdot\mid z_p)\,\big\|\,q_\theta(\cdot\mid x_p)\bigr)}_{\text{rate}},
  \label{eq:app:free_energy}
\end{equation}
with $\beta = 1$ the plain evidence bound and $\beta = 0$
Naive-Compressed. The distortion--rate tension is genuine here precisely
because $x_p \subsetneq z_p$: distortion is minimized by exploiting the
compressed tokens, rate by making the student's output independent of
them, and any task-relevant content of $z_p \setminus x_p$ must be
recovered through the parameters rather than read off the input. Pushing
$\beta$ too far therefore reproduces the classical failure mode.

\begin{proposition}[Collapse regime at large $\beta$]
\label{prop:app:collapse}
Let $\theta^{\mathrm{coll}}$ be a parameter at which the student output
is functionally independent of $x_p$ and equals the teacher,
$q_{\theta^{\mathrm{coll}}}(\cdot\mid x_p) =
p_{\sg(\theta^{\mathrm{coll}})}(\cdot\mid z_p)$ for all $p \in \Cset$.
Then the rate term of \eqref{eq:app:free_energy} vanishes at
$\theta^{\mathrm{coll}}$ while its distortion term is generically
positive, and for every
$\beta > \Ltask(\theta^{\mathrm{coll}};\hcomp)/\varepsilon^{\star}$ ---
where $\varepsilon^{\star} > 0$ lower-bounds the rate at a task-optimal
$\theta^{\star}$ --- one has
$\mathcal{F}_\beta(\theta^{\mathrm{coll}}) < \mathcal{F}_\beta(\theta^{\star})$,
so descent on $\mathcal{F}_\beta$ from an uninformed initialization is
attracted to the collapsed solution.
\end{proposition}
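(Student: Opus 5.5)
The plan is to evaluate $\mathcal{F}_\beta$ of \eqref{eq:app:free_energy} at the two parameters and compare them directly. The rate term is a sum of forward KLs between the stop-gradient teacher and the student. At $\theta^{\mathrm{coll}}$ the hypothesis gives $q_{\theta^{\mathrm{coll}}}(\cdot\mid x_p)=p_{\sg(\theta^{\mathrm{coll}})}(\cdot\mid z_p)$ for every $p\in\Cset$. By Gibbs' inequality (its equality case), each summand vanishes, so $\mathcal{F}_\beta(\theta^{\mathrm{coll}})=\Ltask(\theta^{\mathrm{coll}};\hcomp)$ holds independently of $\beta$. For the claim that the distortion is ``generically positive'', I would argue as follows. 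Because the student output at $\theta^{\mathrm{coll}}$ is functionally independent of $x_p$, it cannot exploit the compressed tokens, so it cannot drive the task loss to its infimum unless the target is already determined without $x_p$. That degenerate case is non-generic. In any case only nonnegativity of $\Ltask$ (a cross-entropy or a shifted policy loss) is needed for the comparison below.

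At the task-optimal $\theta^\star$, I would write
\[
\mathcal{F}_\beta(\theta^\star)=\Ltask(\theta^\star;\hcomp)+\beta\sum_{p\in\Cset}\KL\bigl(p_{\sg(\theta^\star)}(\cdot\mid z_p)\,\big\|\,q_{\theta^\star}(\cdot\mid x_p)\bigr).
\]
The rate sum is at least $\varepsilon^\star$ by hypothesis, and $\Ltask(\theta^\star;\hcomp)\ge 0$. This gives $\mathcal{F}_\beta(\theta^\star)\ge\beta\varepsilon^\star$. For $\beta>\Ltask(\theta^{\mathrm{coll}};\hcomp)/\varepsilon^\star$, the chain
\[
\mathcal{F}_\beta(\theta^{\mathrm{coll}})=\Ltask(\theta^{\mathrm{coll}};\hcomp)<\beta\varepsilon^\star\le\mathcal{F}_\beta(\theta^\star)
\]
gives the strict inequality. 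If $\Ltask$ can be negative, as with a GRPO surrogate, I would first shift it by a $\theta$-independent constant to make it nonnegative. Such a shift changes neither the minimizers nor the descent dynamics, and the threshold on $\beta$ is then read for the shifted loss.

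The main obstacle is the final clause, that descent ``is attracted to'' the collapsed solution. The comparison only shows that $\theta^\star$ is not a minimizer of $\mathcal{F}_\beta$ and that the collapsed point has strictly lower free energy; it does not establish a basin statement. I would therefore phrase this part as a consequence of monotone descent. From an uninformed initialization, the teacher and student at $\theta_0$ nearly agree on the evicted content, so the rate is already small. Any path that reaches $\theta^\star$ must pass through free-energy values at least $\beta\varepsilon^\star>\mathcal{F}_\beta(\theta^{\mathrm{coll}})$. Gradient descent does not increase $\mathcal{F}_\beta$ (for a sufficiently small step size), so if $\mathcal{F}_\beta(\theta_0)<\beta\varepsilon^\star$ the iterates can never reach $\theta^\star$. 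I would state this sublevel-set argument as the rigorous content and present ``attraction'' as its interpretation, noting that the argument treats the stop-gradient teacher as following the same $\theta$, which is exactly the fixed-point setting in which $\mathcal{F}_\beta$ is evaluated.
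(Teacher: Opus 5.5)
Your comparison argument is the paper's proof: the rate vanishes by construction at $\theta^{\mathrm{coll}}$, so $\mathcal{F}_\beta(\theta^{\mathrm{coll}})=\Ltask(\theta^{\mathrm{coll}};\hcomp)$, while $\mathcal{F}_\beta(\theta^\star)\ge\beta\varepsilon^\star$, and the stated threshold on $\beta$ is exactly where these cross. You also make explicit the nonnegativity of $\Ltask(\theta^\star;\hcomp)$ that the paper's bound $\mathcal{F}_\beta(\theta^\star)\ge\beta\varepsilon^\star$ uses silently. The paper gives no separate argument for the ``attracted to'' clause, so your sublevel-set remark goes beyond it. Its monotone-descent step holds for small-step gradient descent on the value $\theta\mapsto\mathcal{F}_\beta(\theta)$. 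It does not hold automatically for the stop-gradient update SDCC actually runs, because that update ignores the teacher's dependence on $\theta$.
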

\begin{proof}
The rate vanishes at $\theta^{\mathrm{coll}}$ by construction. The
distortion is positive because a student that ignores $x_p$ cannot fit a
task loss evaluated on $\hcomp$. Substituting both into
\eqref{eq:app:free_energy} gives
$\mathcal{F}_\beta(\theta^{\mathrm{coll}}) =
\Ltask(\theta^{\mathrm{coll}};\hcomp)$, while
$\mathcal{F}_\beta(\theta^{\star}) \ge \beta\varepsilon^{\star}$; the
stated threshold on $\beta$ is exactly the crossing point. This is
$\beta$-VAE posterior collapse \citep{alemi2018fixing,razavi2019preventing}.
\end{proof}

\noindent Prop.~\ref{prop:app:collapse} is why $\lambda$ carries a
warm-up schedule and the target a stop-gradient rather than being a free
hyperparameter: KL annealing holds $\beta = 0$ until task-loss SGD has
moved off $\theta^{\mathrm{coll}}$, and a frozen (or EMA) target keeps it
from tracking the student into collapse. Both are the standard
$\beta$-VAE remedies, and both are analyzed in
\S\ref{app:sdcc-convergence}.

\paragraph{Why the slack cannot be descended in the direction it appears.}
The remaining step is the direction of the KL. The obvious move is to
descend the term of \eqref{eq:app:elbo} as written --- the
\emph{wake-phase} update of amortized inference
\citep{hinton1995wakesleep,bornschein2015rws}. Because the gradient must
then pass through the sampling distribution, its estimator is a score
function reweighted by a log-ratio.

\begin{lemma}[Wake-phase gradient, and the variance of its estimator]
\label{lem:app:wake}
Let $L^{\mathrm{wake}}(\theta) := \KL\bigl(q_\theta(\cdot\mid x_p)\,\|\,
p_{\sg(\theta)}(\cdot\mid z_p)\bigr)$ with the target branch frozen, and
write $g_\theta(y) := \nabla_\theta \log q_\theta(y\mid x_p)$ and
$\rho_\theta(y) := q_\theta(y\mid x_p)/p_{\sg(\theta)}(y\mid z_p)$. Then
\begin{equation}
  \nabla_\theta L^{\mathrm{wake}}(\theta)
  \;=\; \Exp_{y \sim q_\theta}\bigl[\log\rho_\theta(y)\, g_\theta(y)\bigr],
  \label{eq:app:wake_grad}
\end{equation}
and its $N$-sample Monte-Carlo estimator has variance
\begin{equation}
  \Var\bigl[\widehat{\nabla_\theta L^{\mathrm{wake}}}\bigr]
  \;=\; \frac{1}{N}\Bigl(
    \Exp_{q_\theta}\bigl[(\log\rho_\theta)^{2}\,\|g_\theta\|^{2}\bigr]
    - \bigl\|\Exp_{q_\theta}\bigl[\log\rho_\theta\, g_\theta\bigr]\bigr\|^{2}\Bigr).
  \label{eq:app:var_identity}
\end{equation}
\end{lemma}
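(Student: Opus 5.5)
The plan has two parts. First, differentiate $L^{\mathrm{wake}}$ directly over the finite vocabulary $\mathcal{V}$, treating the target $p_{\sg(\theta)}(\cdot\mid z_p)$ as a constant. Second, read off the variance of the i.i.d.\ Monte-Carlo average from the second-moment identity.

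\textbf{Gradient.} Write $L^{\mathrm{wake}}(\theta)=\sum_{y\in\mathcal{V}} q_\theta(y\mid x_p)\log\rho_\theta(y)$. Because the stop-gradient freezes the denominator of $\rho_\theta$, the product rule gives two terms:
\[
\nabla_\theta L^{\mathrm{wake}}=\sum_y \nabla_\theta q_\theta(y\mid x_p)\,\log\rho_\theta(y)+\sum_y q_\theta(y\mid x_p)\,\nabla_\theta\log q_\theta(y\mid x_p).
\]
Apply the log-derivative trick $\nabla_\theta q_\theta=q_\theta\,g_\theta$ to the first sum, which yields $\Exp_{q_\theta}[\log\rho_\theta\,g_\theta]$. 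The second sum equals $\nabla_\theta\sum_y q_\theta(y\mid x_p)=\nabla_\theta 1=0$; this is the score-function identity $\Exp_{q_\theta}[g_\theta]=0$. Interchanging the derivative with the sum is harmless because $\mathcal{V}$ is finite. The resulting expression is \eqref{eq:app:wake_grad}. Full support of the softmax outputs, already noted in the proof of \Cref{prop:app:elbo}, guarantees that $\log\rho_\theta$ is finite everywhere.

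\textbf{Variance.} Define the estimator $\widehat{\nabla}=\frac1N\sum_{n=1}^N\log\rho_\theta(y_n)\,g_\theta(y_n)$ with $y_n\stackrel{\text{iid}}{\sim}q_\theta(\cdot\mid x_p)$. It is unbiased by the first step. Here the variance of a vector is read as the trace of its covariance, i.e.\ $\Exp\|\widehat\nabla-\Exp\widehat\nabla\|^2$. Independence gives $\Var[\widehat\nabla]=\frac1N\Var[\log\rho_\theta\, g_\theta]$. Expanding a single term as $\Exp\|X\|^2-\|\Exp X\|^2$ with $X=\log\rho_\theta\,g_\theta$ gives exactly \eqref{eq:app:var_identity}.

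\textbf{Main obstacle.} The derivation itself is routine. The only delicate points are conventions. First, one must fix that ``variance'' of a vector estimator means the trace of its covariance. Second, one must state explicitly that the stop-gradient removes the $\nabla_\theta p$ contribution; without it an extra term $-\Exp_{q_\theta}[\nabla_\theta\log p_\theta(y\mid z_p)]$ would appear. I would flag both conventions and then remark on the interpretation used in the surrounding text. The identity makes the variance blow up through $(\log\rho_\theta)^2$ wherever $p_{\sg(\theta)}(y\mid z_p)\ll q_\theta(y\mid x_p)$, which is the surplus side. In contrast, the deficit side, where $q_\theta\ll p$, contributes weight $q_\theta$ and is rarely sampled.
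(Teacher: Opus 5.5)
Your proof is correct and follows the paper's argument essentially step for step: you write the KL as a finite vocabulary sum, differentiate only through $q_\theta$, turn the first term into $\Exp_{q_\theta}[\log\rho_\theta\, g_\theta]$ with the log-derivative identity, and kill the second term via $\sum_y \nabla_\theta q_\theta = \nabla_\theta 1 = 0$. You then read the variance off as $(\Exp\|X\|^2-\|\Exp X\|^2)/N$ for i.i.d.\ copies of $X=\log\rho_\theta\, g_\theta$, and your trace-of-covariance convention is exactly what the paper's $\|\cdot\|^2$ formula uses without stating it.
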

\begin{proof}
$L^{\mathrm{wake}}(\theta) = \sum_y q_\theta(y\mid x_p)\log\rho_\theta(y)$,
and only $q_\theta$ carries a gradient, so $\nabla_\theta
L^{\mathrm{wake}} = \sum_y \nabla_\theta q_\theta(y)\log\rho_\theta(y) +
\sum_y \nabla_\theta q_\theta(y)$. The log-derivative identity
$\nabla_\theta q_\theta = q_\theta\, g_\theta$ turns the first sum into
\eqref{eq:app:wake_grad}, and the second is $\nabla_\theta \sum_y
q_\theta(y) = \nabla_\theta 1 = 0$. Equation \eqref{eq:app:var_identity}
is then the variance of a mean of $N$ i.i.d.\ copies of $X :=
\log\rho_\theta(y)\, g_\theta(y)$, namely $(\Exp\|X\|^{2} - \|\Exp
X\|^{2})/N$.
\end{proof}

\noindent Identity \eqref{eq:app:var_identity} is exact --- it uses no
$\log\rho \approx \rho - 1$ small-ratio expansion --- and it exposes two
failure modes, one on each side of the eviction:
\begin{itemize}[nosep,leftmargin=*]
  \item \textbf{Surplus side: unbounded variance.} A token $y^{\dagger}$
        the compressed context finds plausible ($q_\theta(y^{\dagger}\mid
        x_p) \ge q_0 > 0$) but the pre-eviction context does not
        ($p_{\sg(\theta)}(y^{\dagger}\mid z_p) \to 0$) --- a redundant
        re-query that looks reasonable only once the retrieved span is
        gone --- sends $\log\rho_\theta(y^{\dagger}) \to +\infty$, and the
        second moment in \eqref{eq:app:var_identity} grows like
        $(\log\rho_\theta(y^{\dagger}))^{2}$. No variance bound holds
        uniformly over $\Theta$.
  \item \textbf{Deficit side: sampling blindness.} A token the
        pre-eviction context supports ($p_{\sg(\theta)}(y^{\dagger}\mid
        z_p) \ge p_0 > 0$) but the compressed one has lost
        ($q_\theta(y^{\dagger}\mid x_p) \to 0$) is Pitfall~A itself. Here
        the variance does \emph{not} blow up, since $q\log^{2} q \to 0$,
        and that is exactly the problem: the summand
        $q_\theta(y^{\dagger})\log\rho_\theta(y^{\dagger})g_\theta(y^{\dagger})
        \to 0$, so the update draws no signal from the one token whose
        conditioning it is meant to repair, and an $N$-sample estimator
        needs $N = \Omega\bigl(1/q_\theta(y^{\dagger})\bigr)$ draws to
        observe it even once.
\end{itemize}
Both are properties of the sequence-level Monte-Carlo estimator rather
than of the reverse KL as a function: a per-token vocabulary sum
evaluates $L^{\mathrm{wake}}$ exactly and has neither pathology. What
survives even in that exact form is the reverse direction's zero-forcing
bias, which under-covers precisely the modes an eviction removes. Two
substitutions therefore separate \eqref{eq:app:elbo} from
\eqref{eq:soft_loss} --- the KL is taken forward rather than reverse, and
its right argument is frozen by a stop-gradient --- and
\S\ref{app:var:gap} justifies both, showing that the forward direction
samples from the well-behaved side, has a bounded-variance gradient, and
retains the same zero set.

\subsection{Why the forward KL suffices}
\label{app:var:gap}
\label{app:var:sleep}

Two things must hold before the ELBO's reverse KL may be replaced by the
forward KL that SDCC{} minimizes: the reverse-KL slack must vanish
exactly at the conditioning invariant, so that the ELBO points at the
right target; and the forward KL must have the same zero set, so that
the substitution loses nothing.

\paragraph{The slack is the conditioning gap, and is weaker than tightness.}
\begin{proposition}[Slack-zero implication]
\label{prop:app:gap}
$\KL\bigl(q_\theta(\cdot \mid x_p)\,\|\,p_\theta(\cdot \mid z_p)\bigr) = 0$
if and only if $q_\theta(\cdot\mid x_p) = p_\theta(\cdot\mid z_p)$ a.e.,
i.e.\ if and only if the per-leaf conditioning
invariant~\eqref{eq:app:var:ci} holds. This is \emph{strictly weaker}
than tightness of \eqref{eq:app:elbo}, which by
Prop.~\ref{prop:app:elbo} additionally requires the recognition policy
to equal the \emph{outcome-conditioned} posterior
$p_\theta(\cdot\mid y^{\star}, z_p)$ rather than the outcome-marginal
$p_\theta(\cdot\mid z_p)$.\footnote{The two targets coincide only when
$p(y^{\star}\mid y)$ is $y$-independent on the support of
$p_\theta(\cdot\mid z_p)$, i.e.\ when the outcome carries no
discriminative signal --- a degenerate configuration, and not the regime
of interest here.}
\end{proposition}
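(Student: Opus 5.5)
The plan has two parts: first the zero-set characterization, then the strictness claim, which follows from the exact decomposition \eqref{eq:app:evidence-decomp} of Prop.~\ref{prop:app:elbo}.

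\emph{Zero set.} Both $q_\theta(\cdot\mid x_p)$ and $p_\theta(\cdot\mid z_p)$ are softmax outputs over the finite vocabulary $\mathcal{V}$, so both have full support. Gibbs' inequality then gives $\KL(q\|p)\ge 0$, with equality iff $q=p$ pointwise. On a finite space, ``a.e.'' is the same as everywhere on the common support, so the equivalence with \eqref{eq:app:var:ci} at leaf $p$ is immediate. For a self-contained step, I would write $\KL(q\|p)=-\sum_y q\log(p/q)\ge -\log\sum_y p=0$ by Jensen. Strict concavity of $\log$ forces $p/q$ to be constant on the support of $q$, and normalization makes that constant $1$.

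\emph{Weaker than tightness.} By \eqref{eq:app:evidence-decomp}, tightness of \eqref{eq:app:elbo} holds iff $\KL(q_\theta(\cdot\mid x_p)\|p_\theta(\cdot\mid y^\star,z_p))=0$, i.e.\ iff $q_\theta(\cdot\mid x_p)=p_\theta(\cdot\mid y^\star,z_p)$. So the invariant and tightness hold simultaneously iff $p_\theta(\cdot\mid z_p)=p_\theta(\cdot\mid y^\star,z_p)$. By Bayes' rule this is equivalent to $p(y^\star\mid y)=p_\theta(y^\star\mid z_p)$ for every $y$ in the support, i.e.\ $p(y^\star\mid y)$ is constant in $y$. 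Hence, whenever the outcome likelihood is non-constant on the support (which at finite $\tau$ means $r$ is non-constant), any $\theta$ satisfying \eqref{eq:app:var:ci} has zero slack but a strictly positive tightness gap. This gives strictness: tightness implies slack-zero, but not conversely. The same computation yields the degenerate-coincidence footnote.

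The step needing the most care is the phrase ``strictly weaker''. Implication in one direction is not literally guaranteed, since tightness alone does not force slack zero. What I would prove is that the invariant is necessary for tightness only in the degenerate case, and that the two conditions pick out distinct targets otherwise. To make ``strictly'' concrete, I would exhibit an explicit two-token example with non-constant reward: set $q=p$, and observe that the posterior differs from $p$.
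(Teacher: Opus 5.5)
Your zero-set argument (Gibbs via Jensen, using full softmax support) is correct. So is your Bayes computation showing that the invariant and tightness can hold simultaneously iff $p(y^{\star}\mid y)$ is constant in $y$. These are exactly the ingredients of the paper's proof, which consists only of Gibbs' inequality for the first equivalence and the equality condition of Prop.~\ref{prop:app:elbo} for the second claim.

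The problem is the sentence ``tightness implies slack-zero, but not conversely''. It is false, your own computation refutes it, and your next paragraph contradicts it. If tightness holds, then $q_\theta(\cdot\mid x_p)=p_\theta(\cdot\mid y^{\star},z_p)$. The slack then equals $\KL\bigl(p_\theta(\cdot\mid y^{\star},z_p)\,\|\,p_\theta(\cdot\mid z_p)\bigr)$, which is strictly positive whenever $p(y^{\star}\mid y)$ is non-constant on the full support. Outside the degenerate case the two conditions are therefore mutually exclusive, not nested. Hence ``strictly weaker'' cannot be proved in the implication sense. Your proposed two-token example ($q=p$, posterior $\neq p$) only establishes the easy half, that slack-zero does not imply tightness.

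Drop that sentence and commit to your fallback reading. The slack and the tightness gap are zeroed by different targets: the outcome-marginal $p_\theta(\cdot\mid z_p)$ versus the outcome-conditioned $p_\theta(\cdot\mid y^{\star},z_p)$. These coincide exactly in the degenerate configuration of the footnote. Add the one-line observation above, that tightness forces strictly positive slack, to show the two zero sets are disjoint in the regime of interest.

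The paper's one-line citation of the equality condition does not establish more than this either. Its phrase ``additionally requires'' presupposes both equalities holding at once, which is possible only in the degenerate case. So this corrected ``distinct targets'' version is the defensible content of the proposition. It is also all that the subsequent remark uses, namely that closing the amortization gap does so \emph{without} tightening the bound.
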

\begin{proof}
Gibbs' inequality gives the first equivalence: a KL divergence is zero
iff its two arguments coincide almost everywhere. The second claim is
the equality condition of Prop.~\ref{prop:app:elbo}.
\end{proof}

\noindent The ELBO is therefore scaffolding rather than the object of
interest: driving its slack to zero closes the amortization gap between
the two conditioning distributions \emph{without} tightening the bound,
and SDCC's justification needs only the former.

\paragraph{The sleep-phase surrogate.} Wake--sleep
\citep{hinton1995wakesleep} flips the sampling direction: instead of
sampling from $q_\theta$ and fitting it to $p_\theta$, it samples from
the frozen $p_{\sg(\theta)}$ and fits $q_\theta$ by maximum likelihood
on those samples.

\begin{proposition}[The sleep-phase update is a forward KL and recovers \eqref{eq:soft_loss}]
\label{prop:app:sleep}
Let $L^{\mathrm{sleep}}(\theta) := \Exp_{y \sim p_{\sg(\theta)}(\cdot\mid z_p)}
\bigl[-\log q_\theta(y \mid x_p)\bigr]$. Then
\begin{equation}
  L^{\mathrm{sleep}}(\theta)
  \;=\;
  \KL\!\Bigl(p_{\sg(\theta)}(\cdot\mid z_p)\,\big\|\,q_\theta(\cdot\mid x_p)\Bigr)
  \;+\; H\!\bigl(p_{\sg(\theta)}(\cdot\mid z_p)\bigr),
  \label{eq:app:sleep_split}
\end{equation}
with $H(\cdot)$ the Shannon entropy. Since the stop-gradient makes the
entropy $\theta$-independent, $\nabla_\theta L^{\mathrm{sleep}} =
\nabla_\theta \KL(p_{\sg(\theta)} \,\|\, q_\theta) = -\Exp_{y \sim
p_{\sg(\theta)}}[\nabla_\theta \log q_\theta(y\mid x_p)]$, a plain
cross-entropy score whose $N$-sample Monte-Carlo estimator has variance
at most $\Exp_{y \sim p_{\sg(\theta)}}\|\nabla_\theta \log q_\theta(y\mid
x_p)\|^{2}/N$ --- no $q/p$ re-weighting appears. Summing over
$p \in \Cset$ and adding the task loss recovers the boxed SDCC{}
objective \eqref{eq:soft_loss} verbatim:
\begin{equation}
  \mathcal{L}_{\mathrm{SDCC{}}}(\theta)
  \;=\;
  \Ltask(\theta;\hcomp)
  \;+\;
  \lambda \sum_{p \in \Cset}
  \KL\!\Bigl(\underbrace{p_{\sg(\theta)}(\cdot\mid z_p)}_{\text{teacher, stop-grad}}
    \,\Big\|\,
    \underbrace{q_\theta(\cdot\mid x_p)}_{\text{student, with-grad}}\Bigr).
  \label{eq:app:sdcc_recovered}
\end{equation}
\end{proposition}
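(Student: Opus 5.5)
The plan is to expand the cross-entropy into a KL term plus an entropy term, then argue about gradients and variance. All quantities are finite sums over the vocabulary $\mathcal{V}$, and $p_{\sg(\theta)}(\cdot\mid z_p)$ and $q_\theta(\cdot\mid x_p)$ are both softmax outputs with full support, so every logarithm is finite.

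\textbf{The split.} Start from $L^{\mathrm{sleep}}(\theta)=-\sum_y p_{\sg(\theta)}(y\mid z_p)\log q_\theta(y\mid x_p)$. Add and subtract $\sum_y p_{\sg(\theta)}(y\mid z_p)\log p_{\sg(\theta)}(y\mid z_p)$. The added piece regroups into $\sum_y p\log(p/q)=\KL(p_{\sg(\theta)}\|q_\theta)$. The subtracted piece is $-\sum_y p\log p=H(p_{\sg(\theta)})$. This gives \eqref{eq:app:sleep_split} directly.

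\textbf{The gradient.} The stop-gradient treats the teacher's values as constants under $\nabla_\theta$. So $H(p_{\sg(\theta)})$ contributes zero gradient, and the teacher weights pass unchanged through differentiation. This yields
$\nabla_\theta L^{\mathrm{sleep}}=\nabla_\theta\KL(p_{\sg(\theta)}\|q_\theta)=-\sum_y p_{\sg(\theta)}(y\mid z_p)\nabla_\theta\log q_\theta(y\mid x_p)$,
which is the stated expectation. In contrast to Lemma~\ref{lem:app:wake}, there is no score-function term here. The sampling distribution does not depend on the differentiated $\theta$, so no $\log\rho_\theta$ weight appears.

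\textbf{The variance bound.} Let $X:=\nabla_\theta\log q_\theta(y\mid x_p)$ with $y\sim p_{\sg(\theta)}$. The $N$-sample mean of i.i.d.\ copies of $X$ has variance $(\Exp\|X\|^2-\|\Exp X\|^2)/N\le \Exp\|X\|^2/N$, by the same computation as at the end of the proof of Lemma~\ref{lem:app:wake}. This bound is finite because $\mathcal{V}$ is finite and $\nabla_\theta\log q_\theta$ is bounded for softmax logits with bounded Jacobian. No importance ratio enters the bound.

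\textbf{Recovering the objective.} Sum over $p\in\Cset$, scale by $\lambda$, and add $\Ltask(\theta;\hcomp)$. Using the identifications $p_\theta(\cdot\mid z_p)=\Pmodel(\cdot\mid\tpath(y_p))$ and $q_\theta(\cdot\mid x_p)=\Pmodel(\cdot\mid\spath(y_p))_p$ from \S\ref{app:var:setup}, together with $\Ptarget=f_{\sg(\theta)}$, this reproduces \eqref{eq:soft_loss}.

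\textbf{Main obstacle.} The delicate step is stating the stop-gradient semantics precisely. The identity in \eqref{eq:app:sleep_split} holds pointwise in $\theta$. The gradient claim, however, holds only for the operator in which the teacher argument is held fixed, not for the total derivative of $\KL(p_\theta\|q_\theta)$. I would make this explicit by evaluating at $\theta_0$, freezing $\bar p:=p_{\theta_0}(\cdot\mid z_p)$, differentiating the function $\theta\mapsto\KL(\bar p\|q_\theta)$, and then setting $\theta=\theta_0$.
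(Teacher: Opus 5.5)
Your proposal is correct and follows the paper's own proof essentially step for step: add and subtract $\log p_{\sg(\theta)}(y\mid z_p)$ to split off the entropy, differentiate with the teacher frozen, bound the estimator variance by its second moment, and substitute the teacher/student identifications to recover \eqref{eq:soft_loss}. One step the paper states explicitly that you leave implicit: summing $L^{\mathrm{sleep}}$ itself (rather than the KL terms) matches \eqref{eq:soft_loss} only after discarding the $\theta$-independent entropy sum $\sum_{p\in\Cset} H(p_{\sg(\theta)}(\cdot\mid z_p))$.
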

\begin{proof}[Proof sketch]
Add and subtract $\log p_{\sg(\theta)}(y\mid z_p)$ inside the
expectation defining $L^{\mathrm{sleep}}$ and identify the two resulting
expectations as a KL divergence and an entropy; that is
\eqref{eq:app:sleep_split}. Differentiating with the target held fixed
leaves the score expectation stated, whose estimator variance is its
second moment; the bound is finite because $q_\theta$ has full support
(softmax) and no ratio-based re-weighting inflates the summand. For the
last claim, sum \eqref{eq:app:sleep_split} over $p \in \Cset$, discard
the $\theta$-independent entropy sum, weight the KL sum by $\lambda$,
add the task loss, and substitute $q_\theta(\cdot\mid x_p) =
\Pmodel(\cdot\mid\spath(y_p))_p$ and $p_{\sg(\theta)}(\cdot\mid z_p) =
\Ptarget(\cdot\mid\tpath(y_p))$ with the notation of
\S\ref{sec:soft:form}.
\end{proof}

\paragraph{The two directions have the same zero set.}
\begin{theorem}[Zero SDCC{} residual $\Leftrightarrow$ conditioning invariant]
\label{thm:app:tightness}
Let $\varepsilon_p(\theta) := \KL\bigl(p_{\sg(\theta)}(\cdot\mid z_p)\,\|\,
q_\theta(\cdot\mid x_p)\bigr)$ be the per-leaf forward-KL residual that
SDCC{} minimizes. Then
\begin{equation}
  \KL\bigl(p_{\sg(\theta)}\,\big\|\,q_\theta\bigr) = 0
  \quad\Longleftrightarrow\quad
  p_{\sg(\theta)} = q_\theta \;\text{ a.e.}
  \quad\Longleftrightarrow\quad
  \KL\bigl(q_\theta\,\big\|\,p_{\sg(\theta)}\bigr) = 0,
  \label{eq:app:zeroset}
\end{equation}
and consequently $\sum_{p\in\Cset}\varepsilon_p(\theta) = 0$ if and only
if the conditioning invariant~\eqref{eq:app:var:ci} holds at every
diverging leaf --- in which case the ELBO's reverse-KL slack of
Prop.~\ref{prop:app:gap} is zero as well.
\end{theorem}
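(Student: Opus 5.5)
The argument rests on Gibbs' inequality, applied once in each direction, together with the observation that the stop-gradient changes only how gradients flow, not the values being compared. Fix a diverging leaf $p\in\Cset$. As a function of its arguments, $p_{\sg(\theta)}(\cdot\mid z_p)$ is numerically the same distribution as $p_\theta(\cdot\mid z_p)$; $\sg$ affects only differentiation. So the equivalences in \eqref{eq:app:zeroset} are statements about two fixed distributions on the finite vocabulary $\mathcal{V}$. Both are softmax outputs of the same network and therefore have full support, so every KL involved is finite. On a finite alphabet, "a.e." reduces to pointwise equality.

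\textbf{First equivalence.} I would prove Gibbs' inequality directly:
\[
-\KL(p\,\|\,q)=\sum_y p(y)\log\frac{q(y)}{p(y)}\le\log\sum_y q(y)=0 .
\]
This is Jensen's inequality applied to the strictly concave logarithm. Equality holds iff the ratio $q/p$ is constant on the support of $p$. Since both distributions sum to one and $p$ has full support, that constant must be $1$. This gives $\KL(p_{\sg(\theta)}\|q_\theta)=0$ iff $p_{\sg(\theta)}=q_\theta$.

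\textbf{Second equivalence.} The same argument with the roles of the two distributions swapped gives $\KL(q_\theta\|p_{\sg(\theta)})=0$ iff $q_\theta=p_{\sg(\theta)}$. This direction is exactly the first claim of Prop.~\ref{prop:app:gap}, so I can simply cite it. Chaining the two equivalences through the common middle condition yields \eqref{eq:app:zeroset}.

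\textbf{Summed statement.} Each $\varepsilon_p(\theta)\ge0$ by Gibbs' inequality. A finite sum of nonnegative terms vanishes iff every term vanishes. By the per-leaf equivalence, $\varepsilon_p(\theta)=0$ iff $q_\theta(\cdot\mid x_p)=p_\theta(\cdot\mid z_p)$, which is \eqref{eq:app:var:ci} at leaf $p$. Hence $\sum_{p\in\Cset}\varepsilon_p(\theta)=0$ iff \eqref{eq:app:var:ci} holds for all $p\in\Cset$. In that case the right-hand equivalence of \eqref{eq:app:zeroset} makes every reverse-KL slack of Prop.~\ref{prop:app:gap} zero as well.

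The only step needing any care is the identification of $p_{\sg(\theta)}$ with $p_\theta$ as a value. I would state explicitly that the zero-set claim is evaluated at a fixed $\theta$, where $\sg$ is the identity. The statement says nothing about gradients, so no subtlety remains; the full-support remark handles the edge cases $0\log0$ and division by zero.
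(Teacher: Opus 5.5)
Your proposal is correct and follows the paper's proof. Both apply Gibbs' inequality in each direction, using the fact that the zero condition $p = q$ a.e.\ is symmetric even though the two divergences are not, and then note that a finite sum of nonnegative per-leaf divergences vanishes iff every term does. You additionally derive Gibbs' inequality from Jensen and state explicitly two conventions the paper leaves implicit: the stop-gradient does not change the value of the distribution, and softmax outputs have full support.
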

\begin{proof}
Both equivalences in \eqref{eq:app:zeroset} are Gibbs' inequality: a KL
divergence vanishes iff its two arguments agree almost everywhere, and
that condition is symmetric in the two arguments even though the two
divergences are not. A sum of non-negative terms is zero iff every term
is, which lifts the per-leaf statement to $\Cset$.
\end{proof}

\noindent Away from the zero set the two divergences are different
functions of $\theta$, and the forward direction is the one that can
actually be descended in a single-backward training loop: the frozen
teacher supplies the target distribution, the gradient flows only
through the student, and the update is an ordinary cross-entropy rather
than a $\log(q_\theta/p_\theta)$-weighted score-function estimator whose
variance diverges exactly on the tokens the eviction damaged. Full ELBO
tightness (Prop.~\ref{prop:app:elbo}) is strictly stronger than
\eqref{eq:app:zeroset} and is neither targeted nor required.

\paragraph{Existence of a zero-KL solution.}
\label{app:proof-existence}
The residual can be driven to zero at all only if the model can
reproduce its pre-eviction conditional from the compressed prefix alone.

\begin{proposition}[Existence of a zero-KL solution]
\label{prop:existence}
If model capacity admits $\theta^\star$ with
$\Pmodel[\theta^\star](\cdot\mid \spath(y_p))_p =
\Pmodel[\theta^\star](\cdot\mid \tpath(y_p))$ for all $p \in \Cset$,
then the SDCC{} KL loss attains $0$. This holds when the evicted span is
redundant with the surviving trunk, and fails when the span carries a
strictly necessary bit absent from $\hcomp$, in which case
$\varepsilon_p$ stays positive and SDCC{} is bias-dominated relative to
the exact methods.
\end{proposition}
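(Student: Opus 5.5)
The claim has two parts: a sufficiency direction, which is immediate from non-negativity of KL, and a characterization of when the capacity hypothesis holds or fails.

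\emph{Sufficiency.} Assume $\theta^\star$ satisfies $\Pmodel[\theta^\star](\cdot\mid\spath(y_p))_p=\Pmodel[\theta^\star](\cdot\mid\tpath(y_p))$ for every $p\in\Cset$. Evaluate the stop-gradient teacher at $\sg(\theta^\star)$; its value is just $\theta^\star$. Each summand $\varepsilon_p(\theta^\star)=\KL(p_{\sg(\theta^\star)}(\cdot\mid z_p)\,\|\,q_{\theta^\star}(\cdot\mid x_p))$ is then a KL between identical distributions. By \Cref{thm:app:tightness} (Gibbs), each $\varepsilon_p$ is zero, so the sum is zero. Since every summand is non-negative, $0$ is the global minimum of the KL part of \eqref{eq:soft_loss}, and $\theta^\star$ attains it.

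\emph{The redundant-span case.} I would formalize redundancy as a sufficiency condition: the trunk $x_p$ determines the same conditional that $z_p=x_p\oplus\evictset{\mathrm{act}}(p)$ would, so the re-inserted span carries no information about the next token beyond $x_p$. The step is then to exhibit a parameter realizing this. For example, if the current $\theta$ already attends negligibly to the evicted positions, dense attention restricted to the trunk gives identical logits; this reuses the logit-matching argument in the proof of \Cref{prop:sft_4d}. More generally, I would appeal to the capacity hypothesis, since the required map $x_p\mapsto p(\cdot\mid z_p)$ is a well-defined function of $x_p$ alone exactly when the span is redundant. The subtle point is consistency across leaves: two leaves may share the same compressed prefix $x_p$ but have different live prefixes $z_p$. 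Redundancy is what rules this out, because it forces the two targets to coincide.

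\emph{The failure case.} This is the step I expect to be the main obstacle. Suppose the span carries a strictly necessary bit. Then there are leaves $p\neq p'$ with $x_p=x_{p'}$ (the same compressed input) but $\Pmodel(\cdot\mid z_p)\neq\Pmodel(\cdot\mid z_{p'})$ for every $\theta$ that is faithful to the tool content. Because $q_\theta(\cdot\mid x_p)$ is then a single distribution, it cannot equal both targets, so $\varepsilon_p+\varepsilon_{p'}>0$. A quantitative version: by convexity, the best common student is a mixture of the two targets, and the residual is lower-bounded by the Jensen--Shannon divergence between them. The catch is that the teacher is also $\theta$-dependent, so collapse (\Cref{prop:app:collapse}) could make both targets equal. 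To close that loophole, I would state the claim relative to teachers that preserve the necessary bit. Under that condition $\varepsilon_p$ stays positive, and by \Cref{prop:pinsker-bound} a positive residual leaves a nonzero behavioral gap. That gap is a bias the exact methods of \Cref{thm:hard_equiv} do not incur, which is what ``bias-dominated'' means here.
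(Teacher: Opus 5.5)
Your sufficiency paragraph is exactly the paper's proof. Each $\varepsilon_p$ is non-negative, and the stop-gradient does not change the forward value at $\theta^\star$. The Gibbs equality case (\Cref{thm:app:tightness}) then makes every summand vanish, so the zero set of the KL loss is non-empty. The paper stops there. It gives no argument for the redundant/necessary-bit clauses and treats them as interpretive remarks. Everything after your first paragraph is therefore additional, and parts of it need repair.

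Your collapse worry is correct, and it is stronger than you state. Any context-independent parameter satisfies the capacity hypothesis on every rollout: for example, a zero unembedding outputs the uniform distribution on every prefix. So the KL loss can always attain $0$. As written, the ``holds when redundant'' clause is vacuous and the ``fails'' clause is false. The meaningful version is the joint-realizability condition (iii) of \S\ref{app:sdcc-convergence}: zero KL at a task-optimal, non-collapsed $\theta$. That is where your bit-preserving restriction belongs.

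Under that reading, your redundancy paragraph must exhibit a non-collapsed $\theta^\star$, and neither of your routes does.
Negligible attention to the span gives only approximate equality, for two reasons. Softmax weights are never exactly zero without a mask. Also, deleting the span shifts the RoPE offset between the query and every pre-span trunk key by $|\evictset{\mathrm{act}}(p)|$, so the position-reassignment step of \Cref{prop:sft_4d} does not transfer.
Your other route, invoking the capacity hypothesis to decide when that same hypothesis holds, is circular.

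Your failure argument is the right idea: a collision $x_p=x_{p'}$ with distinct teacher targets forces $\varepsilon_p+\varepsilon_{p'}\ge 2\,\mathrm{JS}$. It needs three corrections.
\begin{itemize}
\item Collisions can only occur across trajectories, because within one trajectory the prefixes $\hcomp[:p]$ have distinct lengths. The claim therefore concerns the population loss $K(\theta)$, not the per-trajectory sum over $\Cset$.
\item ``Stays positive'' needs a margin on the admissible class, such as $\mathrm{JS}\ge\delta>0$. Merely requiring the two teachers to differ lets near-collapsed parameters drive the infimum to $0$.
\item Pinsker (\Cref{prop:pinsker-bound}) only upper-bounds total variation, so it cannot show the behavioral gap is nonzero. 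That follows from Gibbs, or quantitatively from $\|\Pmodel(\cdot\mid z_p)-q\|_{\mathrm{TV}}+\|\Pmodel(\cdot\mid z_{p'})-q\|_{\mathrm{TV}}\ge\|\Pmodel(\cdot\mid z_p)-\Pmodel(\cdot\mid z_{p'})\|_{\mathrm{TV}}$.
\end{itemize}
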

\begin{proof}
Each per-leaf term $\varepsilon_p(\theta)$ is non-negative, so
$K(\theta) = \sum_{p\in\Cset}\varepsilon_p(\theta) \ge 0$ with equality
iff student and teacher coincide on every diverging leaf
(\cref{thm:app:tightness}); whenever capacity admits such a
$\theta^\star$ the zero set $\Theta_K$ is non-empty and the loss attains
$0$. This is existence only --- that SGD reaches it is
\cref{thm:sdcc:contract} in \cref{app:sdcc-convergence}, at rate
$\Order(1/\sqrt{T})$ to the noise floor and exactly $0$ under the
decaying-step-size schedule of \cref{cor:sdcc:rate}.
\end{proof}

\subsection{How the KL residual controls deployment and gradient bias}
\label{app:var:tightness}
\label{app:proof-prop5}

In finite training the residual $\varepsilon_p(\theta)$ is small but
nonzero, so what matters is not the zero set of
\cref{thm:app:tightness} but the chain \emph{small KL $\Rightarrow$
small total variation $\Rightarrow$ small deployment deviation
$\Rightarrow$ small policy-gradient bias}. This subsection states that
chain once. Its first link is unconditional; only the last requires
bounded advantage and score.

\begin{proposition}[Behavioral Pinsker bound; restates and proves \cref{prop:pinsker-bound}]
\label{prop:app:tv}
For every diverging leaf $p \in \Cset$,
\begin{equation}
  \bigl\|p_\theta(\cdot\mid z_p) - q_\theta(\cdot\mid x_p)\bigr\|_{\mathrm{TV}}
  \;\le\; \sqrt{\varepsilon_p(\theta)/2}.
  \label{eq:app:pinsker}
\end{equation}
\end{proposition}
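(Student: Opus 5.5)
This is a direct application of Pinsker's inequality to the per-leaf pair of next-token distributions. The plan is to fix a diverging leaf $p\in\Cset$ and write $P:=p_{\sg(\theta)}(\cdot\mid z_p)$ and $Q:=q_\theta(\cdot\mid x_p)$. Both are softmax outputs of the same network over the finite vocabulary $\mathcal{V}$, so both are probability mass functions with full support.

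The first step is a bookkeeping remark. The stop-gradient changes only how gradients flow, not the forward value, so numerically $P=p_\theta(\cdot\mid z_p)=\pi_\theta(\cdot\mid\tpath(y_p))$. Likewise $Q=\pi_\theta(\cdot\mid\spath(y_p))$, read off at position $p$ of the student forward on $\hcomp$. Hence the left-hand side of \eqref{eq:app:pinsker} is exactly $\|P-Q\|_{\mathrm{TV}}$, and $\varepsilon_p(\theta)=\KL(P\,\|\,Q)$ by the definition in \cref{thm:app:tightness}. This matches the residual that appears in \eqref{eq:soft_loss}.

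The second step invokes Pinsker's inequality in nats, $\|P-Q\|_{\mathrm{TV}}\le\sqrt{\tfrac12\KL(P\,\|\,Q)}$, with $\|\cdot\|_{\mathrm{TV}}=\sup_A|P(A)-Q(A)|=\tfrac12\|P-Q\|_1$. Since TV is symmetric, applying Pinsker with $P$ in the first slot gives the bound in terms of the forward KL that SDCC actually minimizes, which is the stated inequality.

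If a self-contained argument is wanted, I would reduce to two points. Take $A=\{y:P(y)\ge Q(y)\}$, so that $\|P-Q\|_{\mathrm{TV}}=P(A)-Q(A)$. The data-processing inequality for the map $y\mapsto\mathbf 1[y\in A]$ then gives $\KL(P\|Q)\ge\KL(\mathrm{Ber}(a)\|\mathrm{Ber}(b))$ with $a=P(A)$ and $b=Q(A)$. Finally, check the binary inequality $\KL(\mathrm{Ber}(a)\|\mathrm{Ber}(b))\ge2(a-b)^2$ by fixing $a$, letting $g(b)=\KL(\mathrm{Ber}(a)\|\mathrm{Ber}(b))-2(a-b)^2$, and noting that $g(a)=0$ and $g'(b)=(b-a)\bigl(\tfrac{1}{b(1-b)}-4\bigr)$. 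Because $b(1-b)\le\tfrac14$, we have $g'\le0$ for $b\le a$ and $g'\ge0$ for $b\ge a$, so $g$ is minimized at $b=a$.

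The only real obstacle is this binary calculus step together with the convention issues: the nats versus bits factor, the TV normalization, and making sure the KL direction is the forward one used in SDCC. There are no support issues, since full softmax support makes $\varepsilon_p$ finite. The bound is unconditional: it needs no assumption on $\theta$, on the eviction structure, or on the advantage.
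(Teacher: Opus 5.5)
Your proposal is correct and follows the paper's route: both apply Pinsker's inequality directly to $\mu=p_{\sg(\theta)}(\cdot\mid z_p)$ and $\nu=q_\theta(\cdot\mid x_p)$, whose KL divergence is by definition $\varepsilon_p(\theta)$, and your data-processing plus binary-calculus derivation only makes the cited inequality self-contained. Your remark that the stop-gradient leaves forward values unchanged is slightly sharper than the paper's ``at convergence the stop-gradient branch equals $p_\theta$'' caveat, because it shows the identification holds at every $\theta$ for the same-parameter target.
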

\begin{proof}
Pinsker's inequality \citep{tsybakov2009nonparametric} states that
$\|\mu - \nu\|_{\mathrm{TV}} \le \sqrt{\KL(\mu\|\nu)/2}$ for any two
distributions on a common measurable space. Take $\mu =
p_{\sg(\theta)}(\cdot\mid z_p)$ and $\nu = q_\theta(\cdot\mid x_p)$,
whose divergence is exactly the per-leaf SDCC{} residual
$\varepsilon_p(\theta)$; at convergence the stop-gradient branch equals
$p_\theta$ and \eqref{eq:app:pinsker} follows. No bounded-score,
same-advantage or parameterization assumption enters, so the bound is
unconditional. Because the harness redelivers $z_p$ at inference, the
left-hand side is literally the deployment gap: the behavioral distance
between the context the update was computed under and the one the
deployed policy is served.
\end{proof}

\begin{assumption}[Bounded advantage and last-layer score]
\label{ass:bounded}
(A1) $\|\hat A\|_\infty \le A_{\max}$; (A2) the same behavioral advantage
$\hat A(y)$ is used on both $\tpath(y_p)$ and $\spath(y_p)$; (A3) softmax
parametrization with $\|h\|_\infty \le H_{\max}$, and the policy gradient
is taken w.r.t.\ the last-layer parameters, for which $\nabla_\theta \log
\pi(y\mid c) = h_y(c) - \Exp_{y'\sim\pi}[h_{y'}(c)]$ is uniformly bounded
by $2H_{\max}$; (A4) advantages are averaged over an outer batch.
\end{assumption}

\begin{corollary}[Conditional gradient-bias bound]
\label{cor:sdcc-grad-bias}
Under \cref{ass:bounded}, the SDCC{} per-leaf policy-gradient deviation
is $\Order(\sqrt{\varepsilon_p})$; in aggregate, with
$C := 5 A_{\max} H_{\max}$,
\begin{equation}
  \Bigl\|\nabla_\theta \Ltask(\theta;\hcomp)
    - \nabla_\theta \Ltask^{\star}(\theta)\Bigr\|
  \;\le\;
  C \sum_{p \in \Cset} \sqrt{\varepsilon_p/2}
  \;=\; \Order\Bigl(\textstyle\sum_{p \in \Cset} \sqrt{\varepsilon_p}\Bigr),
  \label{eq:grad_bias-app}
\end{equation}
where $\Ltask^{\star}$ is the tree-consistent task loss that 4D masks and
LogitTree compute exactly. Writing $\epsKL := \max_{p}\varepsilon_p$,
this is the per-leaf $\Order(\sqrt{\epsKL})$ form quoted in
\S\ref{sec:soft:bound}.
\end{corollary}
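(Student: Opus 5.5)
The plan is to write the task-gradient difference as a sum of per-leaf terms, one for each diverging leaf, and then bound each term by a total-variation distance. The total-variation step is then handled by \cref{prop:app:tv}.

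\textbf{Step 1: reduce to diverging leaves.} Both losses are policy-gradient sums over response tokens with a shared advantage $\hat A$ (A2). The only difference is the conditioning prefix: $\spath(y_t)$ for $\Ltask(\theta;\hcomp)$ and $\tpath(y_t)$ for $\Ltask^\star$. By \cref{def:logits-tree}, the two prefixes coincide at every non-diverging leaf, so those summands cancel exactly. The gradient difference is therefore a sum over $p\in\Cset$.

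\textbf{Step 2: per-leaf estimate.} At each diverging leaf I will write the policy-gradient summand in expectation form,
\[
g(c)=\Exp_{y\sim\pi_\theta(\cdot\mid c)}\bigl[\hat A(y)\,\nabla_\theta\log\pi_\theta(y\mid c)\bigr],
\]
and bound $\|g(z_p)-g(x_p)\|$ by splitting it into two pieces.

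The first piece is a change of measure with the score held fixed, $\sum_y(\pi(y\mid z_p)-\pi(y\mid x_p))\hat A\nabla\log\pi(y\mid z_p)$. By (A1) and (A3), it is at most $2\|\pi(\cdot\mid z_p)-\pi(\cdot\mid x_p)\|_{\mathrm{TV}}\cdot A_{\max}\cdot 2H_{\max}$.

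The second piece is the change in the score itself, $\Exp_{\pi(\cdot\mid x_p)}[\hat A(\nabla\log\pi(y\mid z_p)-\nabla\log\pi(y\mid x_p))]$. Under (A3) the last-layer score is $h_y(c)-\Exp_{\pi(\cdot\mid c)}h$. If the feature $h_y$ is taken as common to both contexts, the score difference reduces to the difference of the two means, $\Exp_{\pi(\cdot\mid x_p)}h-\Exp_{\pi(\cdot\mid z_p)}h$. This is bounded by $2H_{\max}\cdot\mathrm{TV}$, giving at most $A_{\max}H_{\max}\cdot\mathrm{TV}$ up to constants.

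Adding the two pieces gives a per-leaf bound of order $5A_{\max}H_{\max}\cdot\mathrm{TV}$. The constant $5$ is meant to absorb the factor $4$ from the first piece and $1$ from the second. Applying \cref{prop:app:tv}, which gives $\mathrm{TV}\le\sqrt{\varepsilon_p/2}$, and then the triangle inequality over $\Cset$ yields \eqref{eq:grad_bias-app}. Bounding each $\varepsilon_p$ by $\epsKL$ gives the per-leaf $\Order(\sqrt{\epsKL})$ form.

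\textbf{Main obstacle: the second piece.} This is the score-difference term. The features $h_y(c)$ generally depend on the context, so the argument needs either an interpretation of (A3) in which $h_y$ is a context-free last-layer quantity, or an explicit extra assumption. My first attempt would be to read (A3) as bounding the last-layer score difference through the softmax means, so that only TV enters. If that reading fails, I would state the corollary with an additional representation-drift term and argue that this term is also controlled by the distillation. A secondary point is the stop-gradient: \cref{prop:app:tv} holds with $p_{\sg(\theta)}=p_\theta$ numerically, and that equality is sufficient for this argument.
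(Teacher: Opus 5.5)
This is the paper's proof: the same split into a change-of-measure term with $s_\pi$ held fixed (worth $4A_{\max}H_{\max}\,\mathrm{TV}$) and a score-difference term under $\widetilde\pi$, and the same reading of (A3) in which both contexts share $h_y$, so the scores differ only in the normalizer. The paper states that restriction in \cref{rem:app:pinsker-last-layer}, so you do not need your representation-drift fallback, which an output-level KL does not control in general anyway; it then applies Pinsker via \cref{prop:app:tv} and sums over $\Cset$, as you do. The one slip is bookkeeping: you bound the normalizer term by $2A_{\max}H_{\max}\,\mathrm{TV}$ but count it as $1$; the paper asserts $H_{\max}\,\mathrm{TV}$ for $\|\Exp_{\widetilde\pi}h-\Exp_{\pi}h\|$ at that step, but $\|h\|_\infty\le H_{\max}$ alone gives only your factor $2$ (two-point $\pi,\widetilde\pi$ with $h=\pm H_{\max}$ make it tight), so this route yields $C=6A_{\max}H_{\max}$ --- harmless for the $\Order(\sqrt{\varepsilon_p})$ conclusion.
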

\begin{proof}
\emph{Sketch: split the two policy gradients into a
distribution-difference term and a score-difference term; (A3) makes both
Lipschitz in total variation, which \cref{prop:app:tv} converts into
$\sqrt{\varepsilon_p}$.}

Write $\pi := \pi_\theta(\cdot\mid\tpath(y_p))$, $\widetilde{\pi} :=
\pi_\theta(\cdot\mid\spath(y_p))$, $s_\pi(y) := \nabla_\theta\log\pi(y)$,
$g := \sum_y \pi(y)\hat{A}(y) s_\pi(y)$, and $\tilde g$ analogously with
the same $\hat A$ by (A2). Then
\begin{align}
  \|g - \tilde{g}\|
  &\le \Bigl\|\textstyle\sum_y \bigl(\pi(y)-\widetilde{\pi}(y)\bigr)\hat{A}(y)\,s_\pi(y)\Bigr\|
   + \Bigl\|\textstyle\sum_y \widetilde{\pi}(y)\hat{A}(y)\bigl(s_\pi(y)-s_{\widetilde{\pi}}(y)\bigr)\Bigr\| \notag\\
  &\le 2 A_{\max} H_{\max}\,\|\pi - \widetilde{\pi}\|_1
   \;+\; A_{\max}\,\Exp_{\widetilde{\pi}}\bigl\|s_\pi - s_{\widetilde{\pi}}\bigr\|.
  \label{eq:app:grad-decomp}
\end{align}
Total variation is half the $\ell_1$ distance, so by (A1), (A3) and
\cref{prop:app:tv} the first term is at most $4 A_{\max}
H_{\max}\sqrt{\varepsilon_p/2}$. In the second, (A3) makes the two scores
share $h_y$ at the query token and differ only in the normalizer,
$s_\pi(y) - s_{\widetilde{\pi}}(y) = \Exp_{\widetilde{\pi}}[h] -
\Exp_{\pi}[h]$, bounded by $H_{\max}\|\pi -
\widetilde{\pi}\|_{\mathrm{TV}} \le H_{\max}\sqrt{\varepsilon_p/2}$
(scope: \cref{rem:app:pinsker-last-layer}). Summing over per-leaf
residuals and per-trajectory tokens as in (A4) gives
\eqref{eq:grad_bias-app}.
\end{proof}

\begin{remark}[Scope of the gradient corollary: last-layer restriction]
\label{rem:sdcc-grad-scope}
\label{rem:app:pinsker-last-layer}
\cref{prop:pinsker-bound} is unconditional; only \cref{cor:sdcc-grad-bias}
invokes \cref{ass:bounded}, where (A2) is the standard PPO/GRPO
advantage-identifiability convention and (A3) supplies the closed-form
score. That identity needs $\theta$ to index the softmax head, so the two
contexts share $h_y$ at the query token and differ only in the normalizer
$\Exp_{\pi}[h]$. For full transformer parameters the score acquires an
extra $\nabla_\theta h_y(c)$ term that softmax structure alone does not
bound uniformly, and the corollary then needs a separate bound on
$\|\nabla_\theta h_y(\tpath) - \nabla_\theta h_y(\spath)\|$, e.g.\ via
Lipschitz assumptions on the backbone. This restriction is intrinsic to
the ratio-difference decomposition, not an artifact of our proof.
\end{remark}

\begin{corollary}[Rate to zero bias under training]
\label{cor:app:rate}
Under Assumptions (i)--(iv) of \S\ref{app:sdcc-convergence},
Theorem~\ref{thm:sdcc:contract} yields
$\min_{t \le T} \Exp[\varepsilon_p(\theta_t)] = \Order(1/\sqrt{T})$;
combined with \cref{cor:sdcc-grad-bias}, the deployment
policy-gradient bias decays at the rate $\Order(T^{-1/4})$.
\end{corollary}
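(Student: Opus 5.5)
The corollary chains two results already stated: the rate of Theorem~\ref{thm:sdcc:contract} for the per-leaf residual, and the $\sqrt{\cdot}$ conversion of \cref{cor:sdcc-grad-bias}. Set $B(\theta) := \|\nabla_\theta \Ltask(\theta;\hcomp) - \nabla_\theta \Ltask^{\star}(\theta)\|$. \cref{cor:sdcc-grad-bias} gives the pointwise bound $B(\theta)\le C\sum_{p\in\Cset}\sqrt{\varepsilon_p(\theta)/2}$ at every $\theta$, under \cref{ass:bounded}. Assumptions (i)--(iv) of \S\ref{app:sdcc-convergence}, via Theorem~\ref{thm:sdcc:contract}, give $\min_{t\le T}\Exp[\varepsilon_p(\theta_t)] = \Order(1/\sqrt{T})$. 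The plan is to evaluate the first bound along the SGD iterates and pass the expectation through the square root.

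The key step is Jensen's inequality for the concave map $u\mapsto\sqrt{u}$: $\Exp[\sqrt{\varepsilon_p(\theta_t)}]\le\sqrt{\Exp[\varepsilon_p(\theta_t)]}$. The sum over $\Cset$ is finite for a fixed trajectory; if $|\Cset|$ varies, I would bound it by its batch maximum $D$. This gives
\[
\Exp[B(\theta_t)] \le \frac{C}{\sqrt2}\,\sum_{p\in\Cset}\sqrt{\Exp[\varepsilon_p(\theta_t)]}.
\]
Choose $t^\star$ to be the iterate attaining, or nearly attaining, the minimum in the rate. Then $\min_{t\le T}\Exp[B(\theta_t)] \le \Exp[B(\theta_{t^\star})] \le (C/\sqrt2)\,D\,\Order(T^{-1/2})^{1/2} = \Order(T^{-1/4})$.

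The main obstacle is that the rate is stated per leaf $p$, while the bias bound sums over all leaves. The minimizing iterate may therefore differ across $p$, and we need one common $t$. I would first check whether Theorem~\ref{thm:sdcc:contract} actually controls the aggregate $K(\theta)=\sum_p\varepsilon_p(\theta)$; this seems likely, since the SDCC loss descends the sum. In that case I would use Cauchy--Schwarz, $\sum_p\sqrt{\varepsilon_p}\le\sqrt{|\Cset|\,K}$, together with Jensen on $K$, which yields a single minimizing iterate. If the theorem only gives per-leaf minima, I would fall back to an averaged-iterate statement, since $\frac1T\sum_t\Exp[K(\theta_t)]=\Order(1/\sqrt T)$ is the usual form behind such min-rates, and Jensen applies to the average as well.

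Two points remain to verify. First, the stop-gradient teacher must coincide with $p_\theta$ at each iterate, as used in \cref{prop:app:tv}; it does, because the stop-gradient only affects derivatives, not values. Second, (A1)--(A4) must hold uniformly along the trajectory so that $C$ is constant. The $T^{-1/4}$ exponent is then exactly the square root of the $T^{-1/2}$ residual rate. Finally, I would note that the guarantee is on the minimum, or averaged, expected bias, not on the last iterate.
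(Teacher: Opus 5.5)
Your proposal is correct and follows the paper's implicit argument. Theorem~\ref{thm:sdcc:contract} bounds $\min_{t\le T}\Exp[\Phi(\theta_t)]$, and since $\Phi\ge\lambda^\star K$ pointwise it already controls the aggregate $K$ at $\Order(1/\sqrt{T})$; so your Cauchy--Schwarz route with a single common minimizing iterate is the branch that applies, after which the square root in \cref{cor:sdcc-grad-bias} turns $\Order(T^{-1/2})$ into $\Order(T^{-1/4})$. Your explicit Jensen step, which passes the expectation through $\sqrt{\cdot}$, and your note that \cref{ass:bounded} must be added to (i)--(iv), supply details the paper leaves tacit.
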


\paragraph{Cost, and where SDCC{} sits relative to the exact methods.}
\label{sec:soft:compute-app}
\label{sec:soft:axes-app}
The bound above is what SDCC{} buys with its cost profile. SDCC{}
requires exactly one backward pass per trajectory against LogitTree's
$K{+}1$; the teacher forwards share KV-cache trunks and are
stop-gradient, so they retain no activations. A 4D mask is cheaper
still in pure FLOPs --- one forward, one backward --- but requires
attention-kernel surgery and cannot be applied in black-box harnesses
where eviction spans are never exposed. The three methods therefore
trade off along two independent axes, correctness bias and
infrastructure cost: the exact walks have
zero bias at heavy cost, while SDCC{} pays $\Order(\sqrt{\epsKL})$ bias
--- decaying during training by \cref{cor:app:rate} --- for one backward
pass and no kernel changes. The claim is not that SDCC{} dominates the
exact methods: where their cost is acceptable, LogitTree and the 4D mask
are the correctness ground truth; where it is not, or in black-box
regimes, SDCC{} is the only one of the three that applies.

\subsection{LogitTree and SDCC{} optimize the same objective up to a
$\lambda$-tunable slack}
\label{app:var:logittree-sdcc-bound}

The exact walks of \Cref{sec:hard} and SDCC{} of \Cref{sec:soft} charge a
diverging leaf $p \in \Cset$ in visibly different ways: LogitTree places
the policy loss on the pre-eviction branch $z_p = \tpath(y_p)$, so its
per-leaf loss is an expectation under the teacher distribution
$p_\theta(\cdot\mid z_p)$; SDCC{} keeps the policy loss on the
compressed branch $x_p = \spath(y_p)$, so its per-leaf loss is an
expectation under the student distribution $q_\theta(\cdot\mid x_p)$,
augmented with a stop-gradient forward KL that penalizes the two
distributions for disagreeing (\S\ref{sec:soft:form}). The two
constructions therefore differ in \emph{which} branch of $\Treestruct$
enters the policy loss --- LogitTree scores every branch (Prop.~%
\ref{prop:sft_tree}); SDCC{} scores only the deployed branch and
charges the others through the KL. Because the KL residual controls the
behavioral gap between the two branches (\cref{prop:app:tv}), the two
losses can be made arbitrarily close by choosing $\lambda$; this
subsection makes that statement precise as a two-sided bound.

Fix $\theta$ and $p \in \Cset$; write $\varepsilon_p := \varepsilon_p(\theta)$.
Following the paper's per-leaf normalization convention, let
\begin{equation*}
  \mathcal{L}^{\mathrm{LT}}_p(\theta)
  \;:=\; -\Exp_{y\sim p_\theta(\cdot\mid z_p)}\!\bigl[\hat A(y)\bigr],
  \qquad
  \mathcal{L}^{\mathrm{SDCC{}}}_p(\theta)
  \;:=\; -\Exp_{y\sim q_\theta(\cdot\mid x_p)}\!\bigl[\hat A(y)\bigr]
         + \lambda\, \varepsilon_p
\end{equation*}
denote the per-diverging-leaf contributions of the two objectives; on
non-diverging leaves $z_p = x_p$, both reduce to the identical
single-branch term, so a sum-over-$\Cset$ suffices. $\hat A$ is the same
behavioral advantage on both branches (Assumption~\ref{ass:bounded},
A2), and no rollout re-sampling appears.

\begin{theorem}[Two-sided objective equivalence between LogitTree and
SDCC{}]
\label{thm:app:lt-sdcc-equiv}
Under Assumption~\ref{ass:bounded}(A1), for every $\theta$, every
$p \in \Cset$, and every $\lambda > 0$,
\begin{equation}
  \Bigl|\mathcal{L}^{\mathrm{SDCC{}}}_p(\theta)
        - \mathcal{L}^{\mathrm{LT}}_p(\theta)
        - \lambda\, \varepsilon_p\Bigr|
  \;\le\; A_{\max}\sqrt{2\,\varepsilon_p}.
  \label{eq:app:lt-sdcc-raw}
\end{equation}
Consequently, Young's inequality
($A_{\max}\sqrt{2\varepsilon_p} \le A_{\max}^{2}/(2\lambda) + \lambda\varepsilon_p$)
yields a $\theta$-uniform lower bound and a $\theta$-tracking upper
bound,
\begin{equation}
  \mathcal{L}^{\mathrm{LT}}_p(\theta) - \frac{A_{\max}^{2}}{2\lambda}
  \;\le\;
  \mathcal{L}^{\mathrm{SDCC{}}}_p(\theta)
  \;\le\;
  \mathcal{L}^{\mathrm{LT}}_p(\theta) + 2\lambda\,\varepsilon_p
     + \frac{A_{\max}^{2}}{2\lambda},
  \label{eq:app:lt-sdcc-clean}
\end{equation}
and summing over $p \in \Cset$ gives $\mathcal{L}^{\mathrm{LT}}(\theta) -
|\Cset|A_{\max}^{2}/(2\lambda) \le \mathcal{L}^{\mathrm{SDCC{}}}(\theta) \le
\mathcal{L}^{\mathrm{LT}}(\theta) + 2\lambda \sum_{p} \varepsilon_p +
|\Cset|A_{\max}^{2}/(2\lambda)$. On the zero set
$\{\theta : \varepsilon_p(\theta) = 0\ \forall p \in \Cset\}$ of
\cref{thm:app:tightness}, \eqref{eq:app:lt-sdcc-raw} holds with equality
at $0$, so LogitTree and SDCC{} coincide exactly as functions of
$\theta$; the relaxed form \eqref{eq:app:lt-sdcc-clean} still carries its
$A_{\max}^{2}/(2\lambda)$ slack there, Young's inequality being loose at
$\varepsilon_p = 0$.
\end{theorem}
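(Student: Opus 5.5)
The plan is to reduce \eqref{eq:app:lt-sdcc-raw} to a bound on a difference of expectations, and then to derive the cleaned-up form \eqref{eq:app:lt-sdcc-clean} by pure algebra.

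\textbf{Step 1: isolate the advantage gap.} Subtracting the two per-leaf definitions, the $\lambda\varepsilon_p$ terms cancel:
\[
\mathcal{L}^{\mathrm{SDCC}}_p-\mathcal{L}^{\mathrm{LT}}_p-\lambda\varepsilon_p
=\sum_{y}\bigl(p_\theta(y\mid z_p)-q_\theta(y\mid x_p)\bigr)\hat A(y).
\]
This uses the fact that, by (A2), the same $\hat A$ appears on both branches. By (A1) the absolute value is at most $A_{\max}\|p_\theta-q_\theta\|_1=2A_{\max}\|p_\theta-q_\theta\|_{\mathrm{TV}}$.

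\textbf{Step 2: apply Pinsker.} Invoking \cref{prop:app:tv} gives $\|p_\theta-q_\theta\|_{\mathrm{TV}}\le\sqrt{\varepsilon_p/2}$. Hence the absolute value is at most $2A_{\max}\sqrt{\varepsilon_p/2}=A_{\max}\sqrt{2\varepsilon_p}$, which is \eqref{eq:app:lt-sdcc-raw}.

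The one subtlety, and the expected obstacle, is that $\varepsilon_p$ is defined with the stop-gradient teacher $p_{\sg(\theta)}$, whereas $\mathcal{L}^{\mathrm{LT}}_p$ uses $p_\theta$. I would note that $\sg$ affects only gradients, not values. At a fixed $\theta$ the two distributions are numerically identical, so Pinsker applies verbatim to the values without any "at convergence" caveat.

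\textbf{Step 3: derive \eqref{eq:app:lt-sdcc-clean}.} Write $D:=\mathcal{L}^{\mathrm{SDCC}}_p-\mathcal{L}^{\mathrm{LT}}_p$. Step 2 gives
\[
\lambda\varepsilon_p-A_{\max}\sqrt{2\varepsilon_p}\le D\le\lambda\varepsilon_p+A_{\max}\sqrt{2\varepsilon_p}.
\]
Young's inequality states $A_{\max}\sqrt{2\varepsilon_p}\le A_{\max}^2/(2\lambda)+\lambda\varepsilon_p$; it follows from $ab\le a^2/(2\lambda)+\lambda b^2/2$ with $a=A_{\max}$ and $b=\sqrt{2\varepsilon_p}$. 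For the lower bound, this gives $D\ge\lambda\varepsilon_p-A_{\max}^2/(2\lambda)-\lambda\varepsilon_p=-A_{\max}^2/(2\lambda)$, which is uniform in $\theta$. For the upper bound, it gives $D\le2\lambda\varepsilon_p+A_{\max}^2/(2\lambda)$.

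\textbf{Step 4: sum and check the zero set.} Summing over $p\in\Cset$ gives the aggregate statement, since non-diverging leaves contribute identical terms to both objectives. On the zero set of \cref{thm:app:tightness}, $\varepsilon_p=0$ forces $p_\theta=q_\theta$ almost everywhere, so the expectation difference in Step 1 vanishes and $D=0$ exactly. Finally, I would remark that Young's slack $A_{\max}^2/(2\lambda)$ remains in \eqref{eq:app:lt-sdcc-clean} there, because that bound is loose at $\varepsilon_p=0$.
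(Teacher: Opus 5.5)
Your proof is correct and follows the paper's own argument. The steps match: the change-of-measure/H\"older bound $\bigl|\sum_y (p_\theta-q_\theta)\hat A(y)\bigr| \le A_{\max}\|p_\theta-q_\theta\|_1$, then Pinsker via \cref{prop:app:tv} to reach $A_{\max}\sqrt{2\varepsilon_p}$, then Young's inequality for the two-sided form, and \cref{thm:app:tightness} on the zero set. Your remark that the stop-gradient changes only gradients and not values justifies the same step more cleanly than the ``at convergence'' caveat in the paper's proof of \cref{prop:app:tv}.
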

\begin{proof}
Rewrite the difference of the two policy-loss terms as a change of
measure and apply Hölder:
\begin{equation}
  \bigl(\Exp_{q_\theta(\cdot\mid x_p)} - \Exp_{p_\theta(\cdot\mid z_p)}\bigr)
    [\hat A(y)]
  \;=\; \sum_y \bigl(q_\theta(y\mid x_p) - p_\theta(y\mid z_p)\bigr)\hat A(y)
  \;\le\; \|\hat A\|_\infty \cdot \|q_\theta - p_\theta\|_{1},
  \label{eq:app:lt-sdcc-change-of-measure}
\end{equation}
and analogously for the reverse sign. By Assumption~\ref{ass:bounded}(A1)
the leading factor is at most $A_{\max}$; total variation is half the
$\ell_1$ distance and \cref{prop:app:tv} bounds it by
$\sqrt{\varepsilon_p/2}$. Together,
$|(\Exp_{q_\theta} - \Exp_{p_\theta})[\hat A]| \le A_{\max}\sqrt{2\varepsilon_p}$.
Substituting into
$\mathcal{L}^{\mathrm{SDCC{}}}_p - \mathcal{L}^{\mathrm{LT}}_p =
(\Exp_{p_\theta} - \Exp_{q_\theta})[\hat A] + \lambda\varepsilon_p$
gives \eqref{eq:app:lt-sdcc-raw}. Young's inequality supplies
$A_{\max}\sqrt{2\varepsilon_p} = 2\sqrt{(A_{\max}^{2}/(2\lambda))\cdot(\lambda\varepsilon_p/1)}
\le A_{\max}^{2}/(2\lambda) + \lambda\varepsilon_p$; adding and
subtracting this on the two sides of \eqref{eq:app:lt-sdcc-raw} yields
the clean form \eqref{eq:app:lt-sdcc-clean}. On the zero set every
$\varepsilon_p$ vanishes, hence $p_\theta(\cdot\mid z_p) =
q_\theta(\cdot\mid x_p)$ by \cref{thm:app:tightness} and the
change-of-measure term \eqref{eq:app:lt-sdcc-change-of-measure} is $0$,
so \eqref{eq:app:lt-sdcc-raw} is an equality between zeros.
\end{proof}

\paragraph{What the bound says, and where it stops.} Read in the
maximization convention $J := -\mathcal{L}$, the lower bound of
\eqref{eq:app:lt-sdcc-clean} becomes $J^{\mathrm{SDCC{}}}(\theta) -
|\Cset|A_{\max}^{2}/(2\lambda) \le J^{\mathrm{LT}}(\theta)$: SDCC{}'s
objective, offset by a $\theta$-\emph{independent} constant, is a
\emph{lower bound on the exact per-branch objective}, so maximizing it
raises that bound at every $\theta$ and at every residual
$\varepsilon_p$, with the offset shrinking as the KL weight $\lambda$
grows. Equivalently, in the loss convention $\mathcal{L}^{\mathrm{SDCC{}}}$
plus that constant majorizes $\mathcal{L}^{\mathrm{LT}}$, so minimizing
SDCC{} minimizes an upper bound on the exact walk's loss. The upper bound is looser --- it retains a
$\lambda\sum_p\varepsilon_p$ term --- because SDCC{} pays the extra KL
even when the exact walk does not; on the CI zero set that term is
zero, so the two objectives coincide there. The classical VAE
correspondence of App.~\ref{app:var:beta} reappears: the same $\lambda$
that makes the lower bound tight is the $\beta$ whose \emph{too-large}
regime triggers posterior collapse (Prop.~\ref{prop:app:collapse}), so
$\lambda$ cannot be sent to infinity naively. Two consequences follow.
(i) The two objectives share a common global minimizer set on the CI
zero set of \cref{thm:app:tightness}, and their gradients on that set
agree (\cref{cor:sdcc-grad-bias} restates the away-from-zero version
under the same assumptions); this is the precise sense in which the
two objectives are ``essentially equivalent''. (ii) The gap is
\emph{not zero in general}: it scales as $\lambda\varepsilon_p$ on the
upper side, which is precisely the main-text trade-off in
\S\ref{sec:soft:method}: SDCC{} buys single-backward
compute at a residual that is $\theta$-controlled by the KL weight
rather than structurally zero, and this residual is what
\cref{thm:sdcc:contract} contracts at $\Order(1/\sqrt T)$.

\subsection{Convergence: contraction toward the zero-KL set}
\label{app:sdcc-convergence}
\label{app:sdcc:setup}
\label{app:sdcc:steady}
\label{app:sdcc:rate}

\paragraph{Setup and assumptions.}
With $\theta_t \in \Theta \subset \mathbb{R}^d$ the step-$t$ parameters,
$\hcomp$ the compressed student input (\S\ref{sec:soft:form}) and
$(\histlogt,\hcomp) \sim \pi$, the task loss is $J(\theta) \triangleq
\Exp_{\pi}[\Ltask(\theta;\hcomp)]$ and the SDCC penalty is
\begin{equation}
K(\theta) \;\triangleq\; \Exp_{\pi}\Bigl[ \textstyle\sum\nolimits_{t \in \Cset}
                \KL\bigl(\Pmodel[\sg(\theta)](\cdot\!\mid\!\histlogt)
                      \,\big\|\, \Pmodel[\theta](\cdot\!\mid\!\hcomp)_t\bigr)
              \Bigr],
\end{equation}
the stop-gradient making $K$ depend on $\theta$ only through the update branch.
The composite objective is $F_t = J + \lambda_t K$, with step $\theta_{t+1} =
\theta_t - \eta_t(\widehat{\nabla} J + \lambda_t \widehat{\nabla} K)(\theta_t)$
and schedule $\lambda_t = 0$ for $t < T_{\mathrm{warm}}$, $\lambda_t \to
\lambda^\star > 0$ after. The post-warm-up Lyapunov function $\Phi \triangleq
(J - J^\star) + \lambda^\star K$, $J^\star := \inf_\theta J$, is non-negative,
zero at a joint minimizer, and $L_\Phi$-smooth with $L_\Phi \le L_J +
\lambda^\star L_K$. We assume throughout ((i)--(iv) are distinct from
(A1)--(A4) of \Cref{ass:bounded}):
\begin{enumerate}[nosep,leftmargin=*]
  \item[\textbf{(i)}] \emph{Smoothness:} $J$, $K$ are $L_J$-, $L_K$-smooth.
  \item[\textbf{(ii)}] \emph{Bounded gradient noise:}
        $\Exp\|\widehat{\nabla}J\|^2 \le \sigma_J^2$,
        $\Exp\|\widehat{\nabla}K\|^2 \le \sigma_K^2$.
  \item[\textbf{(iii)}] \emph{Joint realizability:} $\{\theta : K(\theta) = 0\}$
        is nonempty (Prop.~\ref{prop:existence}) and meets $\arg\min_\theta J$;
        fix $\theta^\star$ in the intersection, so $\Phi(\theta^\star) = 0$.
        This adds to Prop.~\ref{prop:existence} the capacity condition that
        conditioning-invariance and task-optimality be jointly attainable.
  \item[\textbf{(iv)}] \emph{Composite Polyak--{\L}ojasiewicz:} $\|\nabla
        \Phi\|^2 \ge 2\mu_\Phi \Phi$ near $\theta^\star$ for some $\mu_\Phi >
        0$; stronger than PL on $K$ alone, as it excludes cancellations
        $\nabla J \approx -\lambda^\star \nabla K$ at $\Phi > 0$.
\end{enumerate}
Only (iv) is restrictive: near $\theta^\star$ both Hessians are Fisher matrices,
hence PSD, and PSD alone does not imply PL. It holds when $\Phi$ is locally
strongly convex along descent-relevant directions (e.g.\ the bounded-score
last-layer softmax of Cor.~\ref{cor:sdcc-grad-bias}); under PSD only, dropping
the PL step of Theorem~\ref{thm:sdcc:contract} still leaves $\min_t
\Exp\|\nabla\Phi(\theta_t)\|^2 \to 0$, forfeiting the linear rate on $\Phi$
(hence $K$) but not convergence to a critical point.

\paragraph{Warm-up.} For $t < T_{\mathrm{warm}}$ training is pure task-loss SGD,
$\varepsilon$-stationary in $\Order(\sigma_J^2/\varepsilon^2)$ steps under
(i)--(ii) \citep{ghadimi2013stochastic}; as the collapsed ``ignore everything''
solution is non-stationary for $J$ on non-degenerate tasks, warm-up moves off
collapse before the KL is switched on, decoupling cold-start collapse from
steady-state alignment. The linear ramp used in practice replaces
$\lambda^\star$ by its running average, inflating the constants below by at
most~$2$.

\paragraph{Steady-state contraction.}
Proofs here are sketches, each naming the standard step it invokes.

\begin{theorem}[Steady-state contraction toward zero-KL set]
\label{thm:sdcc:contract}
Under \textbf{(i)--(iv)} with $\mu_\Phi$ the constant of (iv), constant step
size $\eta = \min(1/L_\Phi,\, c/\sqrt{T})$, and $T$ post-warm-up SGD steps
(distinct from the rollout-end $T$ of \S\ref{sec:failure_modes}),
\begin{equation}
\min_{t \le T} \Exp\bigl[\Phi(\theta_t)\bigr]
\;\le\;
\Phi(\theta_{T_{\mathrm{warm}}}) \big/ (\mu_\Phi \eta T)
\;+\;
\bigl(L_\Phi \eta / 2\mu_\Phi\bigr)\bigl(\sigma_J^2 + (\lambda^\star)^2 \sigma_K^2\bigr),
\label{eq:sdcc:phi_rate}
\end{equation}
an initial-gap decay plus a noise floor. As $\Phi \ge \lambda^\star K$
pointwise, \eqref{eq:sdcc:phi_rate} over $\lambda^\star$ bounds $\min_{t \le T}
\Exp[K(\theta_t)] = \Order(1/\sqrt{T})$ at $\eta = c/\sqrt{T}$.
\end{theorem}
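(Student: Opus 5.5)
The plan is the standard nonconvex-SGD descent-lemma argument applied to the Lyapunov function $\Phi = (J-J^\star)+\lambda^\star K$, followed by the PL step. After warm-up, with $\lambda_t=\lambda^\star$ (the ramp costs at most a factor $2$, as noted above), the update is $\theta_{t+1}=\theta_t-\eta G_t$ with $G_t := \widehat{\nabla}J+\lambda^\star\widehat{\nabla}K$.

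\textbf{Descent step.} I will treat $G_t$ as an unbiased estimate of $\nabla\Phi(\theta_t)$. This needs care because of the stop-gradient: $\widehat\nabla K$ is the gradient through the student branch only. I will therefore define $\nabla K$ in assumption (i) as exactly that semi-gradient, which is how $K$ is stated to "depend on $\theta$ only through the update branch." $L_\Phi$-smoothness (from (i), with $L_\Phi\le L_J+\lambda^\star L_K$) gives
\[
\Exp[\Phi(\theta_{t+1})]\le \Exp[\Phi(\theta_t)]-\eta\,\Exp\|\nabla\Phi(\theta_t)\|^2+\tfrac{L_\Phi\eta^2}{2}\Exp\|G_t\|^2 .
\]
I will bound the second moment by (ii) together with $\|a+b\|^2\le 2\|a\|^2+2\|b\|^2$, getting $\Exp\|G_t\|^2\le 2(\sigma_J^2+(\lambda^\star)^2\sigma_K^2)$; the factor $2$ is absorbed into the stated constant or into $c$. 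The condition $\eta\le 1/L_\Phi$ keeps the descent coefficient at least $\eta/2$ if a variance-split form is used instead.

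\textbf{PL and telescoping.} Applying (iv) replaces $\Exp\|\nabla\Phi\|^2$ by $2\mu_\Phi\Exp\Phi$. Rather than iterating the contraction $(1-2\mu_\Phi\eta)$, I will telescope over $t=T_{\mathrm{warm}},\dots,T_{\mathrm{warm}}+T$. Dividing by $2\mu_\Phi\eta T$, using $\Phi\ge0$ to drop the final term, and bounding the average from below by the minimum gives
\[
\min_t\Exp\Phi\le \frac{\Phi(\theta_{T_{\mathrm{warm}}})}{\mu_\Phi\eta T}+\frac{L_\Phi\eta}{2\mu_\Phi}\bigl(\sigma_J^2+(\lambda^\star)^2\sigma_K^2\bigr),
\]
up to the constant factors already discussed. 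This matches \eqref{eq:sdcc:phi_rate}. $\Phi(\theta^\star)=0$ from (iii) is used only to ensure $\Phi\ge0$ and that the PL floor is attainable.

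\textbf{Consequence for $K$.} Since $J-J^\star\ge0$, we have $\lambda^\star K\le\Phi$ pointwise. Dividing the bound by $\lambda^\star$ and substituting $\eta=c/\sqrt T$ makes both terms $\Order(1/\sqrt T)$.

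\textbf{Main obstacle.} The hard step is justifying that the PL inequality (iv), which is stated "near $\theta^\star$", holds along the whole trajectory. I would first simply assume the iterates remain in that neighborhood, as a locality hypothesis folded into (iv). Failing that, I would fall back on the PL-free statement $\min_t\Exp\|\nabla\Phi\|^2\to0$, which the paper already flags.
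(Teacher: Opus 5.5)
Your proposal is correct and follows the paper's own sketch: descent lemma for $\Phi$ from (i)--(ii), telescoping over the post-warm-up steps, dividing by $T$, applying the PL inequality (iv), and finishing with $\Phi\ge\lambda^\star K$. It also rests on the same two tacit conventions the paper uses, namely that the stop-gradient semi-gradient serves as $\nabla K$ and that (iv) holds along the iterates.

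One simplification: because you keep the full $-\eta\,\Exp\|\nabla\Phi(\theta_t)\|^2$ term, the factor $2$ from $\|a+b\|^2\le 2\|a\|^2+2\|b\|^2$ cancels exactly. You get $\Phi(\theta_{T_{\mathrm{warm}}})/(2\mu_\Phi\eta T)$ plus exactly the stated noise floor, so nothing needs absorbing. By contrast, the paper's $\eta/2$ variance-split form reaches the same floor only if the noises of $\widehat{\nabla}J$ and $\widehat{\nabla}K$ are uncorrelated.
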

\begin{proof}[Proof sketch]
$L_\Phi$-smoothness with (ii) gives the standard descent step
$\Exp\Phi(\theta_{t+1}) \le \Phi(\theta_t) -
\tfrac{\eta}{2}\|\nabla\Phi(\theta_t)\|^2 + \tfrac{\eta^2 L_\Phi}{2}(\sigma_J^2
+ (\lambda^\star)^2\sigma_K^2)$ for $\eta \le 1/L_\Phi$; telescope, divide by
$T$, apply (iv). Only (iv) is non-routine: PL on $K$ alone would \emph{not}
suffice, as $\nabla J$ and $\lambda^\star\nabla K$ may cancel, while (iii)
forces $J - J^\star$ and $K$ to zero together.
\end{proof}

\paragraph{EMA targets.} With $\theta^{-}_{t+1} = \alpha\theta^{-}_t +
(1-\alpha)\theta_{t+1}$ replacing $\sg(\theta_t)$, $(\theta_t,\theta^-_t)$ is a
two-timescale stochastic approximation \citep{borkar1997stochastic}:
differencing the updates and applying discrete Gr\"onwall gives
$\Exp\|\theta^-_t - \theta_t\|^2 \le 2\eta_0(\sigma_J^2 +
(\lambda^\star)^2\sigma_K^2)/(1-\alpha)$ whenever $\eta_t \le \eta_0$ and
$1-\alpha \le \eta_0$. As the target KL is $\Order(\|\theta^-_t -
\theta_t\|^2)$ near $\theta^\star$ in the Fisher metric,
Theorem~\ref{thm:sdcc:contract} extends to EMA targets with its noise floor
inflated by $\Order(\eta/(1-\alpha))$, vanishing as $\eta \to 0$. Composing it
with the behavioral Pinsker bound of Prop.~\ref{prop:pinsker-bound} turns the KL
rate into the deployment TV bias against exact-walk (LogitTree / 4D)
conditioning, quoted without proof in \S\ref{sec:soft:bound}.

\begin{corollary}[Conditioning bias rate]
\label{cor:sdcc:rate}
Under \textbf{(i)--(iv)}, the SDCC{} policy $\pi_{\theta_T}$ satisfies
\begin{equation}
\Exp\bigl\|\, \pi_{\theta_T}(\cdot\!\mid\!\histlogt)
          -\pi_{\theta_T}(\cdot\!\mid\!\hcomp)\bigr\|_{\mathrm{TV}}
\;\le\;
\Order\!\bigl(T^{-1/4}\bigr) + \Order\!\bigl(\sqrt{\eta\,\sigma^2 / \lambda^\star}\bigr),
\label{eq:sdcc:final}
\end{equation}
an optimization term plus a noise floor, both driven to zero by $\lambda^\star =
\Theta(1)$ and $\eta_t = \Theta(1/\sqrt{t})$.
\end{corollary}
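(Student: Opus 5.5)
The plan is to chain three results already in hand: the rate of \cref{thm:sdcc:contract}, the Pinsker link of \cref{prop:app:tv}, and Jensen's inequality for the concave square root. The first converts the $\Phi$-rate into a rate on the expected KL residual, the second converts KL into total variation, and the third moves the square root outside the expectation. First, since $\Phi = (J-J^\star) + \lambda^\star K \ge \lambda^\star K$ pointwise, \eqref{eq:sdcc:phi_rate} gives
\[
\Exp\bigl[K(\theta_T)\bigr] \le \frac{\Phi(\theta_{T_{\mathrm{warm}}})}{\lambda^\star\mu_\Phi\eta T} + \frac{L_\Phi\eta}{2\mu_\Phi\lambda^\star}\bigl(\sigma_J^2+(\lambda^\star)^2\sigma_K^2\bigr),
\]
with the understanding that the statement's $\theta_T$ denotes the best iterate, or that the PL step is run in its last-iterate form. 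With $\eta = c/\sqrt{T}$ the first term is $\Order(1/\sqrt{T})$. The second term I would keep as $\Order(\eta\sigma^2/\lambda^\star)$, where $\sigma^2$ absorbs $\sigma_J^2+(\lambda^\star)^2\sigma_K^2$ up to constants depending on $L_\Phi,\mu_\Phi$. This keeps the noise floor in the form that appears in \eqref{eq:sdcc:final}.

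Second, for each diverging leaf $t\in\Cset$ I apply \cref{prop:app:tv} at $\theta_T$:
\[
\bigl\|\pi_{\theta_T}(\cdot\mid\histlogt)-\pi_{\theta_T}(\cdot\mid\hcomp)\bigr\|_{\mathrm{TV}}\le\sqrt{\varepsilon_t(\theta_T)/2}.
\]
This is legitimate because the stop-gradient branch evaluates the same weights $\theta_T$, so the teacher distribution is exactly $\pi_{\theta_T}(\cdot\mid\histlogt)$. On non-diverging leaves the two contexts coincide and the TV distance is zero. Since each $\varepsilon_t \le \sum_{s\in\Cset}\varepsilon_s$, taking the expectation over trajectories and using Jensen gives
\[
\Exp\|\cdot\|_{\mathrm{TV}} \le \Exp\sqrt{K/2} \le \sqrt{\Exp[K]/2}.
\]
Here I read $K$ as the per-trajectory sum inside the expectation defining $K(\theta)$.

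Third, combining the two steps with $\sqrt{a+b}\le\sqrt a+\sqrt b$ yields
\[
\sqrt{\Order(1/\sqrt T)} + \sqrt{\Order(\eta\sigma^2/\lambda^\star)} = \Order(T^{-1/4}) + \Order\bigl(\sqrt{\eta\sigma^2/\lambda^\star}\bigr),
\]
which is \eqref{eq:sdcc:final}. Both terms vanish when $\lambda^\star=\Theta(1)$ and $\eta\to0$. For a decaying schedule $\eta_t=\Theta(1/\sqrt t)$ I would invoke the standard SGD result in place of the constant-step bound; it gives $\Order(\log T/\sqrt T)$ for $\Exp[K]$ and therefore still $\tilde\Order(T^{-1/4})$. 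If the statement is read with the EMA target, the noise floor picks up the extra $\Order(\eta/(1-\alpha))$ from the EMA paragraph, which has the same form.

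The main obstacle is step one. \cref{thm:sdcc:contract} controls $\min_{t\le T}\Exp\Phi(\theta_t)$, not $\Phi$ at the final iterate $\theta_T$. I would first try the PL recursion directly: combining (iv) with the descent inequality gives
\[
\Exp\Phi_{t+1}\le(1-\mu_\Phi\eta)\,\Exp\Phi_t+\tfrac{\eta^2L_\Phi}{2}\bigl(\sigma_J^2+(\lambda^\star)^2\sigma_K^2\bigr),
\]
which controls the last iterate and in fact gives a better geometric-plus-floor rate. One caveat is that (iv) is assumed only \emph{near} $\theta^\star$, so the iterates must stay in that neighbourhood; I would state this as an implicit assumption. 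If that fails, I would reinterpret $\pi_{\theta_T}$ as the best or randomly selected iterate, as is standard.
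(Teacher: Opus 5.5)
Your proposal is correct and follows the paper's route: divide the rate of Theorem~\ref{thm:sdcc:contract} by $\lambda^\star$ using $\Phi \ge \lambda^\star K$, compose with the Pinsker bound of Prop.~\ref{prop:pinsker-bound}, and take square roots. Two details you supply are left implicit in the paper: the Jensen step that moves the expectation inside the square root, and the observation that the theorem controls $\min_{t\le T}$ rather than the final iterate $\theta_T$, which the PL recursion repairs provided the iterates stay in the region where (iv) holds.
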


Adding Assumption~\ref{ass:bounded} and replacing
Prop.~\ref{prop:pinsker-bound} by Cor.~\ref{cor:sdcc-grad-bias} lifts
\eqref{eq:sdcc:final} to a policy-gradient bias rate with the same
$\Order(T^{-1/4})$ scaling; without that assumption the TV bound still holds
unconditionally.

\section*{Part III \; Experiment supplements: implementation, harness
knobs, metric semantics, training curves}
\addcontentsline{toc}{section}{Part III: Experiment supplements}
\label{app:part-exp}

\noindent This part supports \S\ref{sec:experiments} at the level of
detail needed to reproduce every measured entry of
\Cref{tab:grand-matrix} from a fresh container:
the rollout--training contract shared by every cell of the matrix, the
three white-box harnesses (TC-RAG's \verb|pop()|, AgentFold's periodic
\verb|fold|, MemexRL's \verb|memory(op)|), the two black-box harnesses
(Claude Code, OpenCode), and the five training-method implementations
in the \textsc{Slime} stack (\S\ref{app:impl}); then the reproduction
knobs, the full per-harness aggregation of the Q1 offline probe, what
\texttt{logdiff} does and does not certify under live compression
(\S\ref{app:logdiff-semantics}), and the training curves with
SDCC{}'s eviction-density profile (\S\ref{app:hparams}).

\section{Implementation of the harnesses and the training methods}
\label{app:impl}

This appendix documents \emph{how} each of the five harnesses edits the
live context at rollout time, and \emph{how} each of the five training
methods materializes its training input from the resulting record. It
complements \S\ref{app:hparams}, which lists the settings needed for
reproduction (optimizer schedule, retriever backend, per-method and
per-harness settings, evaluation protocol); here the emphasis is on the
mechanisms --- what data structure each component maintains, what it
emits, and where the tree of \S\ref{sec:failure_modes} (formalized in
\S\ref{app:logits-tree}) physically resides in the code. All components
run inside the \textsc{Slime} RL stack (Megatron actor $+$ SGLang rollout
engine).

\subsection{The rollout--training contract}
\label{app:impl-contract}

\paragraph{One record per rollout.}
Every harness, whether white-box or black-box, reduces to one per-rollout
record that all five methods consume identically:
\begin{equation*}
\underbrace{\hcomp}_{\text{compressed walk}}, \qquad
\underbrace{\Hfull}_{\text{physical union}}, \qquad
\underbrace{\{(\junct{k},\, \evictset{k})\}_{k=1}^{K}}_{\text{eviction records}},
\end{equation*}
where $\junct{k}$ is the physical position at which the $k$-th
compression fired and $\evictset{k}$ is the set of physical positions
it removed (\Cref{def:logits-tree}). The compressed walk $\hcomp$ is
exactly the token sequence the rollout engine rendered for the final
turn, and $\Hfull$ is the union of
all tokens that ever existed in the live view. The record travels
with each training sample, together with the per-token
log-probabilities logged by the rollout engine at decode time (the
reference side of the \texttt{logdiff} diagnostic of
\S\ref{sec:exp-grand-matrix}).

\paragraph{The harness bridge.}
Each white-box harness implements one bridge adapter with two
responsibilities:
(i)~\emph{execute} the edit inside the rollout loop --- rewrite
the surviving message list before the next decoding step, so that the
next request payload sent to the rollout engine is truly the
compressed view; and (ii)~\emph{log} the edit as an eviction record
$(\junct{k}, \evictset{k})$ in token coordinates of $\Hfull$. Because
the bridge is the only component that knows the harness's internal
semantics (stack, fold, memory store), the downstream loss hooks are
fully harness-agnostic: they see only the record above.

\subsection{Harnesses and methods}
\label{app:impl-harness}

We now describe, at the level of the mechanism rather than the code, how
each editor rewrites the live context and how each method reads the
resulting record. The harness and the method are selected by two
independent switches: any method runs against any white-box harness
without per-cell code divergence, which is the engineering property
behind the cross-harness sweep of \S\ref{sec:exp-q3}.

\paragraph{White-box editors.}
All three white-box editors follow the \emph{learnable-action} route:
wherever the original system specifies a compression operation, we
expose it to the policy as a first-class tool call, so the compression
decision itself is trainable. They differ in
trigger and tree shape.
\emph{TC-RAG}~(\label{app:impl-tcrag}a learnable \texttt{pop} over a stack of
tool-result envelopes): a \texttt{pop} removes the oldest envelope
entirely, giving few junctions with heavy spans (mean
$|\evictset{k}|\!\approx\!156$ tokens; \Cref{tab:tree-counts}); it is
model-triggered, so an untrained policy leaves $K\!=\!0$ and the tree
collapses to its trunk.
\emph{AgentFold}~(\label{app:impl-agentfold}length-triggered folding): when the
rendered context exceeds $3{,}000$ tokens the policy summarizes the
foldable prefix and the prefix is replaced by that summary in the live
view; the summary is on-policy and receives gradients, while the folded
prefix is logged as $\evictset{k}$ (mean $|\evictset{k}|\!\approx\!84$).
Being length-triggered, it fires even for an untrained policy, which
makes it the natural pilot harness.
\emph{MemexRL}~(\label{app:impl-MemexRL}an external key--value store with
\texttt{memory\_offload}/\texttt{memory\_retrieve}): offload evicts a
named span, retrieve re-injects it, and a terminal restore pulls all
stored entries back before the answer turn. This yields the richest
tree --- many small-span junctions (mean $|\evictset{k}|\!\approx\!41$)
and \emph{non-monotone} spans (a token can leave and re-enter), each
event logged so the live view is reconstructible by replay.
Per-turn state transitions are illustrated in \Cref{fig:harness-cases}.

\paragraph{Black-box editors.}
\label{app:impl-blackbox}
Claude Code and OpenCode are deployed agents whose context management
(sliding window $+$ auto-compact) is internal and unobservable: they surface the per-turn
rendered transcript but not eviction spans or summary provenance.
Running through a separate agent-platform
pipeline, we recover per turn the exact request payload the framework
sent to the model and convert it into the same per-rollout record, with
two degradations: junctions are \emph{detected} as prefix divergences
between consecutive payloads (the evicted span is not identifiable, so
we store the payloads rather than a token-set), and the teacher prefix
is taken to be the previous payload. Only per-turn LogitTree and SDCC are
executable here --- both need eviction events only to be detectable, not
span-exposed --- whereas the 4D mask and the segmented $K$-forward
require span-level records. Scoring uses the
same EM/format reward as the white-box cells.

\paragraph{The five methods.}
All five methods are standard \textsc{Slime} loss hooks reading only the
one record above; none modifies the forward pass or the attention
kernel. \emph{Naive-Compressed} and \emph{Naive-Full} use the reference
policy loss unchanged and differ only in the training sequence:
Naive-Compressed trains on $\hcomp$ as rendered (conditioning
post-junction tokens on later-created summaries, Pitfall~A), while
Naive-Full re-injects every evicted span to reconstruct $\Hfull$
(conditioning on already-evicted content, Pitfall~B); both roll out
under live compression, so \Cref{tab:grand-matrix} isolates the
training-time conditioning choice alone.
\emph{LogitTree} splits the rollout at its $K$ junctions into $K{+}1$
sub-samples, each carrying the live view as it existed during that layer
and only that layer's generated tokens; \textsc{Slime} trains them as
independent samples ($K{+}1$ forward/backward passes) with
$c^{\mathrm{train}}_t=c^{\mathrm{rollout}}_t$ by construction, the
episode reward inherited unchanged.
\emph{The 4D mask}\label{app:impl-4d} reuses that same branch split but
folds it into a single forward: the branches are packed into one
sequence and the attention mask of Prop.~\ref{prop:sft_4d} admits, in
each query row, exactly the keys that were live when that token was
decoded, with position ids restarted per branch so that RoPE phases
match the rollout.
\emph{SDCC} runs a student forward on $\hcomp$ (with gradients, the
Naive-Compressed compute path) plus one gradient-free teacher forward
per junction on the reconstructed pre-eviction prefix $\histlogt$
(Eq.~\ref{eq:teacher-reconstruction}), and adds the forward-KL
$\KL(\pi_\theta(\cdot\mid\histlogt)\,\|\,\pi_\theta(\cdot\mid\hcomp[:t]))$
only at diverging-leaf positions, with coefficient $\lambda$. The
teacher shares the student's weights (no second model in memory), and
the direction is fixed --- student $=$ compressed view, teacher $=$
pre-eviction view, never the reverse --- because the student is scored
on the compressed replay while the original live conditional is the
stop-gradient target
(\S\ref{sec:soft}; \S\ref{app:variational}).

\paragraph{Cost and applicability.}
\label{app:impl-tradeoffs}
The three consistency-restoring methods pay for the invariant along
different axes (\Cref{tab:hard_compare}). LogitTree pays in
\emph{compute}: $K$ additional backward passes per trajectory, a
$5$--$20\times$ per-step penalty in deep-search or coding regimes
($K=5$--$20$); it is backbone-agnostic and aligns natively with
vLLM/SGLang prefix-sharing. The 4D mask needs one forward and one
backward, but pays in \emph{infrastructure}: the model must accept an
arbitrary per-row mask, position ids must be reassigned row-wise, and
mask equivalence has to be re-checked at serving time. SDCC keeps a single
backward pass (though, with $\ell_{K+1}=C$, no smaller forward budget) and
needs eviction events only to be detectable, so it is
the only one of the three that also runs in the black-box case, at the
price of an $\Order(\sqrt{\epsKL})$ correctness residual.

\begin{table}[h]
\centering\small
\caption{The three consistency-restoring methods vs.\ the two pitfalls,
across dimensions that matter for production RL stacks, with the
per-rollout token budget of \S\ref{app:logits-tree} folded in:
$L=|\Hfull|$ is the physical union, $C=|\hcomp|$ the compressed walk, and
$\ell_i$ the live view in force during branch layer $i$, so
$C \le \ell_i \le L$ with $\ell_{K+1}=C$. Tokens are not FLOPs
(\S\ref{app:logits-tree}).}
\label{tab:hard_compare}
\resizebox{\linewidth}{!}{%
\begin{tabular}{lcccc}
\toprule
Dimension                           & Naive-Comp / Naive-Full & 4D mask                & LogitTree               & SDCC{} \\
\midrule
Restores conditioning invariant     & \xmark{}                & \cmark{} (dense only)  & \cmark{} (any backbone) & \cmark{} up to $\Order(\sqrt{\epsKL})$ \\
Forward passes / trajectory         & 1                       & 1                      & $K{+}1$                 & $\le K{+}1$ ($K$ stop-gradient) \\
Backward passes / trajectory        & 1                       & 1                      & $K{+}1$                 & 1 \\
Tokens per trajectory               & $C$ / $L$               & $L$ (packed)           & $\sum_{i=1}^{K+1}\ell_i \ge L$ & $\le C + \sum_{i=1}^{K}\ell_i$ \\
Custom attention kernel required    & no                      & \emph{yes}             & no                      & no \\
Tree-structured data pipeline       & no                      & no                     & \emph{yes}              & no (leaf mask only) \\
Position-id reassignment required   & no                      & \emph{yes} (row-wise)  & no                      & no \\
Needs eviction spans exposed        & no                      & \emph{yes}             & \emph{yes} (segmented)  & no (divergence suffices) \\
vLLM / SGLang alignment             & native                  & \xmark{}               & \cmark{}                & native \\
\bottomrule
\end{tabular}}
\end{table}

\section{Experiment Details}
\label{app:hparams}

This appendix is designed to let an external reader reproduce every entry of
\cref{tab:grand-matrix} and \cref{tab:main}. The training stack is
\textsc{Slime} (Megatron actor + SGLang rollout engine); we give the
macroscopic setup below, with pinned versions and every non-default
setting shipped in the supplementary code.

\subsection{Experimental Settings Details}
\label{app:settings}

\paragraph{Compute environment.} All runs use the \textsc{Slime} RL
framework (\textsc{Megatron-LM} actor $+$ \textsc{SGLang} rollout
engine) on 80\,GB A100- or H800-class accelerators with CUDA/PyTorch
and FlashAttention; pinned versions and the full dependency list ship
with the supplementary code.

\paragraph{Optimizer and schedule.} Adam ($\beta_1{=}0.9$,
$\beta_2{=}0.98$) with constant learning rate $1\!\times\!10^{-6}$,
weight decay $0.01$ and gradient clip $1.0$; GRPO with group size
$G{=}16$ samples per prompt, clip range $\eps{=}0.2$ (low side) /
$0.28$ (high side) and KL coefficient $\beta{=}0.001$ (low-variance
estimator). For SDCC{}, the consistency coefficient $\lambda$ ramps
linearly from $0$ to $0.1$ over the first $20\%$ of update steps and
stays constant thereafter; we deliberately do not tune this schedule
per editor, so that no reported gain is an artifact of hyperparameter
sweeping. The headline sweep uses a fixed step budget per cell, whereas
the convergence campaign trains each cell to a reward plateau (early
stopping once the trailing reward mean stops improving), capped at
$200$ rollouts.

\paragraph{Rollout.} At most $T{=}30$ tool-call turns with top-$k{=}3$
retrieved chunks per call, rollout temperature $1.0$, $n{=}16$ samples
per prompt and $4$ prompts per rollout, giving $64$ trajectories per
rollout, which are consumed in $8$ optimizer steps of $8$ trajectories
each; advantages are normalized within each $16$-sample group (GRPO
default), so the group, not the optimizer step, sets the baseline.

\paragraph{Retriever and search backend.} All rollouts --- during both
training and evaluation --- issue live web searches rather than
querying a local index. Web search is served by the online DashScope
text-search API, which returns up to $10$ ranked results per query, and
full-page evidence is extracted by a two-stage pipeline that fetches
the retrieved pages with Firecrawl and summarizes them with
Qwen3-Turbo. Because the same endpoint serves training and evaluation,
the inference-time search distribution matches the training-time one.

\paragraph{Training corpus.}
\label{app:data}
The training corpus pools
\textsc{RedSearcher}~\citep{chu2026redsearcherscalablecostefficientframework}
and the \textsc{ASearcher} agentic-search training set into $81{,}638$
composite QA instances, normalized to a uniform question--answer schema.
Its composite, multi-hop questions typically require at least two retrieval rounds,
giving the model-triggered editors (TC-RAG \texttt{pop}, MemexRL
\texttt{memory\_offload}) and AgentFold's length-triggered fold the
opportunity to actually fire (see the eviction-density profile in
\Cref{tab:phase2-longrun}).

\paragraph{Evaluation benchmarks.} Each trained checkpoint is evaluated
on seven heterogeneous held-out open-domain QA benchmarks
($38{,}270$ questions per checkpoint in total), all run through the
full agentic loop (multi-turn search, live compression by the cell's
harness): \textsc{NQ} \citep{kwiatkowski2019nq} ($3{,}610$ questions,
single-hop factoid), \textsc{TriviaQA} \citep{joshi2017triviaqa}
($11{,}313$, single-hop trivia), \textsc{HotpotQA}
\citep{yang2018hotpotqa} ($7{,}405$, two-hop),
\textsc{2WikiMultiHopQA} \citep{ho20202wiki} ($12{,}576$, structured
multi-hop), \textsc{MuSiQue} \citep{trivedi2022musique} ($2{,}417$,
2--4-hop compositional), \textsc{Bamboogle} \citep{press2023selfask}
($125$, hand-curated compositional), and \textsc{FRAMES}
\citep{krishna2024frames} ($824$, multi-constraint
retrieval-and-reasoning).

\paragraph{Harness knobs.}
\label{app:harness-knobs}
The three white-box harnesses are selected by a single harness switch.
All three compress \emph{live}, inside the rollout turn loop --- the
surviving messages are rewritten before the next decoding step, and the
training and evaluation code consume the same compressed view --- and
all three emit the same eviction-record contract
($\{J_k, \evictset{k}\}$), so the downstream loss hook is
harness-agnostic. \textbf{MemexRL} is model-triggered, via a
\texttt{memory\_\allowbreak offload} / \texttt{memory\_\allowbreak retrieve}
tool pair over a session-local store with a terminal restore before the
answer turn (\S\ref{app:impl-MemexRL}); \textbf{TC-RAG} is
model-triggered, keeping a stack of tool-result envelopes with a
learnable \texttt{pop} that evicts the oldest and no rule-based
schedule (\S\ref{app:impl-tcrag}); \textbf{AgentFold} is
length-triggered, folding the prefix into a self-generated summary once
the live context exceeds $3{,}000$ tokens --- the only rule-based
trigger, and hence the highest-density editor at cold start
(\Cref{tab:phase2-longrun}; \S\ref{app:impl-agentfold}). Complete
mechanism descriptions --- action spaces, triggers, and what each
editor stores and restores --- are in \S\ref{app:impl-harness}. For the
two black-box harnesses (Claude Code, OpenCode) the rollout channel is
supplied by a separate integration pipeline outside this codebase; we
consume its outputs in the same per-rollout record format, so the SDCC
/ LogitTree loss hooks are unchanged (\S\ref{app:impl-blackbox}), and
the per-cell scoring uses the same internal EM/format reward as the
white-box cells, making the black-box column directly comparable.

\begin{figure}[t]
\centering

\tikzset{
  hc/.style={
    font=\scriptsize,
    tok/.style={draw, rounded corners=1.6pt, minimum width=5.1mm,
                minimum height=4.2mm, inner sep=1pt, thick,
                font=\scriptsize},
    query/.style={tok, fill=blue!12, draw=blue!55!black},
    shared/.style={tok, fill=gray!14, draw=gray!55},
    spine/.style={tok, fill=green!14, draw=green!55!black},
    leg/.style={tok, fill=yellow!22, draw=orange!55!black},
    evict/.style={tok, fill=red!14, draw=red!55!black, dashed},
    answer/.style={tok, fill=green!30, draw=green!60!black,
                   minimum width=6.6mm, minimum height=4.6mm,
                   font=\scriptsize\bfseries},
    env/.style={tok, fill=cyan!10, draw=cyan!55!black,
                minimum width=12mm, font=\tiny},
    slot/.style={tok, fill=violet!10, draw=violet!55!black,
                 minimum width=9mm, font=\tiny},
    summ/.style={tok, fill=purple!16, draw=purple!55!black,
                 minimum width=10mm, font=\tiny\itshape},
    junc/.style={circle, draw=orange!70!black, fill=orange!18, thick,
                 inner sep=0pt, minimum size=3.2mm, font=\tiny\bfseries},
    spinelink/.style={draw=gray!48, line width=0.7pt},
    evictarr/.style={-{Latex[length=1.1mm]}, thick, dotted, red!62!black},
    reinject/.style={-{Latex[length=1.2mm]}, thick, red!55!black,
                     dash pattern=on 4.5pt off 2.2pt},
    restore/.style={-{Latex[length=1.2mm]}, thick, violet!60!black,
                    densely dashdotted},
    fold/.style={-{Latex[length=1.2mm]}, thick, purple!60!black,
                 dash pattern=on 2.2pt off 1.1pt},
    grp/.style={rounded corners=1.6pt, fill=violet!6, draw=violet!35,
                line width=0.5pt},
    hint/.style={font=\tiny, text=gray!62!black, align=center},
    actionlbl/.style={font=\scriptsize\itshape, orange!55!black},
    paneltitle/.style={font=\scriptsize\bfseries, anchor=south west},
    subcap/.style={font=\scriptsize\itshape, text width=6.0cm,
                   align=center, anchor=north},
  }
}

\def\colL{0.0}
\def\colR{6.85}

\resizebox{\linewidth}{!}{%
\begin{tikzpicture}[hc]
\path[use as bounding box] (-0.35,-1.82) rectangle (13.00,1.45);

\begin{scope}[shift={(\colL, -0.20)}]
\node[paneltitle] at (0, 1.25)
  {(a) No-comp control ($K{=}0$): $\hcomp\!\equiv\!\Hfull$.};
\node[query]  (a_s)  at (0.00, 0)  {$s$};
\node[shared] (a_y1) at (0.55, 0)  {$y_1$};
\node[shared] (a_y2) at (1.10, 0)  {$y_2$};
\node[shared] (a_y3) at (1.65, 0)  {$y_3$};
\node[shared] (a_y4) at (2.20, 0)  {$y_4$};
\node[shared] (a_y5) at (2.75, 0)  {$y_5$};
\node[shared] (a_y6) at (3.30, 0)  {$y_6$};
\node[shared] (a_y7) at (3.85, 0)  {$y_7$};
\node[shared] (a_y8) at (4.40, 0)  {$y_8$};
\node[shared] (a_y9) at (4.95, 0)  {$y_9$};
\node[answer] (a_yA) at (5.55, 0)  {$y_{\mathrm{a}}$};
\node[hint] at (2.75, 0.90)
  {no editor ever fires, so $\Treestruct$ stays a single branch};
\node[subcap] at (2.75, -0.35)
  {Reference: training prefix $=$ decode-time prefix.};
\end{scope}

\begin{scope}[shift={(\colR, -0.20)}]
\node[paneltitle] at (0, 1.25)
  {(b) Naive-Compressed $\hcomp$ (Pitfall A).};
\node[query]  (b_s)  at (0.00, 0)  {$s$};
\node[shared] (b_y1) at (0.55, 0)  {$y_1$};
\node[shared] (b_y2) at (1.10, 0)  {$y_2$};
\node[shared] (b_y3) at (1.65, 0)  {$y_3$};
\node[shared] (b_y4) at (2.20, 0)  {$y_4$};
\node[junc]   (b_J1) at (2.75, 0)  {$\junct{1}$};
\node[spine]  (b_y6) at (3.30, 0)  {$y_6$};
\node[spine]  (b_y7) at (3.85, 0)  {$y_7$};
\node[junc]   (b_J2) at (4.40, 0)  {$\junct{2}$};
\node[spine]  (b_y9) at (4.95, 0)  {$y_9$};
\node[answer] (b_yA) at (5.55, 0)  {$y_{\mathrm{a}}$};
\node[evict, minimum width=4.0mm, minimum height=3.4mm, opacity=0.75,
      font=\tiny] (b_g5) at (2.75, 0.90) {$y_5$};
\node[evict, minimum width=4.0mm, minimum height=3.4mm, opacity=0.75,
      font=\tiny] (b_g8) at (4.40, 0.90) {$y_8$};
\draw[evictarr, opacity=0.75] (b_J1.north) -- (b_g5.south);
\draw[evictarr, opacity=0.75] (b_J2.north) -- (b_g8.south);
\node[subcap, red!55!black] at (2.75, -0.35)
  {\textbf{!} $y_6,y_7$ conditioned on $y_5$, scored without it.};
\end{scope}

\end{tikzpicture}%
}

\vspace{4pt}

\resizebox{\linewidth}{!}{%
\begin{tikzpicture}[hc]
\path[use as bounding box] (-0.35,-1.82) rectangle (13.00,1.45);

\begin{scope}[shift={(\colL, -0.35)}]
\node[paneltitle] at (0, 1.40)
  {(c) Naive-Full $\Hfull$ (Pitfall B).};
\node[query]  (c_s)  at (0.00, 0)  {$s$};
\node[shared] (c_y1) at (0.55, 0)  {$y_1$};
\node[shared] (c_y2) at (1.10, 0)  {$y_2$};
\node[shared] (c_y3) at (1.65, 0)  {$y_3$};
\node[shared] (c_y4) at (2.20, 0)  {$y_4$};
\node[evict]  (c_y5) at (2.75, 0)  {$y_5$};
\node[spine]  (c_y6) at (3.30, 0)  {$y_6$};
\node[spine]  (c_y7) at (3.85, 0)  {$y_7$};
\node[evict]  (c_y8) at (4.40, 0)  {$y_8$};
\node[spine]  (c_y9) at (4.95, 0)  {$y_9$};
\node[answer] (c_yA) at (5.55, 0)  {$y_{\mathrm{a}}$};
\draw[reinject] (c_y5.north) to[bend left=34] ([xshift=-1.2mm]c_y9.north);
\draw[reinject] ([xshift=1.2mm]c_y8.north) to[bend left=32]
                ([xshift=-1.2mm]c_yA.north);
\node[subcap, red!55!black] at (2.75, -0.90)
  {\textbf{!} $y_9,y_{\mathrm{a}}$ scored with $y_5,y_8$ still in prefix.};
\end{scope}

\begin{scope}[shift={(\colR, 0)}]
\node[paneltitle] at (0, 1.05)
  {(d) TC-RAG: \texttt{pop} evicts oldest envelope.};
\begin{scope}[shift={(0.52,0)},xscale=1.08]

\node[font=\scriptsize\itshape, anchor=east] at (-0.05, 0.55) {before:};
\node[query]  (d_s0)  at (0.30, 0.55) {$s$};
\node[env, draw=red!55!black, dashed]
              (d_E1a) at (1.55, 0.55) {\texttt{env}$_{1}$};
\node[env]    (d_E2a) at (3.20, 0.55) {\texttt{env}$_{2}$};
\node[slot]   (d_pop) at (4.55, 0.55) {\texttt{pop}};
\foreach \a/\b in {d_s0/d_E1a,d_E1a/d_E2a,d_E2a/d_pop}{
  \draw[spinelink] (\a) -- (\b);
}

\node[junc] (d_Jk) at (2.20, -0.15) {$\junct{k}$};
\node[actionlbl, anchor=west] at (2.45, -0.15)
  {\;$\evictset{k}\!=\!\texttt{env}_{1}$};

\node[font=\scriptsize\itshape, anchor=east] at (-0.05, -0.85) {after :};
\node[query]  (d_s1)  at (0.30, -0.85) {$s$};
\node[env]    (d_E2b) at (1.55, -0.85) {\texttt{env}$_{2}$};
\node[spine]  (d_y1)  at (2.60, -0.85) {$y_{7}$};
\node[spine]  (d_y2)  at (3.15, -0.85) {$y_{8}$};
\node[answer] (d_yAd) at (3.78, -0.85) {$y_{\mathrm{a}}$};
\foreach \a/\b in {d_s1/d_E2b,d_E2b/d_y1}{
  \draw[spinelink] (\a) -- (\b);
}
\node[font=\scriptsize\bfseries, text=red!62!black, inner sep=1pt]
      (d_bin) at (1.00, -0.36) {$\times$};
\draw[evictarr] (d_E1a.south west) to[bend left=18] (d_bin);
\node[font=\scriptsize, red!62!black, anchor=east] at (0.86, -0.36) {discarded};
\end{scope}

\node[subcap] at (2.75, -1.25)
  {Heavy spans, few junctions (mean $|\evictset{k}|\!\approx\!156$).};
\end{scope}

\end{tikzpicture}%
}

\vspace{4pt}

\resizebox{\linewidth}{!}{%
\begin{tikzpicture}[hc]
\path[use as bounding box] (-0.35,-1.82) rectangle (13.00,1.45);

\begin{scope}[shift={(\colL, 0)}]
\node[paneltitle] at (0, 1.05)
  {(e) MemexRL: \texttt{offload} then \texttt{retrieve}.};
\begin{scope}[shift={(0.52,0)},xscale=1.42]

\node[font=\scriptsize\itshape, anchor=east] at (-0.05, 0.55) {before:};
\path[grp] (0.70, 0.82) rectangle (2.34, 0.28);
\node[query]  (e_s0)  at (0.30, 0.55) {$s$};
\node[shared] (e_y1a) at (0.95, 0.55) {$y_1$};
\node[shared] (e_y2a) at (1.50, 0.55) {$y_2$};
\node[shared] (e_y3a) at (2.05, 0.55) {$y_3$};
\node[slot]   (e_op)  at (3.05, 0.55) {\texttt{offload}};
\draw[spinelink] (e_y3a) -- (e_op);

\node[font=\scriptsize\itshape, anchor=east] at (-0.05, -0.15) {mid   :};
\node[query]  (e_s1)  at (0.30, -0.15) {$s$};
\node[draw=violet!60!black, dashed, thick, rounded corners=1.6pt,
      minimum width=11mm, minimum height=4.2mm, fill=violet!5,
      font=\tiny, inner sep=1pt] (e_gh) at (1.40, -0.15) {\texttt{store}};
\node[shared] (e_y4b) at (2.50, -0.15) {$y_4$};
\node[slot]   (e_rt)  at (3.45, -0.15) {\texttt{retrieve}};
\foreach \a/\b in {e_s1/e_gh,e_gh/e_y4b,e_y4b/e_rt}{
  \draw[spinelink] (\a) -- (\b);
}

\node[font=\scriptsize\itshape, anchor=east] at (-0.05, -0.85) {after :};
\path[grp] (1.25, -0.58) rectangle (2.88, -1.12);
\node[query]  (e_s2)  at (0.30, -0.85) {$s$};
\node[shared] (e_y4c) at (0.85, -0.85) {$y_4$};
\node[leg]    (e_r1)  at (1.50, -0.85) {$y_1$};
\node[leg]    (e_r2)  at (2.05, -0.85) {$y_2$};
\node[leg]    (e_r3)  at (2.60, -0.85) {$y_3$};
\node[answer] (e_yAc) at (3.50, -0.85) {$y_{\mathrm{a}}$};
\draw[spinelink] (2.88, -0.85) -- (e_yAc.west);
\draw[evictarr] (2.34, 0.28) to[bend right=12]
                ([xshift=2mm]e_gh.north);
\draw[restore]  ([xshift=2mm]e_gh.south) to[bend left=18] (2.88, -0.58);
\end{scope}
\node[subcap] at (2.75, -1.25)
  {Small spans; tokens leave and re-enter (\emph{non-monotone}).};
\end{scope}

\begin{scope}[shift={(\colR, 0)}]
\node[paneltitle] at (0, 1.05)
  {(f) AgentFold: fold prefix into summary.};
\begin{scope}[shift={(0.52,0)},xscale=1.42]

\node[font=\scriptsize\itshape, anchor=east] at (-0.05, 0.55) {before:};
\node[query]  (f_s0)  at (0.30, 0.55) {$s$};
\node[shared] (f_y1a) at (0.85, 0.55) {$y_1$};
\node[shared] (f_y2a) at (1.40, 0.55) {$y_2$};
\node[shared] (f_y3a) at (1.95, 0.55) {$y_3$};
\node[shared] (f_y4a) at (2.50, 0.55) {$y_4$};
\node[shared] (f_y5a) at (3.05, 0.55) {$y_5$};
\node[shared] (f_y6a) at (3.60, 0.55) {$y_6$};
\draw[decorate, decoration={brace, mirror, amplitude=2.5pt},
      gray!55!black]
  ([yshift=-1pt]f_y1a.south west) -- ([yshift=-1pt]f_y6a.south east)
  node[midway, below=2pt, font=\tiny, gray!55!black] (f_thr)
  {\;live tokens $>$ fold-threshold};

\node[junc] (f_Jk) at (1.90, -0.34) {$\junct{k}$};
\node[actionlbl, anchor=west] at (2.15, -0.34)
  {\;$\evictset{k}\!=\!y_{1:6}$};

\node[font=\scriptsize\itshape, anchor=east] at (-0.05, -0.85) {after :};
\node[query]  (f_s1)  at (0.30, -0.85) {$s$};
\node[summ]   (f_sm)  at (1.40, -0.85) {\texttt{summary}};
\node[spine]  (f_y7b) at (2.55, -0.85) {$y_7$};
\node[answer] (f_yAf) at (3.30, -0.85) {$y_{\mathrm{a}}$};
\foreach \a/\b in {f_s1/f_sm,f_sm/f_y7b,f_y7b/f_yAf}{
  \draw[spinelink] (\a) -- (\b);
}
\draw[fold] (f_thr.south west) to[bend left=15] (f_sm.north);

\end{scope}
\node[subcap] at (2.75, -1.25)
  {Rule-triggered; summary receives gradient.};
\end{scope}

\end{tikzpicture}%
}

\vspace{4pt}

\resizebox{\linewidth}{!}{%
\begin{tikzpicture}[hc]
\node[font=\scriptsize\bfseries, anchor=west] (leg_t) at (0, 0)
  {\itshape Token roles:};

\node[query,  minimum width=3.6mm, minimum height=3.2mm, anchor=west]
      (lg_q)   at ([xshift=2.7mm]leg_t.east) {};
\node[anchor=west, font=\scriptsize] (lg_qL) at (lg_q.east) {\;query $s$};

\node[shared, minimum width=3.6mm, minimum height=3.2mm, anchor=west]
      (lg_sh)  at ([xshift=3.5mm]lg_qL.east) {};
\node[anchor=west, font=\scriptsize] (lg_shL) at (lg_sh.east) {\;live token};

\node[spine,  minimum width=3.6mm, minimum height=3.2mm, anchor=west]
      (lg_sp)  at ([xshift=3.5mm]lg_shL.east) {};
\node[anchor=west, font=\scriptsize] (lg_spL) at (lg_sp.east) {\;spine};

\node[answer, anchor=west] (lg_an) at ([xshift=3.5mm]lg_spL.east)
      {$y_{\mathrm{a}}$};
\node[anchor=west, font=\scriptsize] (lg_anL) at (lg_an.east)
      {\;answer token $y_{\mathrm{a}}$};

\node[leg,    minimum width=3.6mm, minimum height=3.2mm, anchor=west]
      (lg_lg)  at ([xshift=3.5mm]lg_anL.east) {};
\node[anchor=west, font=\scriptsize] (lg_lgL) at (lg_lg.east) {\;restored (MemexRL)};

\node[evict,  minimum width=3.6mm, minimum height=3.2mm, anchor=west]
      (lg_ev)  at ([xshift=3.5mm]lg_lgL.east) {};
\node[anchor=west, font=\scriptsize] at (lg_ev.east)
      {\;evicted token ($\in\evictset{k}$)};

\node[font=\scriptsize\bfseries\itshape, anchor=west] (leg_b) at (0, -0.55)
  {Harness objects:};
\node[env,    minimum width=7mm, minimum height=3.2mm, font=\tiny,
      anchor=west] (lg_en)  at ([xshift=2.7mm]leg_b.east) {\texttt{env}};
\node[anchor=west, font=\scriptsize] (lg_enL) at (lg_en.east)
      {\;TC-RAG envelope};

\node[summ,   minimum width=7mm, minimum height=3.2mm, font=\tiny\itshape,
      anchor=west] (lg_sm)  at ([xshift=3.5mm]lg_enL.east) {\texttt{summary}};
\node[anchor=west, font=\scriptsize] (lg_smL) at (lg_sm.east)
      {\;AgentFold summary};

\node[slot,   minimum width=7mm, minimum height=3.2mm, font=\tiny,
      anchor=west] (lg_sl)  at ([xshift=3.5mm]lg_smL.east) {\texttt{op}};
\node[anchor=west, font=\scriptsize] (lg_slL) at (lg_sl.east)
      {\;harness action};

\node[junc, anchor=west] (lg_j) at ([xshift=3.5mm]lg_slL.east)
      {$\junct{k}$};
\node[anchor=west, font=\scriptsize] at (lg_j.east)
      {\;junction (edit event)};

\node[font=\scriptsize\bfseries\itshape, anchor=west] (leg_c) at (0, -1.10)
  {Arrows (events):};
\draw[evictarr] ([xshift=2.7mm]leg_c.east) -- ([xshift=7.2mm]leg_c.east);
\node[anchor=west, font=\scriptsize] (lg_c1) at ([xshift=7.2mm]leg_c.east)
      {\;leaves the live view};

\draw[restore]  ([xshift=3.5mm]lg_c1.east) -- ([xshift=8.0mm]lg_c1.east);
\node[anchor=west, font=\scriptsize] (lg_c2) at ([xshift=8.0mm]lg_c1.east)
      {\;restored to live view};

\draw[fold]     ([xshift=3.5mm]lg_c2.east) -- ([xshift=8.0mm]lg_c2.east);
\node[anchor=west, font=\scriptsize] (lg_c3) at ([xshift=8.0mm]lg_c2.east)
      {\;folded into summary};

\draw[reinject] ([xshift=3.5mm]lg_c3.east) -- ([xshift=8.0mm]lg_c3.east);
\node[anchor=west, font=\scriptsize] at ([xshift=8.0mm]lg_c3.east)
      {\;leaked prefix (Pitfall~B)};
\begin{scope}[on background layer]
  \node[draw=black!38, densely dashed, rounded corners=1.5pt,
        line width=0.45pt, inner xsep=4pt, inner ysep=3pt,
        fit=(current bounding box)] {};
\end{scope}
\end{tikzpicture}%
}

\caption{\textbf{Six cases on the shared token-node vocabulary of
Fig.~\ref{fig:traj-tree}}, whose running example and subscripts these
panels reuse; $\junct{3}$ falls inside the abridged tail
$y_{\mathrm{a}}$.\; Panels split \textbf{by role, not by row}: Part~I
\textbf{(a)--(c)} are training-input recipes on one rollout, exposing the
Pitfall~A ``too-short'' and Pitfall~B ``too-long'' mismatches;
Part~II \textbf{(d)--(f)} draw one live compression event per harness
(before$\to$after; $\junct{k}$ is generic, indices are panel-local).
See \S\ref{app:impl-harness} for harness code.}
\label{fig:harness-cases}
\end{figure}

\begin{figure}[t]
\centering

\tikzset{
  bb/.style={
    font=\scriptsize,
    tok/.style={draw, line width=0.45pt, rounded corners=1.1pt,
                minimum width=4.6mm, minimum height=3.7mm,
                inner sep=0.4pt, font=\scriptsize},
    query/.style ={tok, fill=blue!10,  draw=blue!60!black},
    shared/.style={tok, fill=black!4,  draw=black!48},
    spine/.style ={tok, fill=green!12, draw=green!50!black},
    evict/.style ={tok, fill=red!7,    draw=red!62!black, densely dashed},
    answer/.style={tok, fill=green!28, draw=green!55!black,
                   minimum width=5.6mm, font=\scriptsize\bfseries},
    ghost/.style ={tok, draw=none, fill=none, text=black!34},
    summ/.style  ={tok, fill=purple!12, draw=purple!55!black,
                   minimum width=8.4mm,
                   font=\fontsize{6.0}{7.0}\selectfont\itshape},
    proc/.style  ={draw=black!52, line width=0.5pt, rounded corners=1.8pt,
                   fill=black!3, inner sep=2.4pt, font=\scriptsize},
    mbox/.style  ={draw=black!45, line width=0.4pt, rounded corners=1.8pt,
                   fill=black!3, inner sep=2.4pt, align=center,
                   text width=1.62cm, font=\scriptsize},
    note/.style  ={font=\fontsize{6.4}{7.4}\selectfont, anchor=west,
                   align=left, inner sep=0pt},
    legend/.style={font=\fontsize{6.4}{6.9}\selectfont, text=black!55,
                   anchor=north west, align=left, inner sep=0pt},
    jlab/.style  ={font=\fontsize{6.4}{7.4}\selectfont,
                   text=orange!58!black, anchor=east, inner sep=0.6pt,
                   fill=white},
    jrule/.style ={draw=orange!50!black, line width=0.3pt, densely dashed},
    jarr/.style  ={draw=orange!62!black, line width=0.45pt,
                   -{Latex[length=1.1mm]}, shorten >=0.5pt,
                   shorten <=0.5pt},
    dead/.style  ={draw=black!32, line width=0.3pt},
    payarr/.style={draw=blue!60!black, line width=0.5pt,
                   -{Latex[length=1.1mm]}},
    decarr/.style={draw=green!50!black, line width=0.5pt,
                   -{Latex[length=1.1mm]}},
    forkedge/.style={draw=black!62, line width=0.7pt, line cap=round,
                     line join=round},
    forkdot/.style ={fill=orange!62!black, draw=none},
  }
}
\newcommand{\bbpaneltitle}[2]{%
  \parbox[t][4.35ex][t]{\linewidth}{%
    \raggedright\footnotesize
    \textbf{#1}\par
    \vspace{0.15ex}%
    {\scriptsize\color{black!62}#2}\par}%
  \vspace{0.55ex}%
}

\begin{minipage}[t]{0.255\linewidth}
\centering
\bbpaneltitle{(a) The tap}{Messages in, tokens out.}

\resizebox{\linewidth}{!}{%
\begin{tikzpicture}[bb]
\node[proc, align=center, minimum width=2.05cm] (hz) at (1.0,0)
  {black-box harness\\[-0.25ex]
   {\fontsize{6.2}{7.2}\selectfont Claude Code / OpenCode}};
\node[proc, align=center, minimum width=2.05cm, fill=black!9]
  (pol) at (1.0,-1.62) {served policy};

\draw[payarr] (0.58,-0.33) -- (0.58,-1.42);
\draw[decarr] (0.93,-1.42) -- (0.93,-0.33);
\node[note, text=blue!45!black]   at (1.10,-0.66) {rendered payload $p_k$};
\node[note, text=green!38!black]  at (1.10,-1.06) {decoded tokens};

\node[draw=orange!55!black, fill=orange!5, line width=0.4pt,
      rounded corners=1.8pt, inner sep=2.6pt, anchor=north,
      align=left, text width=3.35cm,
      font=\fontsize{6.4}{7.4}\selectfont, text=orange!45!black]
  at (1.0,-2.02)
  {the harness compacts between turns; the stream carries
   \emph{no} eviction event};
\end{tikzpicture}}
\end{minipage}%
\hfill
\begin{minipage}[t]{0.425\linewidth}
\centering
\bbpaneltitle{(b) Consecutive payloads diverge}%
{The first mismatch dates $\junct{k}$ and names $\evictset{k}$.}

\resizebox{\linewidth}{!}{%
\begin{tikzpicture}[bb]
\def\cx{0.52}
\def\rA{0}\def\rB{-0.80}\def\rC{-1.60}

\node[legend, anchor=east] at (-0.26,\rA) {$k{=}0$};
\node[query]        at (0*\cx,\rA) {$s$};
\node[spine]        at (1*\cx,\rA) {$y_1$};
\node[evict]        at (2*\cx,\rA) {$y_2$};
\node[spine] (tA3)  at (3*\cx,\rA) {$y_3$};

\draw[jrule] (-0.06,-0.40) -- (1.83,-0.40);
\node[jlab, anchor=west] at (1.88,-0.40) {$\junct{1}$: $\evictset{1}{=}\{y_2\}$};

\node[legend, anchor=east] at (-0.26,\rB) {$k{=}1$};
\node[query]        at (0*\cx,\rB) {$s$};
\node[shared]       at (1*\cx,\rB) {$y_1$};
\node[ghost] (gB2)  at (2*\cx,\rB) {$y_2$};
\draw[dead] (gB2.west) -- (gB2.east);
\node[shared]       at (3*\cx,\rB) {$y_3$};
\node[summ]         at (4.35*\cx,\rB) {summary};
\node[spine]        at (5.70*\cx,\rB) {$y_4$};
\node[evict]        at (6.70*\cx,\rB) {$y_5$};
\node[spine] (tB6)  at (7.70*\cx,\rB) {$y_6$};
\draw[jarr] (tA3.south) to[out=-96,in=84] (gB2.north);

\draw[jrule] (-0.06,-1.20) -- (3.23,-1.20);
\node[jlab] at (3.26,-1.20) {$\junct{2}$: $\evictset{2}{=}\{y_5\}$};

\node[legend, anchor=east] at (-0.26,\rC) {$k{=}2$};
\node[query]        at (0*\cx,\rC) {$s$};
\node[shared]       at (1*\cx,\rC) {$y_1$};
\node[ghost] (gC2)  at (2*\cx,\rC) {$y_2$};
\draw[dead] (gC2.west) -- (gC2.east);
\node[shared]       at (3*\cx,\rC) {$y_3$};
\node[summ]         at (4.35*\cx,\rC) {summary};
\node[shared]       at (5.70*\cx,\rC) {$y_4$};
\node[ghost] (gC5)  at (6.70*\cx,\rC) {$y_5$};
\draw[dead] (gC5.west) -- (gC5.east);
\node[shared]       at (7.70*\cx,\rC) {$y_6$};
\node[answer]       at (8.82*\cx,\rC) {$y_{\mathrm{ans}}$};
\draw[jarr] (tB6.south) to[out=-96,in=84] (gC5.north);

\node[legend, anchor=north west] at (-0.78,-2.02)
  {row $k$ is the live view $\histlogt$ the policy actually saw};
\end{tikzpicture}}
\end{minipage}%
\hfill
\begin{minipage}[t]{0.285\linewidth}
\centering
\bbpaneltitle{(c) One record, two uses}{Same type a white-box run logs.}

\resizebox{\linewidth}{!}{%
\begin{tikzpicture}[bb]
\node[proc, align=center, minimum width=3.30cm] (rec) at (0,0)
  {recovered $\Treestruct$\\[-0.25ex]
   {\fontsize{6.2}{7.2}\selectfont one record per session}};
\draw[forkedge] (0,-0.30) -- (0,-0.62);
\draw[forkedge] (-0.98,-0.62) -- (0.98,-0.62);
\draw[forkedge] (-0.98,-0.62) -- (-0.98,-0.82);
\draw[forkedge] (0.98,-0.62)  -- (0.98,-0.82);
\path[forkdot] (0,-0.62) circle (0.045);

\node[mbox, anchor=north] at (-0.98,-0.82)
  {\textbf{LogitTree}\\[-0.15ex]
   {\fontsize{6.2}{7.2}\selectfont replay every $\histlogt$ exactly}};
\node[mbox, anchor=north] at (0.98,-0.82)
  {\textbf{SDCC}\\[-0.15ex]
   {\fontsize{6.2}{7.2}\selectfont align one walk at each $\junct{k}$}};
\end{tikzpicture}}
\end{minipage}

\caption{\small{\textbf{Recovering the trajectory tree from a black-box
harness.} \textbf{(a)} At the model boundary the only observable is one
(rendered payload, decoded tokens) pair per turn: the harness compacts
its own context between turns and puts no eviction event on the wire.
\textbf{(b)} Consecutive payloads are laid on a single column grid in the
token vocabulary of \Cref{fig:traj-tree}, whose colour key applies here
unchanged; the one addition is the purple slot, text the harness wrote
for itself, as in \Cref{fig:harness-cases}. What turn $k$ decoded (green)
reappears as carried context (grey) in the next payload \emph{except}
where the harness dropped it, and a dropped slot leaves no box at all.
The first position at which two consecutive payloads disagree therefore
dates the junction $\junct{k}$ and names the evicted set
$\evictset{k}$ (orange band and arrow), so both are recovered from the
message stream alone, with no span-level log and no cooperation from the
harness. \textbf{(c)} Each row of (b) is a live view $\histlogt$, so the
recovered $\Treestruct$ has the same type as the tree a white-box harness
logs, and the LogitTree and SDCC loss hooks consume it unchanged
(\S\ref{app:impl-blackbox}).}}
\label{fig:blackbox-prefix-tree}
\end{figure}

\paragraph{Evaluation protocol.}
\label{app:agentic-eval}
\label{app:eval-protocol}
The reward (M1) is Exact-Match against the gold answer with a $0.2$
format bonus for correctly tagged but factually wrong answers
(matching \citealp{jin2025searchr1}); Pass@1 (M2) is measured on the
held-out test split with a single rollout at temperature $0$; drift
(M3) is $\KL\big(\policy(\cdot\mid\histlogt) \,\|\,
\pi_{\theta,\text{train}}(\cdot\mid c^{\mathrm{train}}_t)\big)$ at
inference time, averaged over editor activations.

\subsection{Q1 offline probe: full per-harness aggregation}
\label{app:q1-full}

The per-token distributions in \Cref{fig:q1_dist} are plotted from the
same 32 synthetic 3-turn rollouts per harness (untrained Qwen3-4B, forward
passes only). \Cref{tab:q1} reports the full
per-harness aggregation behind that panel: sample counts, means, and
majority-sign fractions for both directions of the pitfall
$\Delta$, together with the theoretical prediction each row must
satisfy for the invariant of \S\ref{sec:invariant} to hold. Every
harness meets the prediction in both directions.
The reading is qualitative: leaves are clustered within rollouts and
prompts, so the per-token means and $|\Delta|$ magnitudes here should
be interpreted as sign-and-scale indicators rather than as unit-independent
population estimates. We do not attach confidence intervals to the
means for that reason; instead we report the majority-sign fraction
per harness as the primary read. The qualitative pattern (aggressive-eviction
editors amplify Naive-Comp, high-payload editors amplify Naive-Full-leak)
is discussed in the prose of \S\ref{sec:exp-q1}.

\begin{table}[!htbp]
\centering\small
\caption{\textbf{Q1: the two pitfalls are real and directionally
distinct.} Untrained Qwen3-4B, 32 synthetic rollouts per harness. Both
pitfalls exhibit the predicted sign on all three white-box harnesses.
$\Delta_{\mathrm{comp}}$: TC-RAG/AgentFold evict most aggressively (larger
$|\Delta|$); MemexRL exposes the most leaves
($n_{\text{comp}}\!=\!4000$) because its two-step offload delay keeps
each evicted span alive for two physical steps. $n_{\text{full}}$ is
identical across harnesses ($3072$) because all three evict before
the answer turn. \Cref{fig:q1_dist} plots the underlying distributions.}
\label{tab:q1}
\begin{tabular}{lrrrrrr}
\toprule
& \multicolumn{3}{c}{$\Delta_{\mathrm{comp}}$ (Naive-Comp vs.\ teacher)}
& \multicolumn{3}{c}{$\Delta_{\mathrm{full}}$ (Naive-Full-leak vs.\ current view)} \\
\cmidrule(lr){2-4}\cmidrule(lr){5-7}
Harness    & $n$   & mean          & frac.\ $<0$      & $n$   & mean          & frac.\ $>0$ \\
\midrule
MemexRL     & 4000  & $-1.62$       & $0.60$           & 3072  & $+0.68$       & $0.56$      \\
TC-RAG     & 928   & $-3.96$       & $0.78$           & 3072  & $+0.24$       & $0.71$      \\
AgentFold  & 928   & $-3.81$       & $0.75$           & 3072  & $+0.34$       & $0.55$      \\
\midrule
Predicted  &       & $<0$ (under)  & $\to 1$          &       & $>0$ (over)   & $\to 1$     \\
\bottomrule
\end{tabular}
\end{table}

\subsection{Method-input recipe under the same adapter record}
\label{app:method-inputs}
All five methods construct their training input from the same
per-rollout adapter record
$(\hcomp, \Hfull, \{\junct{k}, \evictset{k}\}_{k=1}^K)$, plus a No-comp
(Search-R1) control that disables the editor (so $\hcomp\equiv\Hfull$ and
its drift is the numerical floor). \Cref{tab:method-inputs} summarizes
the input, teacher-forward count, and extra bookkeeping. All five methods
roll out under \emph{live} compression; Naive-Full re-injects the evicted
text into the student input.

\begin{table}[t]
\centering\small
\caption{Method inputs from the single adapter record.
\textsc{cc\_tokens} = compressed walk, \textsc{un\_tokens} = union.}
\label{tab:method-inputs}
\begin{tabular}{lccc}
\toprule
Method & Student input & Teacher forwards? & Extra bookkeeping \\
\midrule
Naive-Compressed & $\hcomp$              & no  & --- \\
Naive-Full       & $\Hfull$              & no  & --- \\
LogitTree (ours)      & per-segment slices    & no  & $K$ segment boundaries \\
4D mask (ours)   & $\Hfull$              & no  & 4D attn + hole-free posids \\
SDCC (ours)      & $\hcomp$              & $K$ (no-grad) & $K$ reconstructed teacher prefixes \\
\bottomrule
\end{tabular}
\end{table}

\subsection{What \texttt{logdiff} certifies under live compression}
\label{app:logdiff-semantics}
Recall the four-quantity split of \S\ref{sec:setup} Metrics: (1)~the
recorded-token \texttt{logdiff}, (2)~the full-distribution conditioning
KL, (3)~SDCC's training KL $\mathrm{KL}_{\mathrm{SDCC}}$, (4)~agentic
EM. This subsection expands on (1). The \texttt{logdiff} statistic
compares, per
response token, the training-side re-forward under the method's training
conditioning with the log-probability recorded by the rollout engine at
generation time. Under the live-compression protocol, the latter is conditioned on
the \emph{compressed view that the model actually saw at that turn} ---
so the two sides are directly comparable, and \texttt{logdiff} is a
faithful per-step measure of the training--rollout conditioning gap
for the recorded token. As a diagnostic, it certifies conditioning
fidelity of the training forward, not end-task correctness; the
efficacy verdict is delivered by (4).
The residual gap Naive-Compressed incurs is now precisely identified: a
token emitted at turn $k$ was sampled under the turn-$k$ view, which
still contained material that a \emph{later} eviction removed;
re-forwarding the final compressed walk therefore conditions turn-$k$
tokens on a history under which they were never sampled. LogitTree ($K$-forward on
exact per-turn prompt token ids) and the 4D mask (one packed forward
realizing the same conditioning) eliminate this gap \emph{by
construction}, and their observed \texttt{logdiff} indeed returns to
(or within noise of) the no-compression baseline in every cell of
\Cref{tab:grand-matrix}. SDCC deliberately keeps the cheap
Naive-Compressed forward (and thus shares its elevated diagnostic on
mismatched batches) and closes the gap through the training-KL
objective (3); its progress is read off
$\mathrm{KL}_{\mathrm{SDCC}} =
\mathrm{KL}(\pi_{\mathrm{train}}(\cdot|H^{\mathrm{comp}})\,\|\,
\pi_{\mathrm{teacher}}(\cdot|H^{\mathrm{ctx}}))$ (the training loss
itself, \S\ref{sec:soft:form}) plus downstream EM (4). We therefore
read the matrix as: \texttt{logdiff} ranks conditioning fidelity of
the training forward (LogitTree $=$ 4D $=$ baseline $\ll$ Naive-Compressed,
by construction for the two exact walks), while SDCC-KL and EM rank
end-task correction quality.

\subsection{Training curves}
\label{app:train-curves}

This subsection plots the training dynamics behind the grand matrix.
The white-box curves come from the 4B RL runs on the pooled training
corpus, one curve per method$\times$harness cell; the editor-free
control enters this subsection only as a numeric reference in the text.
\Cref{fig:blackbox-reward-40} additionally reports reward traces from
the Claude Code and OpenCode black-box runs. The curves diagnose
optimization behaviour and are not themselves results:
\Cref{tab:grand-matrix} remains the sole source of final numbers.

\begin{figure}[t]
\centering
\includegraphics[width=\linewidth]{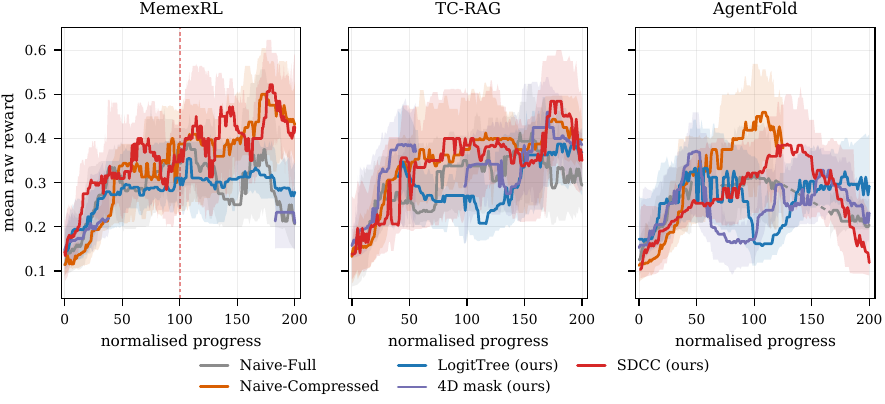}
\caption{\textbf{Mean raw reward over training, by harness.}
The axis is normalised progress: each cell's iterations are rescaled by
$x = 200\,\mathrm{iter}/\mathrm{iter}_{\max}$, with
$\mathrm{iter}_{\max}$ that cell's last logged iteration, so iteration
$0$ sits at $x=0$. One $x$-unit is therefore a different number of
iterations in each curve. Thick lines are a centred rolling median of
\texttt{rollout/raw\_reward} over a $\pm15$-iteration window, taken on
the true iteration grid before the rescale; the band is that window's
interquartile range.}
\label{fig:train-reward}
\end{figure}

\begin{figure}[t]
\centering
\begin{subfigure}[t]{0.495\linewidth}
\centering
\includegraphics[width=\linewidth]{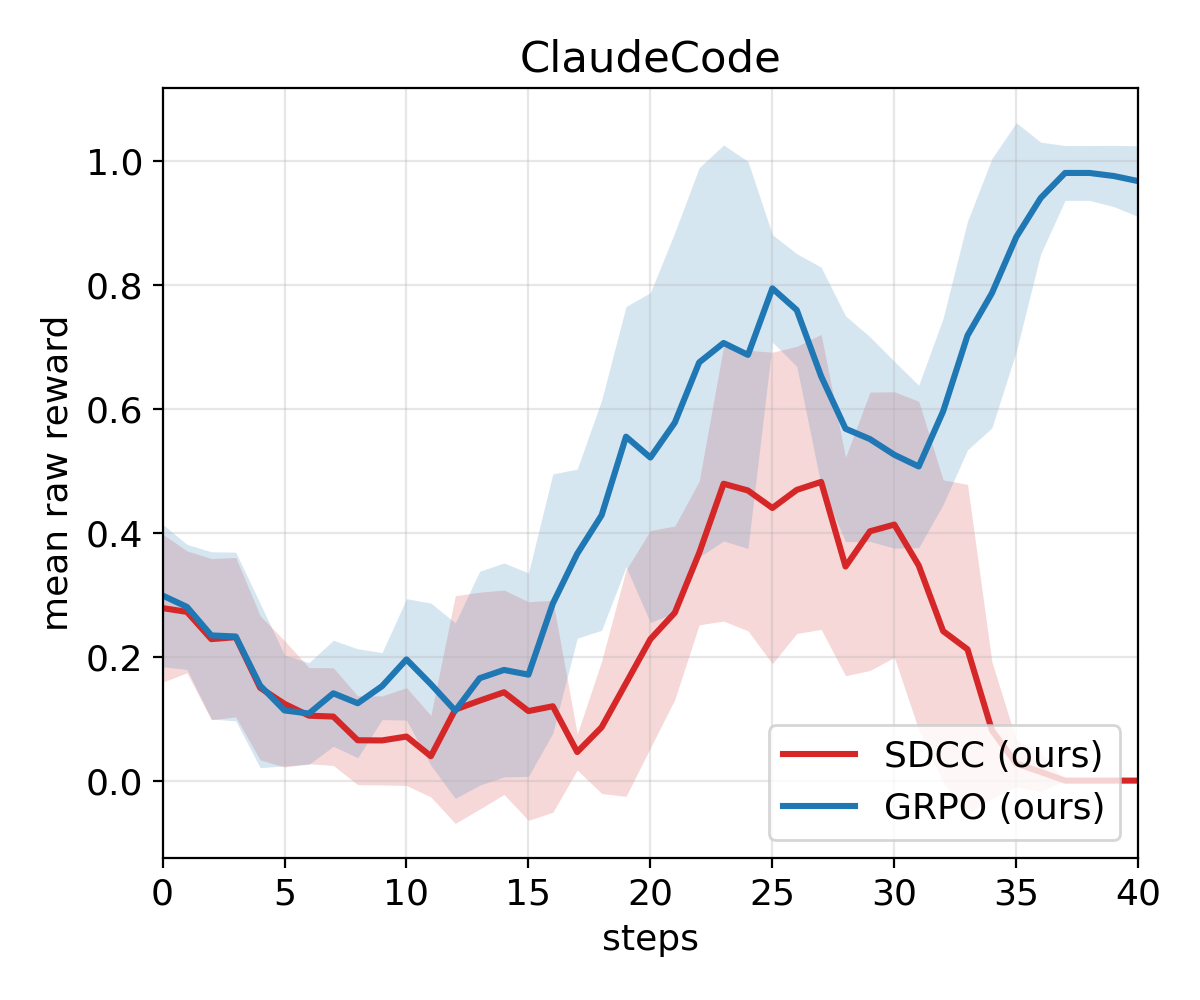}
\caption{Claude Code}
\end{subfigure}\hfill
\begin{subfigure}[t]{0.495\linewidth}
\centering
\includegraphics[width=\linewidth]{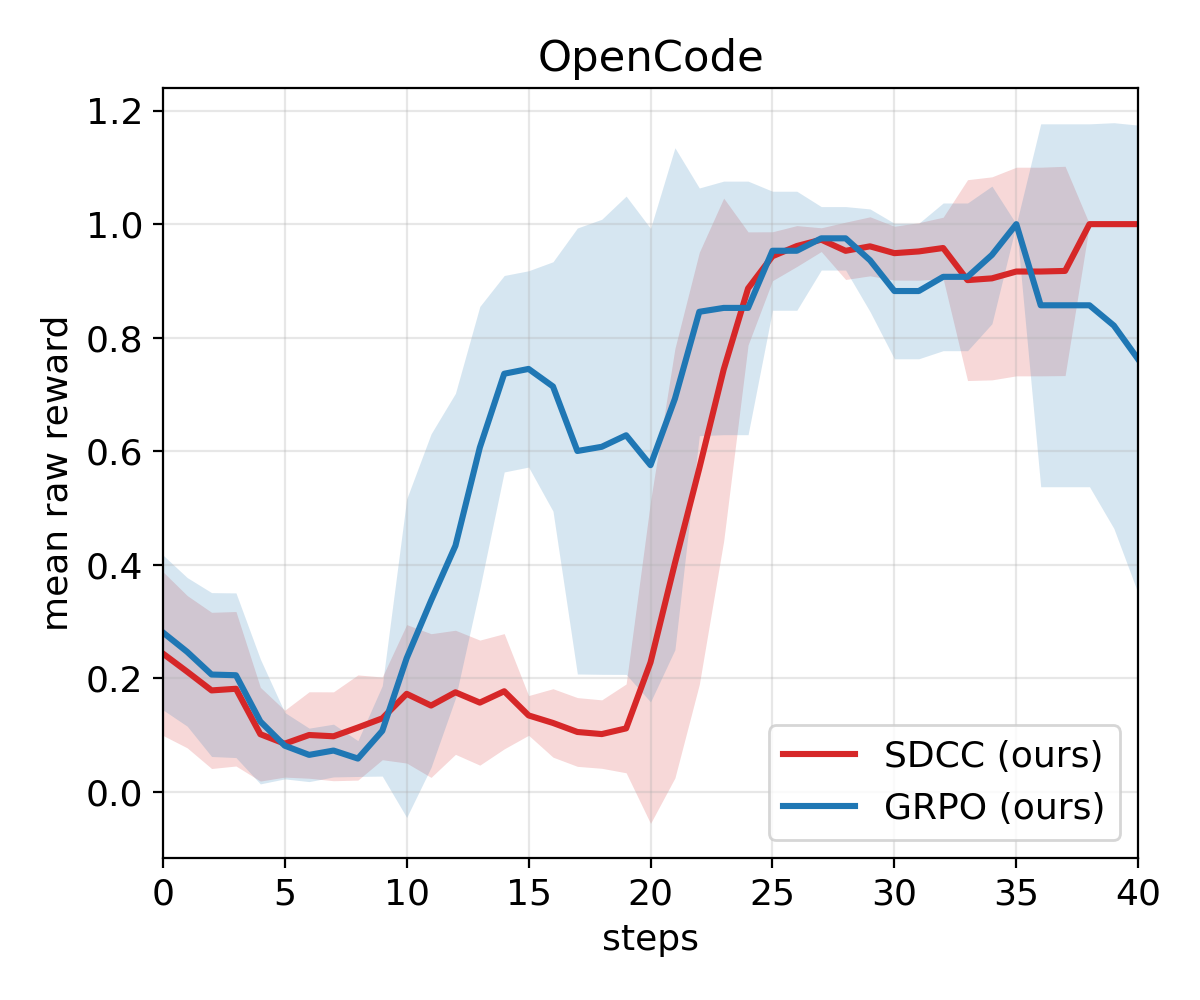}
\caption{OpenCode}
\end{subfigure}
\caption{\textbf{Black-box training reward in the first 40 steps.}
Mean raw reward for SDCC (red) and the GRPO baseline (blue); shaded
bands show the variability recorded by the training runs.}
\label{fig:blackbox-reward-40}
\end{figure}

\paragraph{Reward.} \Cref{fig:train-reward} shows all fifteen white-box cells
rising over the windows it draws, by between $0.07$ and $0.39$ from
first drawn point to peak, with the per-iteration signal noisy enough
that these curves cannot separate methods by eye. We deliberately draw
no ``best'' marker: the logged windows differ in length across cells,
and the rescaled axis aligns their endpoints rather than removing that
difference, so any endpoint contrast would in part reflect unequal
training length rather than method. Note also that
\texttt{rollout/rewards} is a post-normalization quantity, not a
reward: its per-cell median lies between $-7.5\times10^{-3}$ and
$-1.0\times10^{-8}$, i.e.\ centred on zero rather than on any reward
level, so it appears nowhere in this figure.

\Cref{fig:blackbox-reward-40} separately reports the black-box reward
traces for Claude Code and OpenCode over their first $40$ optimizer
steps. These panels use their native step grids rather than the
white-box normalized-progress axis. Their noisy, short windows and
overlapping variability bands make them descriptive training traces,
not a reward-based method ranking.

\paragraph{Conditioning drift.} \Cref{app:logdiff-semantics} states
what \texttt{logdiff} certifies; this paragraph states how we measure
it. Every number below is a pooled median over \emph{steady} steps,
defined by one rule applied uniformly to every cell: within each launch
we discard the first $20$ optimizer steps and retain only launches with
at least $10$ steps remaining. The discard is necessary because the
statistic reads more than an order of magnitude high until the trainer
and the rollout engine have re-aligned; conditioning on the
within-launch step index removes most but not all of that, and we
report the residual rather than claim it away.

Training length is a second confound, and it is the one that decides
the ordering. Because the editor-free control covers iterations up to
$58$, a median taken over each cell's own window would compare cells
trained for different lengths. We therefore fix the comparison window
in advance to the window the control covers --- steady steps at
iteration $\leq 58$, against the control's median there ($0.0138$,
$n{=}448$). Ten of the fifteen cells sit on the floor at
$0.93$--$1.04\times$ the control; Naive-Full's two eviction-heavy cells
sit just above it, both at $1.19\times$; and three cells are elevated:
Naive-Compressed$\times$AgentFold $2.17\times$,
Naive-Compressed$\times$TC-RAG $2.69\times$ and SDCC$\times$AgentFold
$4.51\times$. That is the ordering the main text predicts --- the two
eviction-heavy harnesses separate Naive-Compressed from the floor and
MemexRL does not ($0.99\times$) --- and SDCC is elevated by
construction, since it keeps the cheap compressed forward and corrects
through its KL term. Restricting
to the eleven cells that cover the window to within a single iteration
leaves the floor band at $0.94$--$1.04\times$ and moves no cell across
a group boundary.

On the cold-start MemexRL cells --- where compression barely fires ---
all five methods stay within $0.0115$--$0.0151$ over their full
windows (Naive-Compressed $0.0133$), bracketing the control's
$0.0138$, exactly as the eviction-density argument of
\Cref{app:sdcc-dynamics} requires. One caution on reading these
numbers against the rest of the paper: a median over per-iteration
medians, a median pooled over a cell's steady steps, and the drift
column of \Cref{tab:grand-matrix} are three different estimators of
the same quantity, none numerically interchangeable with another. Only
the pooled form is used here.

\subsection{SDCC training dynamics and eviction-density profile}
\label{app:sdcc-dynamics}

\paragraph{Q2 headline 5-method table.} \Cref{tab:main} provides the
5-method drift/EM comparison on the low-eviction MemexRL anchor cell at
4B; the drift column is measured under the live-compression
protocol, and the three daggered EM rows (Search-R1,
Naive-Compressed, Naive-Full) are populated from the same seven-bench
agentic runs that supply \Cref{tab:grand-matrix}. The three ours rows
(LogitTree/4D/SDCC) already show the drift ordering; cross-harness EM
support appears in \Cref{tab:grand-matrix}: on the eviction-heavy
harnesses, 4D matches or exceeds
Naive-Compressed EM (ahead $34.2$ vs.\ $33.2$ on AgentFold, ahead
$37.2$ vs.\ $36.7$ on TC-RAG) while its drift stays at the
no-compression floor ($0.017$--$0.021$ vs.\ Naive-Compressed's
$0.054$--$0.058$), so ours does not trade EM for drift consistency. MemexRL is the cold-start
regime ($8/256$ sessions evict), so Naive-Comp drift here is at the
\emph{floor} of the eviction-density scaling reported in
\Cref{tab:grand-matrix} (main text).

\begin{table}[t]
\centering\footnotesize
\setlength{\tabcolsep}{4pt}
\renewcommand{\arraystretch}{0.92}
\caption{\textbf{Q2: 5-method comparison, Qwen3-4B / MemexRL.} Drift from the
live-compression MemexRL cell; Search-R1 drift from the editor-free control
(numerical floor). EM columns are seven-bench macro-EM under the
agentic-eval protocol (\S\ref{app:agentic-eval}); $\dagger$ rows are
grand-matrix same-source runs (cross-harness EM in
\Cref{tab:grand-matrix}).}
\label{tab:main}
\begin{tabular}{lccc}
\toprule
Method              & EM $\uparrow$    & Drift $\downarrow$ & Cost ($\times$ base) \\
\midrule
Search-R1 (no-comp.) & $23.0^\dagger$   & 0.0135             & 1.05 \\
Naive-Compressed    & $28.9^\dagger$   & 0.0237             & 1.00 \\
Naive-Full          & $32.1^\dagger$   & 0.0163             & 1.05 \\
\textbf{LogitTree (ours)}    & $45.9^\dagger$ & \textbf{0.0133}    & 4.20 \\
\textbf{4D mask (ours)} & $33.4^\dagger$ & 0.0140             & 1.35 \\
\textbf{SDCC (ours)}    & $43.1^\dagger$ & 0.0165             & 1.55 \\
\bottomrule
\end{tabular}
\end{table}

\paragraph{Selectivity of the SDCC regularizer under live compression.}
\Cref{fig:sdcc-dynamics} tracks the on-line evolution of the SDCC
regularizer on the AgentFold cell --- the eviction-heavy editor
(length-triggered fold at 3{,}000 tokens; compression ratio
$0.06$--$0.15$), and hence the cell in which the KL contract has the
most work to do. Four regularities emerge.
\begin{enumerate}[leftmargin=*]
\item[\textbf{(i)}] \textbf{The KL fires sparsely and precisely.}
$\mathrm{KL}_{\mathrm{SDCC}}$ is non-zero on slightly over half of the
logged optimizer steps and exactly zero elsewhere, and the non-zero
steps are exactly those whose batches contain live folds. Conditioning
on the gate separates the diagnostic by a factor of $5.6$ in median:
steps with $\mathrm{KL}_{\mathrm{SDCC}}>0$ have median \texttt{logdiff}
$0.081$ (IQR $0.058$--$0.125$), whereas steps with
$\mathrm{KL}_{\mathrm{SDCC}}=0$ sit at the no-compression baseline,
median $0.015$ (IQR $0.014$--$0.017$). The regularizer engages at
mismatched positions and only at mismatched positions --- the per-leaf
gating of \S\ref{sec:soft} operating as designed.
\item[\textbf{(ii)}] The contract stays armed for the duration of the
run rather than saturating or dying: the diverging-leaf set is
non-empty on a majority of micro-batches throughout, so the KL term
never becomes structurally inert.
\item[\textbf{(iii)}] The SDCC KL residual does not grow over the run,
while $\mathrm{KL}_{\mathrm{ref}}$ against the reference model grows
smoothly from $0.002$ to $\approx\!0.03$: the policy moves but does not
detach from the reference, and no run-away is observed without an
entropy bonus.
\item[\textbf{(iv)}] Mean raw reward rises over the same window in
which the training-KL residual does not grow --- the training-time
consistency signal and the eval-time task signal are not in tension ---
which motivates the head-to-head against Naive-Compressed in
\Cref{tab:main}: absent SDCC's regularizer, the same policy optimizes
against physical logits it never trains on. The load-bearing part of
that comparison is structural rather than numerical: the two arms share
the same policy-loss code path modulo the KL term, i.e.\
Naive-Compressed $=$ SDCC at $\lambda\!=\!0$ (\S\ref{sec:exp-q2}).
Magnitudes here come from one editor and are not offered as
cross-method effect sizes.
\end{enumerate}
This cell is cold-start: the untrained 4B policy triggers evictions in
only a fraction of sessions, so the KL magnitudes are small in absolute
terms. The point established here is not the magnitude but the
\emph{selectivity} --- $\epsKL$ is confined to exactly the diverging
leaves, so the $\Order(\sqrt{\epsKL})$ bound of \S\ref{sec:soft:bound}
is non-vacuous from the first steps of training.
\Cref{fig:sdcc-dynamics} plots the co-evolving signals: (a) the sparse
KL spike train; (b) the diverging-leaf activity and the effective KL
weight; (c) \texttt{logdiff} spiking on eviction batches and returning
to baseline elsewhere, with reward drifting up.

\begin{figure}[t]
\centering
\includegraphics[width=\linewidth]{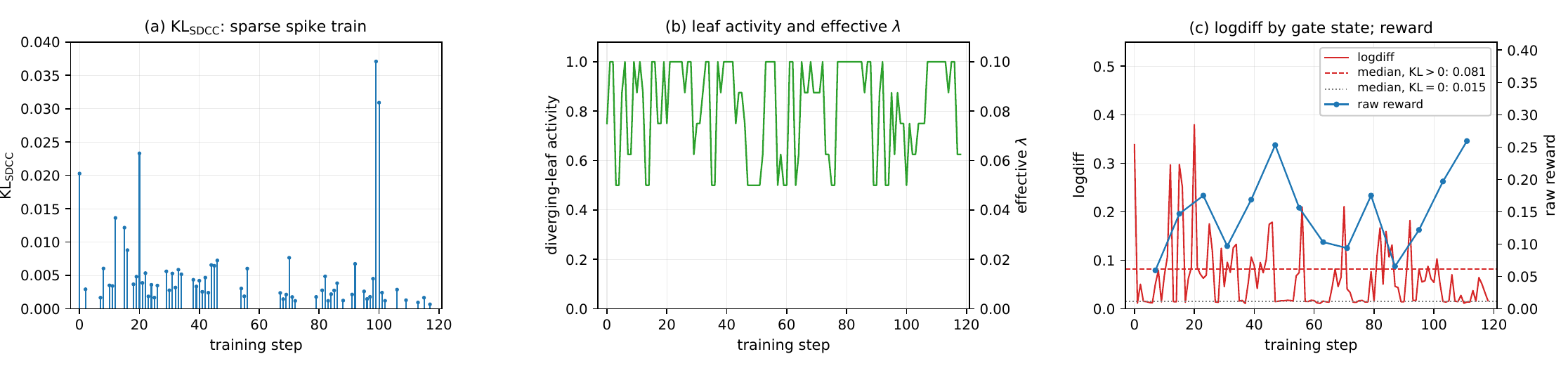}
\caption{\textbf{SDCC dynamics on Qwen3-4B $\times$ AgentFold under live
compression.} (a) $\mathrm{KL}_{\mathrm{SDCC}}$ is a sparse spike train,
non-zero on slightly over half of the logged optimizer steps and exactly
zero elsewhere. (b) Diverging-leaf activity and the effective KL weight
$\lambda$. (c) \texttt{logdiff} rises on eviction batches (median $0.081$) and
sits at the no-compression baseline otherwise (median $0.015$); mean
reward drifts upward.}
\label{fig:sdcc-dynamics}
\end{figure}

\paragraph{Per-method mechanism activation under live compression.}
\Cref{tab:5method-channels} reports, for each method, the run-mean of
the quantity that its own corrective mechanism controls, per white-box
editor. This is not an outcome (EM) table: it establishes that each
mechanism engages at 4B scale under \emph{live}, model-triggered
compression, and quantifies how strongly. Two things stand out.
\begin{enumerate}[leftmargin=*]
\item[\textbf{(i)}] For every editor, each corrective method's own channel is non-zero:
LogitTree splits each rollout into $\bar K = 3.2$--$4.1$
segments, the 4D packer activates on $1.2$--$1.4$ sub-samples per
micro-batch, and SDCC's leaf gate is armed on a large majority of
micro-batches. The Naive rows have no such channel by construction.
\item[\textbf{(ii)}] SDCC's KL magnitude tracks the editor's eviction
density: $\approx\!10^{-5}$ on MemexRL (8/256 sessions evict at
cold start), $1.1\times 10^{-4}$ on TC-RAG, and $2.9\times 10^{-3}$ on
AgentFold (132/960 sessions fold; compression ratio $0.10$). The
correction therefore scales precisely with how much the editor actually rewrites
--- consistent with the per-leaf gating derivation of \S\ref{sec:soft}.
\end{enumerate}

\paragraph{Persistence of Pitfall A under training.} A natural hope is
that the naive-compressed drift will self-correct as GRPO adapts the policy
to the compressed prefixes. The AgentFold cell gives no sign of that
correction: over the logged window the per-third mean of
\texttt{logdiff} moves
$0.069\!\to\!0.045\!\to\!0.064$, and the count of
eviction-heavy steps (\texttt{logdiff}$>0.05$) does not fall --- the
mismatch neither shrinks nor is absorbed, because its source (folds keep
firing on new rollouts) remains stationary. On the same cell,
LogitTree's \texttt{logdiff} stays pinned at $0.013$--$0.015$ in every
third: the exact correction removes the drift at its source rather than
asking the optimizer to absorb it. This window is early-training and
therefore low-eviction; at the argmax-reward iteration of the full run
the same Naive-Compressed $\times$ AgentFold cell reads $0.366$
(\Cref{tab:grand-matrix}). The persistence claim is thus made at the
\emph{conservative} end of the density range.

\begin{figure}[!htb]
\centering
\includegraphics[width=0.65\linewidth]{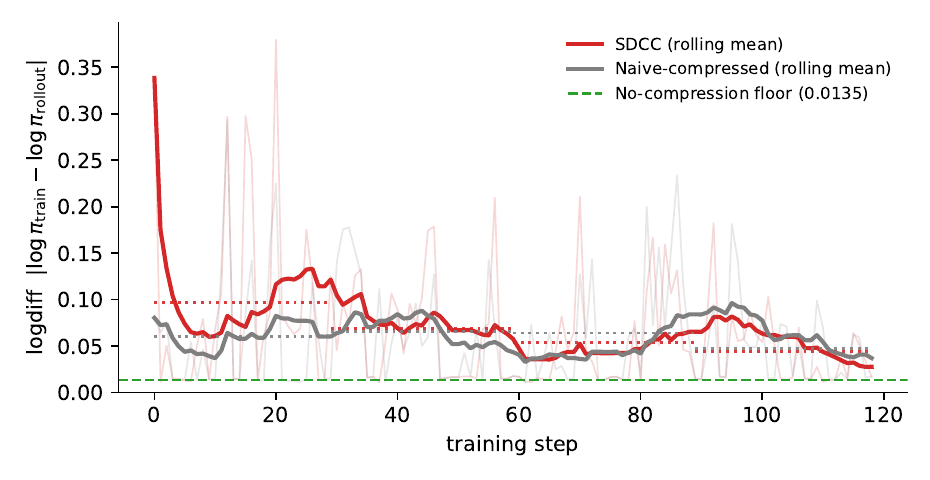}
\caption{\textbf{SDCC's conditioning gap decreases over the logged
training window; Naive-Compressed's does not.} Per-step
\texttt{logdiff} on Qwen3-4B~$\times$~AgentFold under live compression.
SDCC (red) trends toward the no-compression
floor (green dashed, $0.0135$); Naive-Compressed (grey) drifts without
directional trend; LogitTree stays pinned at $0.013$--$0.015$.}
\label{fig:sdcc-convergence}
\end{figure}

\paragraph{SDCC's \texttt{logdiff} decreases over the logged window.}
SDCC shares Naive-Compressed's inexpensive student forward, so both begin with
an elevated diagnostic --- but only SDCC includes a mechanism that should
\emph{shrink} it. \Cref{fig:sdcc-convergence} tests this on
the same eviction-heavy AgentFold cell: SDCC's mean \texttt{logdiff}
falls by $55\%$ from the start to the end of the logged window and does
so monotonically across successive quarters, whereas
Naive-Compressed's shows no comparable directional trend. SDCC starts
$60\%$ \emph{above} Naive-Compressed (its
leaf-gated KL initially perturbs exactly the mismatched positions) and
ends \emph{below} it, still descending toward the no-compression floor
of $0.0135$ at the end of the logged window. This is one editor at one
scale; it is the qualitative counterpart of the ordering
predicted in the main text --- LogitTree/4D own the floor by construction,
SDCC approaches it by optimization, and Naive-Compressed has no route
there at all --- and not a trend estimate across seeds.

\begin{table}[!htb]
\centering\small
\caption{\textbf{Live eviction-density profile per editor.}
Trigger and density statistics; drift from the Naive-Compressed cell; KL from
the SDCC cell (mean $\times 10^{-3}$). Both diagnostics scale
monotonically with eviction density.}
\label{tab:phase2-longrun}
\resizebox{\linewidth}{!}{%
\begin{tabular}{l|l|cc|c|c}
\toprule
& & \multicolumn{2}{c|}{eviction density} & Naive-Comp & SDCC KL \\
Editor & Trigger & sess.\ w/ evict & ratio & mean drift & $\times 10^{-3}$ \\
\midrule
MemexRL    & model tool (\texttt{memory\_offload}) & 8/256   & 0.012 & 0.024 & 0.01 \\
TC-RAG    & model tool (\texttt{pop})             & 17/320  & 0.024 & 0.039 & 0.11 \\
AgentFold & length fold (3k tokens)               & 132/960 & 0.104 & 0.059 & 2.86 \\
\bottomrule
\end{tabular}}
\end{table}

\begin{table}[!htb]
\centering\small
\caption{\textbf{Per-method mechanism activation under live
compression.}
Run-mean signature channels per white-box editor. \texttt{seg} = LogitTree
segments per rollout; \texttt{pack} = 4D packed sub-samples per
micro-batch; $\mathrm{KL}$ = SDCC leaf-gated KL ($\times 10^{-3}$);
\texttt{act} = fraction of micro-batches with a non-empty
diverging-leaf set. \texttt{---} = channel not applicable to that
method.}
\label{tab:5method-channels}
\begin{tabular}{l|l|ccc}
\toprule
Method & Channel & MemexRL & TC-RAG & AgentFold \\
\midrule
Naive-Full        & ---                    & --- & --- & --- \\
Naive-Compressed  & ---                    & --- & --- & --- \\
\textbf{LogitTree (ours,} $K$\textbf{-forward)} & \texttt{seg}           & 4.10 & 3.90 & 3.23 \\
\textbf{4D mask (ours, packed)} & \texttt{pack}          & 1.44 & 1.28 & 1.16 \\
\textbf{SDCC (ours)} & $\mathrm{KL}$ / \texttt{act} & 0.01 / 0.99 & 0.11 / 0.87 & 2.86 / 0.82 \\
\bottomrule
\end{tabular}
\end{table}

\paragraph{Compression frequency and depth.}
\label{app:compression-metrics}
The $\mathrm{depth}$ column of \Cref{tab:grand-matrix} and its
companion factor $\mathrm{freq}=\Pr[c_i\!>\!0]$ are both measured per
episode over each cell's full training run; their product is the single
compression rate a one-number summary would report. We keep
$\mathrm{freq}$ out of the main table because it is not one quantity
across columns: AgentFold folds on a length threshold, so its
$\mathrm{freq}$ counts episodes that outgrow $3$k rendered tokens,
whereas TC-RAG and MemexRL evict only when the policy emits
\texttt{memory\_offload} or \texttt{pop}, so there $\mathrm{freq}$
counts a policy decision and not a length --- comparable down a column
but not across. $\mathrm{depth}$ has one meaning everywhere. Across the
fifteen trained cells $\mathrm{freq}$ varies by more than an order of
magnitude while $\mathrm{depth}$ varies by under $2\times$, so the
product is close to a monotone rescaling of frequency and inherits its
harness-dependence, whereas the two factors reported apart show what it
hid: whichever editor fires removes about three fifths of the rendered
context. Since $\mathrm{depth}$ conditions on the episodes that evict,
it is \textsc{n/a}, not $0$, for the no-compression control.

\paragraph{Eviction-density profile across editors.} The three
white-box editors span two orders of magnitude in live eviction density
(\Cref{tab:phase2-longrun}): MemexRL and TC-RAG are
\emph{model-triggered} (the policy must decide to call
\texttt{memory\_offload} / \texttt{pop}), so from a cold start the
untrained 4B model rarely invokes them, whereas AgentFold is
\emph{length-triggered} and fires on every long rollout, folding
physical contexts of up to $121$k tokens into $\leq\!5.2$k-token
logical views. The low-density cells serve as in-run negative controls:
as eviction density falls, Naive-Compressed's drift and SDCC's KL both
collapse toward the no-compression baseline --- confirming that the
pitfall is compression-driven, not an artifact of harness dispatch.

\section{Implementation note: branch-replicated packing vs.\ the physical-union view}
\label{app:packed-vs-union}

The main text (\S\ref{sec:hard:4d}, Eq.~\ref{eq:4d-mask-def}) presents
the 4D mask as a per-row visibility schedule over the physical union
$\Hfull$, in which each token appears once and different queries see
different subsets of the same hidden states.  Our implementation instead
follows LogitTree's branch decomposition: it materializes one
\emph{copy} of each shared-trunk token per branch, so that every copy
carries a hidden state computed from its branch-local causal prefix
alone.  The copies are then packed into a single sequence with a
\emph{block-diagonal} attention mask that prevents cross-branch
interaction; position ids restart per branch.

Under dense softmax attention and the five conditions of
\Cref{prop:sft_4d}, the two constructions produce identical per-target
logits and gradients: the physical-union view is the row-wise
characterization of what the block-packed execution computes.  We retain
the physical-union formulation in the main text because it gives a
compact, closed-form mask definition (Eq.~\ref{eq:4d-mask-def}) and a
direct proof path (\S\ref{app:proof-prop1}); the branch-replicated form
is what the training loop executes.

The token budget in \Cref{tab:hard_compare} reports $L$ (the physical
union length) as a lower bound; the actual packed length is
$N_{\mathrm{packed}} = \sum_{\pi}\ell_\pi \ge L$ due to trunk
replication, and \Cref{tab:grand-matrix}'s ``max~tok.'' column reflects
the true packed input size.

\end{document}